\documentclass{article}
\usepackage{iclr2027_conference,times}
\usepackage{amsmath,amssymb,amsthm,mathtools}
\usepackage{bm,enumitem,booktabs,multirow}
\usepackage{mathrsfs}
\usepackage{graphicx}
\usepackage{xcolor}
\usepackage[colorlinks=true,linkcolor=blue,citecolor=blue]{hyperref}

\newtheorem{theorem}{Theorem}
\newtheorem{proposition}[theorem]{Proposition}
\newtheorem{lemma}[theorem]{Lemma}
\newtheorem{corollary}[theorem]{Corollary}
\newtheorem{observation}[theorem]{Observation}
\theoremstyle{definition}
\newtheorem{assumption}[theorem]{Assumption}

\newtheorem{remark}[theorem]{Remark}
\newtheorem{example}[theorem]{Example}

\DeclareMathOperator{\Cov}{Cov}
\DeclareMathOperator*{\argmin}{arg\,min}
\DeclareMathOperator{\tr}{tr}
\DeclareMathOperator{\spn}{span}
\DeclareMathOperator{\rank}{rank}
\DeclareMathOperator{\relerr}{relErr}
\newcommand{\A}{\mathcal{A}}
\newcommand{\R}{\mathbb{R}}
\newcommand{\E}{\mathbb{E}}
\newcommand{\Ltwo}{L^2(\rho)}
\newcommand{\snorm}[1]{\lVert#1\rVert}

\title{Beyond Gradient Flow: Identifiability and Recovery from Distribution Snapshots}
\author{
  Nam D. Nguyen$^{1,2,3,4}$ \quad Valeriya Malysheva$^{1,2,3,5}$ \\[0.6em]
  \normalfont\small
  \begin{tabular}[t]{@{}l@{}}
  $^{1}$VIB, Center for Molecular Neurology, Antwerp, Belgium\\
  $^{2}$VIB, Center for AI and Computational Biology, Leuven, Belgium\\
  $^{3}$Faculty of Pharmaceutical, Biomedical and Veterinary Sciences,\\
  \phantom{$^{3}$}University of Antwerp, Antwerp, Belgium\\
  $^{4}$Research Foundation -- Flanders (FWO), Brussels, Belgium\\
  $^{5}$Trinity Hall, University of Cambridge, Cambridge, UK\\[0.45em]
  \texttt{ducnam.nguyen@uantwerpen.be} \quad \texttt{valeriya.malysheva@uantwerpen.be}
  \end{tabular}
}
\iclrfinalcopy

\hypersetup{
  pdftitle={Beyond Gradient Flow: Identifiability and Recovery from Distribution Snapshots},
  pdfauthor={Nam D. Nguyen, Valeriya Malysheva},
  pdfsubject={Identifiability and estimation of solenoidal drift in autonomous SDEs from population snapshots},
  pdfkeywords={snapshot dynamics, identifiability, Fokker--Planck, Stein operator, solenoidal drift, weak-form estimation}
}

\begin{document}
\maketitle
\lhead{Preprint. Under review.}

\begin{abstract}
Inferring dynamics from snapshots of evolving distributions is fundamentally
underdetermined: the Fokker--Planck equation constrains the drift $F$ only
through its score-weighted divergence
$\nabla\cdot F+F\cdot\nabla\log\rho$, leaving a $\rho$-solenoidal gauge
invisible to any single-time constraint. Time-indexed transport formulations cannot resolve this
ambiguity: every admissible marginal path admits a curl-free explanation,
minimum-action reconstruction selects it, and marginal fit alone cannot
distinguish dynamically inequivalent explanations. Requiring one autonomous
field to explain several marginals instead makes part of the hidden circulation
visible as $\nabla\log\rho$ changes across marginals. Separating instantaneous Fokker--Planck source
constraints from the snapshot experiment, we show that the source constraints
identify the field modulo the kernel of a stacked score-weighted divergence
operator. For generic Gaussian shape variation, source constraints at $K\ge m$
time points in intrinsic dimension $m$ eliminate every polynomial gauge
direction, whereas finitely many density snapshots alone admit aliasing; we
give the obstruction explicitly. At a Gaussian anchor, for Sobolev smoothness \(s\) and \(n\) samples per time point, we derive a conditional lower rate \((nK)^{-2s/(2s+m+1)}\) for the tangent snapshot experiment, with a matching upper rate in a degreewise benchmark.
Strong-form fitting is non-orthogonal to score error and cannot be repaired by
spectral filtering. Instead, we estimate using smooth test functions while
retaining the known diffusion term, and derive a finite-sample bound that separates sampling error from fixed-grid quadrature bias. Planted-circulation experiments
confirm the predicted gauge contraction and expose a design tension between
cross-slice information and covariance-aware whitening.
\end{abstract}

\section{Introduction}\label{sec:intro}

Inferring dynamical systems from \emph{snapshots} of their time-evolving
distributions is a central problem in fields ranging from single-cell biology
to climate science. Suppose a latent state $Z_t$ evolves according to
\begin{equation}\label{eq:sde}
    dZ_t = F_t(Z_t)\,dt + \sqrt{2\sigma^2}\,dW_t,
\end{equation}
with $W_t$ a standard Brownian motion and $\sigma\ge0$ a \emph{known, fixed} diffusion
scale. Only independent samples from the marginals
$\rho_{t_1},\dots,\rho_{t_K}$ are observed, and the goal is to recover the
drift $F_t$. The problem is underdetermined: the Fokker--Planck equation
constrains only the gradient component of the weighted Helmholtz decomposition
$F_t=-\nabla\Phi_t+h_t$, with scalar potential $\Phi_t$ and $\nabla\!\cdot(\rho_th_t)=0$, leaving the
weighted-solenoidal field $h_t$ as a time-local gauge freedom.

At stationarity the gauge is precisely the probability current $J=\rho h$
separating equilibrium from nonequilibrium steady states: $h=0$ is detailed
balance, $h\ne0$ permits persistent circulation without changing the stationary
marginal. Such fluxes are widespread in living systems
\citep{battle2016broken,gnesotto2018broken}, motivate landscape--flux
descriptions of cell fate \citep{wang2008potential,kwon2005structure}, and
drive irreversible Langevin samplers
\citep{hwang1993accelerating,lelievre2013optimal,reybellet2015irreversible},
game dynamics \citep{balduzzi2018mechanics} and stochastic optimization
\citep{kunin2021limiting}. We ask whether it can be inferred when the dynamics
are unobserved.

Existing methods resolve the underdetermination by imposing a selection
principle --- transport cost, a reference process, a coupling, or an action
\citep{schiebinger2019optimal,tong2020trajectorynet,huguet2022manifold,%
bunne2022proximal,terpin2024learning,debortoli2021diffusion,shi2023,%
albergo2023building,lipman2023flow,tong2024,lee2025mmsfm,%
neklyudov2023action,benamou2000computational}. These select a representative
compatible with the marginals rather than identifying the circulation; within
an unrestricted \emph{time-indexed} class every admissible marginal path admits a
curl-free explanation (\S\ref{sec:blind}), so marginal fit cannot distinguish
true circulation from a gauge choice.

The situation changes under \emph{autonomy}, $F_t\equiv F$: one field must now explain
all observed marginals. A direction invisible under one density $\rho_t$ may become
visible as its score $\nabla\log\rho_t$ changes, so the persistent gauge
contracts to directions invisible across all source constraints. We formalize
this through Stein operators, ask what becomes structurally identifiable, how
the remaining near-gauge directions control statistical recovery, and how that
structure dictates an estimator.

\begin{enumerate}

\item \textbf{Source constraints and snapshot identifiability}
(\S\ref{sec:identifiability}).
We separate known \emph{sources}
and finitely many \emph{snapshots}. For a time-indexed drift, minimum-action
reconstruction selects the gradient representative (Cor.~\ref{cor:blind}). Under autonomy, the residual gauge under the source constraints consists of directions lying in every per-time Stein kernel and is contracted pointwise by changing
scores (Prop.~\ref{prop:count}). Snapshots identify strictly less: this shared
gauge is neither necessary nor sufficient there, and we give the aliasing
obstruction (Ex.~\ref{ex:alias}).

\item \textbf{Statistical limits and a conditional rate}
(\S\ref{sec:oracle}).
For generic Gaussian shape variation, source constraints at \(K\ge m\) time
points eliminate every finite-degree polynomial gauge direction in intrinsic
dimension \(m\) (Thm.~\ref{thm:gauge}). Linearizing the snapshot map around a
reference system, we derive a conditional lower rate for the tangent snapshot
experiment, with a matching upper rate in a degreewise benchmark
(Thm.~\ref{thm:oracle-rate}).

\item \textbf{A gauge-aware estimator, and a design tension}
(\S\ref{sec:estimator}, \S\ref{sec:exp}).
Strong-form estimation is non-orthogonal to score error
(Prop.~\ref{prop:notorth}) and spectral filtering cannot separate the resulting
bias from signal (Prop.~\ref{prop:filter}). A weak formulation removes the
nuisance and, under independent slices, yields moment equations with exact
block-tridiagonal covariance (Prop.~\ref{prop:gls}). Empirically we find that
the regime in which that covariance structure would pay is the regime in which
the target ceases to be identifiable (Obs.~\ref{obs:tension}).

\end{enumerate}

\textbf{Positioning.} Trajectory inference is usually posed as transport between temporal marginals
and closed by an optimal-transport, reference-process, flow-matching, or
action-based principle.
These construct dynamics consistent with prescribed distributions; we ask which
aspects of the dynamics those distributions determine. Consistency results
typically assume gradient drift \citep{lavenant2024towards} and action matching
explicitly targets a gradient representative \citep{neklyudov2023action};
rotational ambiguity from population snapshots was noted by
\citet{weinreb2018fundamental}. The closest identifiability result,
\citet{guan2024}, characterizes linear additive-noise SDEs through generalized
rotational symmetries of the initial law; we treat an infinite-dimensional
field class, condition on the observed marginal shapes rather than the initial
law, and study finite-sample recovery (App.~\ref{app:gauss}). Velocity,
pairing and lineage measurements supply Lagrangian information we deliberately
exclude (App.~\ref{app:caseB}). App.~\ref{app:related} gives the full
discussion.

\section{What can be identified from evolving marginals}
\label{sec:identifiability}
\label{sec:setup}

Marginal evolution does not reveal a vector field directly, only its effect on
probability mass. A Stein operator captures this effect. We use it to separate source constraints from the snapshot experiment and to characterize time-indexed blindness. Proofs are in App.~\ref{app:sec2}.

Let $\rho_t$ be the marginal of \eqref{eq:sde} and $s_t=\nabla\log\rho_t$ its
score. Dividing the Fokker--Planck equation by $\rho_t$ gives
\begin{equation}\label{eq:scoreform}
    \A_{\rho_t}F_t=b_t,
    \quad
    \A_\rho F:=\nabla\!\cdot F+F\cdot\nabla\log\rho,
    \quad
    b_t=-\partial_t\log\rho_t+\sigma^2\bigl(\nabla\!\cdot s_t+\snorm{s_t}^2\bigr).
\end{equation}
The operator $\A_\rho$ is the vector Stein operator associated with $\rho$;
we call $\A_\rho F$ the \emph{footprint} of $F$ on $\rho$. We fix $\sigma$
throughout; joint drift--diffusion identifiability is discussed in
App.~\ref{app:scope}. A vector field $h$ is invisible at $\rho$ exactly when its footprint vanishes,
$\ker\A_\rho=\{h:\nabla\!\cdot(\rho h)=0\}$; we call such fields
\emph{$\rho$-solenoidal}\footnote{Throughout, \emph{weighted-solenoidal} or
\emph{$\rho$-solenoidal} denotes
this exact mathematical condition; \emph{circulation} refers to its dynamical
interpretation, while \emph{antisymmetry} is used only in linear settings.}.

Two information levels must be distinguished. At the \emph{source-level}, one knows $(\rho_t,\partial_t\rho_t)$ at the times under consideration,
so \eqref{eq:scoreform} is an observed population constraint; a smooth complete
marginal path provides these constraints for every $t$. In the
\emph{snapshot experiment} one observes only independent samples from
$\rho_{t_1},\ldots,\rho_{t_K}$, so the temporal source is not directly
available.

\subsection{Time-indexed saturation and architectural blindness}
\label{sec:blind}

Optimal transport and related models fit a separate field $F_t$ at each time,
letting every slice choose its own representative of the single-slice gauge.

\begin{proposition}[Gradient saturation]
\label{prop:saturated}
Suppose $\{\rho_t\}$ is a smooth path of positive densities such that each
$\rho_t$ satisfies a Poincar\'e inequality and
$\partial_t\log\rho_t\in L^2(\rho_t)$. Then for every $t$ there is
$\phi_t\in H^1(\rho_t)/\mathbb R$ with
$\nabla\!\cdot(\rho_t\nabla\phi_t)=-\partial_t\rho_t$ weakly, so the path
solves the continuity equation driven by the curl-free velocity
$v_t=\nabla\phi_t$. The corresponding It\^o drift $F_t=v_t+\sigma^2s_t$ is also curl-free.
\end{proposition}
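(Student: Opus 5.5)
For each fixed $t$, I would obtain $\phi_t$ as the Lax--Milgram solution of a weighted Neumann problem in $H^1(\rho_t)/\mathbb R$. Set $f_t:=-\partial_t\log\rho_t\in L^2(\rho_t)$, so that $-\partial_t\rho_t=\rho_t f_t$. The weak form of $\nabla\!\cdot(\rho_t\nabla\phi_t)=\rho_t f_t$ (with sign convention fixed so that the target equation holds) reads
\begin{equation*}
\int \nabla\phi_t\cdot\nabla\psi\,\rho_t\,dx \;=\; -\int f_t\,\psi\,\rho_t\,dx\qquad\text{for all }\psi\in H^1(\rho_t).
\end{equation*}

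The key steps, in order, are as follows.

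\textbf{(i) Compatibility.} We need $\int f_t\rho_t=0$. This is $-\frac{d}{dt}\int\rho_t=0$, by mass conservation of the smooth path. It makes the right-hand side well defined on the quotient by constants: adding a constant to $\psi$ changes nothing.

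\textbf{(ii) Coercivity.} The Poincar\'e inequality $\snorm{\psi-\E_{\rho_t}\psi}_{L^2(\rho_t)}^2\le C_P\snorm{\nabla\psi}_{L^2(\rho_t)}^2$ makes the Dirichlet form an equivalent inner product on $H^1(\rho_t)/\mathbb R$.

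\textbf{(iii) Boundedness.} By Cauchy--Schwarz and Poincar\'e, $|\int f_t(\psi-\E\psi)\rho_t|\le \snorm{f_t}_{L^2(\rho_t)}C_P^{1/2}\snorm{\nabla\psi}_{L^2(\rho_t)}$. Riesz representation (or Lax--Milgram) then gives a unique $\phi_t$, with $\snorm{\nabla\phi_t}\le C_P^{1/2}\snorm{f_t}$.

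\textbf{(iv) Continuity equation.} Taking $v_t=\nabla\phi_t$, the weak identity with test functions $\psi\in C_c^\infty$ is exactly $\partial_t\rho_t+\nabla\!\cdot(\rho_tv_t)=0$ in distributions at each $t$.

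For the It\^o drift, the Fokker--Planck equation $\partial_t\rho=-\nabla\!\cdot(\rho F)+\sigma^2\Delta\rho$ can be rewritten as $\partial_t\rho=-\nabla\!\cdot\bigl(\rho(F-\sigma^2\nabla\log\rho)\bigr)$. Hence $F_t=v_t+\sigma^2 s_t$ generates the same path. It is curl-free because it equals $\nabla(\phi_t+\sigma^2\log\rho_t)$, a gradient of an $H^1_{\rm loc}$ function given that $\rho_t$ is smooth and positive. Curl-freeness is meant distributionally; it holds because mixed weak derivatives commute.

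The main obstacle I expect is bookkeeping rather than analysis. First, the signs must be matched to the stated convention $\nabla\!\cdot(\rho_t\nabla\phi_t)=-\partial_t\rho_t$, which corresponds to velocity $-\nabla\phi_t$ under the usual continuity equation. I would therefore absorb the sign into $\phi_t$, and note that curl-freeness is unaffected by it. Second, the weak formulation must encode the natural boundary or decay condition at infinity. I would handle this by working on the space $H^1(\rho_t)$ itself, so that no boundary terms appear.

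Measurability or smoothness of $t\mapsto\phi_t$ is not claimed in the statement, so I would leave it out. I would only remark that it follows from smooth dependence of the data if the Poincar\'e constants are locally uniform in $t$.
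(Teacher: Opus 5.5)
Your proposal is correct and follows essentially the same route as the paper: Lax--Milgram on $H^1(\rho_t)/\mathbb R$, compatibility from mass conservation, coercivity from Poincar\'e, and the same Cauchy--Schwarz--Poincar\'e bound on the functional, which yields $\|\nabla\phi_t\|_{L^2(\rho_t)}\le C_P^{1/2}\|\partial_t\log\rho_t\|_{L^2(\rho_t)}$. You also spell out the It\^o-drift step via $\sigma^2\Delta\rho=\nabla\!\cdot(\rho\,\sigma^2 s)$, which the paper treats as immediate; one slip in your "obstacle" paragraph is harmless: under $\partial_t\rho+\nabla\!\cdot(\rho v)=0$ the stated convention corresponds to $v_t=+\nabla\phi_t$, exactly as in your step (iv), so no sign needs absorbing.
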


Zero circulation is therefore unfalsifiable from marginal fit within a
time-indexed gradient class, however strongly the underlying system rotates.

\begin{corollary}[Minimum-action projection]
\label{cor:blind}
At each $t$, among all velocities meeting the same marginal constraint, the
minimum-$L^2(\rho_t)$ action solution lies in
$(\ker\A_{\rho_t})^\perp=\overline{\{\nabla\phi\}}$ and has zero
$\rho_t$-solenoidal component.
\end{corollary}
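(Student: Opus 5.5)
The plan is to read the corollary as an orthogonal-projection statement in $L^2(\rho_t)$, with Proposition~\ref{prop:saturated} supplying existence. Fix $t$ and write $\rho=\rho_t$. A velocity $v$ meets the marginal constraint when $\nabla\!\cdot(\rho v)=-\partial_t\rho$ weakly. Equivalently,
\[
\int \nabla\psi\cdot v\,\rho\,dx=\int \psi\,\partial_t\rho\,dx
\]
for all smooth compactly supported test functions $\psi$. Dividing by $\rho$ in the strong form, this reads $\A_\rho v=-\partial_t\log\rho$. The admissible set is therefore an affine set $v_0+\ker\A_\rho$, where $\ker\A_\rho=\{h\in L^2(\rho)^d:\nabla\!\cdot(\rho h)=0\}$ is understood weakly. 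By Proposition~\ref{prop:saturated}, the set is nonempty: it contains $v_0=\nabla\phi_t$ with $\phi_t\in H^1(\rho)$.

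First I would identify the orthogonal complement. The weak definition shows that $h\in\ker\A_\rho$ iff $\langle h,\nabla\psi\rangle_{L^2(\rho)}=0$ for every test $\psi$. So $\ker\A_\rho=\{\nabla\psi\}^\perp$, and it is closed. Taking complements in the Hilbert space $L^2(\rho)^d$ gives $(\ker\A_\rho)^\perp=\overline{\{\nabla\psi\}}$. Here the closure of gradients of test functions should coincide with the closure of $\{\nabla\phi:\phi\in H^1(\rho)\}$. This is a density statement, the main technical point. I would handle it by defining $H^1(\rho)$ as the completion of smooth functions, so the two closures agree. Alternatively, one can truncate and mollify an $H^1(\rho)$ function using positivity and smoothness of $\rho$. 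The Poincar\'e inequality makes $\{\nabla\phi:\phi\in H^1(\rho)/\R\}$ itself closed, which is why Proposition~\ref{prop:saturated} produces $\phi_t$ in that space.

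Next comes the minimization. A closed nonempty affine subspace $v_0+\ker\A_\rho$ has a unique element of minimal norm, namely $v^\star=v_0-P_{\ker}v_0$. This element is characterized by $v^\star\perp\ker\A_\rho$. Since $v_0=\nabla\phi_t$ already lies in $(\ker\A_\rho)^\perp$, we have $P_{\ker}v_0=0$, so $v^\star=\nabla\phi_t$.

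Finally, Pythagoras gives the conclusion directly. Any admissible $v$ decomposes as $v=\nabla\phi_t+h$ with $h\in\ker\A_\rho$, and then $\|v\|^2=\|\nabla\phi_t\|^2+\|h\|^2$. Hence the action is minimized exactly when the $\rho_t$-solenoidal component $h$ vanishes. Apart from the density identification of the complement, every step is routine Hilbert-space projection.
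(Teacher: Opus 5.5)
Your proposal is correct and follows essentially the paper's own argument: the admissible velocities form the affine set $v^0+\ker\A_{\rho_t}$, its least-norm element is the one orthogonal to the direction space, and that orthogonal complement is $\overline{\{\nabla\phi\}}$ by the weak-form (adjoint) duality $\ker\A_{\rho_t}=\{\nabla\psi\}^\perp$. Your extra steps are correct refinements of points the paper leaves implicit: nonemptiness and the explicit minimizer $\nabla\phi_t$ from Prop.~\ref{prop:saturated}, density of test-function gradients, closedness of the gradient space under Poincar\'e, and the Pythagorean identity giving uniqueness.
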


Optimal transport is the closed-form instance: for Gaussian marginals the
Benamou--Brenier displacement velocity is a gradient on every segment, so its
solenoidal relative error equals $1$ at every sample size while the induced
marginal path matches the observed one exactly (Cor.~\ref{cor:otblind},
App.~\ref{app:sec2}). The same population floor holds for conditional flow matching with exact OT
couplings; Schr\"odinger bridges add only a gradient correction to the reference
drift and hence preserve its $\rho_t$-solenoidal projection
(Cor.~\ref{cor:fmsb}). Fig.~\ref{fig:concept} shows both halves of the
blindness: identical marginal transport and an error floor that no amount of
data removes.

\begin{figure}[tb]
\centering
\begin{minipage}[b]{0.47\linewidth}
\centering
\includegraphics[width=\linewidth]{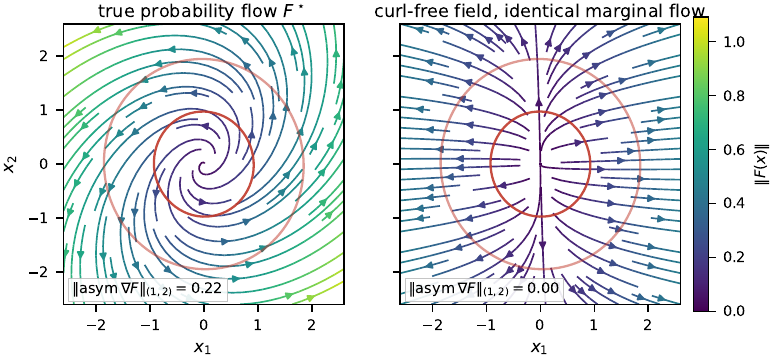}\\
(a)
\end{minipage}
\hspace{0.02\linewidth}
\begin{minipage}[b]{0.49\linewidth}
\centering
\includegraphics[width=\linewidth]{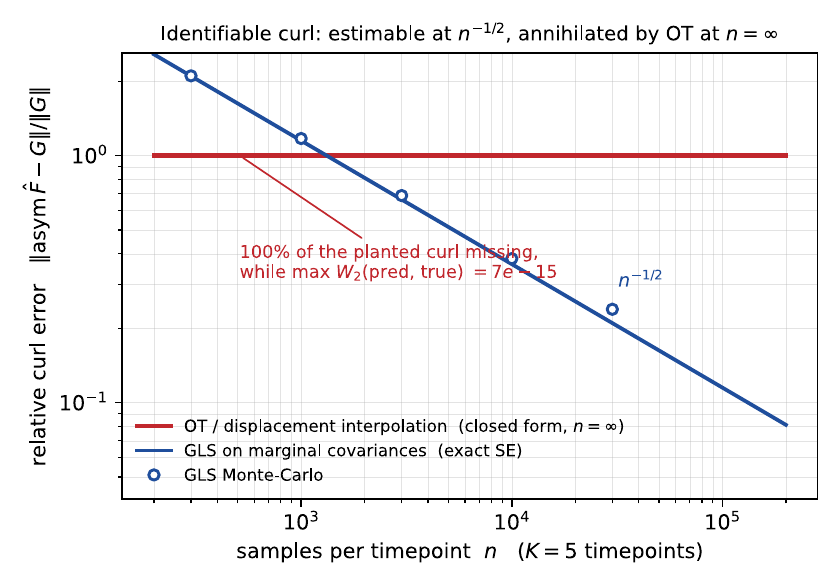}\\
(b)
\end{minipage}
\caption{\textbf{Identical marginal flow, different dynamics.}
(a) A flow with a planted rotational component and a curl-free transport
generate the same Gaussian marginals. (b) On the reference model of
Tab.~\ref{tab:ou}, linearized-GLS decays at the $n^{-1/2}$ rate whereas
displacement interpolation has solenoidal relative error $1$ for every $n$,
including $n=\infty$. Both match the observed marginals to machine precision,
so marginal fit cannot distinguish them.}
\label{fig:concept}
\end{figure}

This blindness is architectural: a time-conditioned curl head is
compensable slice by slice, making circulation unidentifiable even given oracle
nuisances (Prop.~\ref{prop:compensation}). Autonomy instead forces one
compensator to satisfy the constraints across all selected times.

\subsection{Autonomy, snapshot equivalence, and the residual gauge}
\label{sec:autonomy}
\label{sec:kernel}

This motivates an autonomy hypothesis at the representation level: the chosen
state is sufficiently informative that a single field can explain the entire
marginal evolution.
\begin{assumption}[Representation-level autonomy]
\label{asm:autonomy}
On the chosen $m$-dimensional state space $\mathcal M$, the observed evolution is generated by
a single time-independent It\^o drift
$F:\mathcal M\to T\mathcal M$.
\end{assumption}

\begin{proposition}[Source-constraint equivalence]
\label{prop:equiv}
Two autonomous fields satisfy the same source constraints
\eqref{eq:scoreform} iff their difference lies in
$\ker\A_K:=\bigcap_{k=1}^K\ker\A_{\rho_{t_k}}$, where
$\A_KF:=(\A_{\rho_{t_k}}F)_{k=1}^K$. For a continuum of source times the intersection is taken over the whole
interval; with the same initial law and forward uniqueness, this is equivalent
to generating the same complete marginal path.
\end{proposition}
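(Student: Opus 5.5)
The first claim is linearity of $\A_{\rho}$. For two autonomous fields $F$ and $G$, the source constraints \eqref{eq:scoreform} at time $t_k$ read $\A_{\rho_{t_k}}F=b_{t_k}$ and $\A_{\rho_{t_k}}G=b_{t_k}$. The right-hand side $b_{t_k}$ depends only on the observed pair $(\rho_{t_k},\partial_t\rho_{t_k})$ and the fixed $\sigma$, not on the drift, and it is the same for both fields. Subtracting gives $\A_{\rho_{t_k}}(F-G)=0$ for every $k$, that is, $F-G\in\bigcap_k\ker\A_{\rho_{t_k}}=\ker\A_K$. Conversely, if $F$ satisfies the constraints and $F-G\in\ker\A_K$, linearity gives $\A_{\rho_{t_k}}G=\A_{\rho_{t_k}}F=b_{t_k}$. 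I will state the function-space setting so the kernel is unambiguous: fields in the domain of $\A_\rho$ (for instance $H^1_{\mathrm{loc}}$ with $\A_\rho F\in L^2(\rho)$), or equivalently the weak form $\int\nabla\psi\cdot F\,\rho_t=-\int\psi\,\rho_t\,b_t$ for all test functions $\psi$. In that weak form the kernel is exactly $\{h:\nabla\!\cdot(\rho h)=0\}$, as stated in the text. The continuum case is the same argument with $k$ replaced by $t$ in the interval.

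For the path statement I will prove both directions.

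\emph{Same path $\Rightarrow$ same source constraints.} Suppose $F$ and $G$ generate the same marginal path. Then $\rho_t$ and $\partial_t\rho_t$ coincide for every $t$, so $b_t$ coincides. Since each field satisfies its own Fokker--Planck equation divided by $\rho_t$, both satisfy $\A_{\rho_t}\cdot=b_t$ on the whole interval. The difference therefore lies in the intersection of kernels over the interval.

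\emph{Same source constraints $\Rightarrow$ same path.} Let $\rho_t$ be the path generated by $F$, and suppose $F-G\in\ker\A_{\rho_t}$ for all $t$. I multiply back by $\rho_t$, using $\nabla\!\cdot(\rho_t(F-G))=0$. This shows that $\rho_t$ also solves the Fokker--Planck equation with drift $G$:
\[
\partial_t\rho_t=-\nabla\!\cdot(\rho_tG)+\sigma^2\Delta\rho_t,
\]
with the same initial law. The assumed forward uniqueness for $G$'s Fokker--Planck equation, within the class of positive smooth densities considered, then forces $G$'s marginal path to equal $\rho_t$.

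The main subtlety is this last direction. The kernel condition is imposed along $F$'s path, and it must be transferred into a statement about $G$'s path. This works only because $\rho_t$ is exhibited as *a* solution for $G$, after which uniqueness identifies it with *the* solution. I will flag that forward uniqueness has to hold in a class containing $\rho_t$; for example, locally Lipschitz drift with nonexplosion, or uniqueness of weak solutions in $L^\infty_t L^1$. I will also note that the division by $\rho_t$ requires positivity, which is consistent with the standing assumptions of Prop.~\ref{prop:saturated}.
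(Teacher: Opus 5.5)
Your proposal is correct and follows the paper's proof. Both use subtraction and linearity for the finite and continuum source constraints. For the path claim, both use $\nabla\!\cdot(\rho_t(F-G))=0$ to show that $\rho_t$ also solves the Fokker--Planck equation for $G$, then invoke forward uniqueness with the same initial law. The paper leaves implicit the easy direction (same path $\Rightarrow$ same $b_t$ $\Rightarrow$ kernel membership) and the class in which uniqueness must hold; you state both explicitly.
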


Changing marginals can reveal directions hidden at a single slice: if
$\A_{\rho_{t_1}}h=0$ then $\A_{\rho_{t_k}}h=h\cdot(s_k-s_1)$, so cross-slice
score variation creates a footprint for otherwise invisible directions.

Finite snapshots are weaker. Their population observation map is nonlinear,
$\mathscr S_K(F,\rho_0)=(\rho^F_{t_1},\dots,\rho^F_{t_K})$, and its
\emph{fibres}, not $\ker\A_K$, are the observational equivalence classes; an
unknown initial law is a further nuisance. In a finite-dimensional
parameterization, full column rank of the derivative of a continuously
differentiable snapshot-statistic map at the truth gives local injectivity by
the inverse function theorem applied to a full-rank square submap, but neither
implies global injectivity nor follows from instantaneous score rank.

\begin{example}[Temporal aliasing]
\label{ex:alias}
Let $J=\left(\begin{smallmatrix}0&-1\\1&0\end{smallmatrix}\right)$,
$F_\omega(x)=(-aI+\omega J)x$ with $a>0$, and $\rho_0=\mathcal N(0,\Sigma_0)$
with $\Sigma_0$ anisotropic. Then
$\Sigma_t=e^{-2at}R_{\omega t}\Sigma_0R_{\omega t}^\top
+\tfrac{\sigma^2}{a}(1-e^{-2at})I$ with $R_\theta=e^{\theta J}$. At
$t_k=k\Delta$, the drifts $F_\omega$ and $F_{\omega+\pi/\Delta}$ give
\emph{identical} Gaussian snapshots, yet their difference $(\pi/\Delta)Jx$ is
\emph{not} in $\ker\A_{\rho_0}$.
\end{example}

Thus $\ker\A_K$ is neither sufficient for snapshot equivalence --- vanishing
footprints at the selected instants leave the path unconstrained in between ---
nor necessary, by Ex.~\ref{ex:alias}. It characterizes source-level
identifiability, whereas local snapshot identifiability is governed by
$D\mathscr S_K$ (\S\ref{sec:how-fast}).

Unlike the time-indexed model, autonomy is falsifiable. If one autonomous field
explains the slices but no autonomous gradient field does, then non-gradient
structure is required within the specified model. The same cross-slice
restriction therefore creates both falsifiability and additional
identifiability. 

Autonomy is a property of the chosen representation: failure of one field to
explain several slices may instead signal missing dynamical state rather than
genuinely time-varying dynamics. App.~\ref{app:stress} gives both cases at the
population level, including one where every stationary snapshot is compatible
with a zero-rotation drift although the underlying system circulates. We return
to this representation-level tradeoff in App.~\ref{app:scope}.

Although not the finite-snapshot equivalence class, $\ker\A_K$ is the \emph{residual
gauge} of the stacked source constraints and the structural benchmark below. Its geometry has a pointwise reduction.

\begin{proposition}[Residual fibre dimension]
\label{prop:count}
Define
$\mathcal C_x:=\operatorname{span}\{s_k(x)-s_1(x):k=2,\ldots,K\}^{\perp}
\subseteq T_x\mathcal M$
and
$r(x):=\rank[s_2(x)-s_1(x),\ldots,s_K(x)-s_1(x)]$.
Then
$\ker\A_K
=\{h:\A_{\rho_{t_1}}h=0,\;
h(x)\in\mathcal C_x\ \rho_{t_1}\text{-a.e.}\}$,
with $\dim\mathcal C_x=m-r(x)$.
\end{proposition}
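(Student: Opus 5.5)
The plan is to use the algebraic identity already noted after Prop.~\ref{prop:equiv}. Since $\A_{\rho}F=\nabla\!\cdot F+F\cdot\nabla\log\rho$, the divergence part is common to all slices, so for any sufficiently regular field $h$ and every $k$,
\[
\A_{\rho_{t_k}}h=\A_{\rho_{t_1}}h+h\cdot(s_k-s_1).
\]
This identity does all the work. I would first fix the function-space setting: take $h$ in the common domain on which every $\A_{\rho_{t_k}}h$ is defined, either weakly or as an $L^1_{\rm loc}$ function. I would also assume, as throughout, that the $\rho_{t_k}$ are positive and mutually equivalent, so that "$\rho_{t_1}$-a.e." and "$\rho_{t_k}$-a.e." agree.

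For the inclusion $\supseteq$, take $h$ with $\A_{\rho_{t_1}}h=0$ and $h(x)\in\mathcal C_x$ almost everywhere. Then $h\cdot(s_k-s_1)=0$ almost everywhere for each $k\ge2$, so the identity gives $\A_{\rho_{t_k}}h=0$. Hence $h\in\ker\A_K$.

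For the inclusion $\subseteq$, take $h\in\ker\A_K$. The case $k=1$ gives $\A_{\rho_{t_1}}h=0$. Subtracting it from the case $k$ gives $h\cdot(s_k-s_1)=0$ as functions, hence almost everywhere. A countable (indeed finite) union of null sets is null, so for almost every $x$ the vector $h(x)$ is orthogonal to every $s_k(x)-s_1(x)$. That is exactly $h(x)\in\mathcal C_x$.

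The dimension count is pointwise linear algebra. $\mathcal C_x$ is the orthogonal complement in the $m$-dimensional tangent space $T_x\mathcal M$ of a subspace spanned by $K-1$ vectors. That subspace has dimension $r(x)$ by definition, so $\dim\mathcal C_x=m-r(x)$. If $\mathcal M$ carries a Riemannian metric, the orthogonal complement is taken with respect to it, and the score differences are read as the tangent vectors dual to $d\log\rho_{t_k}-d\log\rho_{t_1}$. The count is the same.

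The main obstacle is the regularity bookkeeping, not the algebra. I need $\A_{\rho_{t_k}}h$ to make sense for every $k$ whenever it does for $k=1$. I also need the subtraction to be legitimate, which requires the splitting $\nabla\!\cdot h+h\cdot s_k$ to be meaningful rather than only the combined weak form $\nabla\!\cdot(\rho h)$. I would handle this by taking $h\in W^{1,1}_{\rm loc}$ with the scores locally bounded, so that each term is $L^1_{\rm loc}$. In the purely weak setting, where only $\rho_{t_k}^{-1}\nabla\!\cdot(\rho_{t_k}h)$ is defined, I would use the local identity $\rho_{t_k}^{-1}\nabla\!\cdot(\rho_{t_k}h)=\rho_{t_1}^{-1}\nabla\!\cdot(\rho_{t_1}h)+h\cdot\nabla\log(\rho_{t_k}/\rho_{t_1})$. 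It follows from the product rule with the smooth positive factor $\rho_{t_k}/\rho_{t_1}$ and gives the same conclusion.
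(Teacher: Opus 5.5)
Your proposal is correct and follows the paper's own proof. Both arguments subtract the $k=1$ equation to eliminate $\nabla\!\cdot h$, read off pointwise orthogonality of $h(x)$ to the score differences, reconstitute all $K$ equations for the converse, and count dimensions fibrewise. Your additional regularity bookkeeping, via the distributional identity with the smooth positive ratio $\rho_{t_k}/\rho_{t_1}$, is a sound refinement of what the paper leaves implicit under its standing smoothness and positivity assumptions.
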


The relevant quantity is the score-rank $r(x)$, not $K$ alone, and at
stationarity all constraints coincide. The count $m-r(x)$ is a \emph{fibre}
dimension only: the unknown is a square-integrable section, so even
low-dimensional fibres support infinitely many such fields. A
fixed finite-dimensional model class replaces this infinite-dimensional freedom
by a finite-dimensional rank problem, though nonidentifiability can remain
through null directions or temporal aliasing (App.~\ref{app:gauss}).

\section{Source identifiability and tangent statistical limits}
\label{sec:oracle}

Section~\ref{sec:identifiability} separated the stacked \emph{source
constraints} from the finite snapshot map $\mathscr S_K$. We now characterize
the Gaussian source gauge and the statistical recovery of locally visible
directions in the tangent snapshot experiment. The lower bound is matched in a
degreewise benchmark; transfer to sampled snapshots is not claimed. We work in
intrinsic dimension $m\ge2$. Proofs are in
Apps.~\ref{app:hermite}--\ref{app:gauss}, with ambient reduction, manifold concentration and the stationary specialization in
Apps.~\ref{app:intrinsic}--\ref{app:temporal-ou}.

\subsection{Gaussian shape change contracts the weighted gauge}
\label{sec:how-much}
\label{sec:gaussian-gauge}

Fix comparison slices $\bar\rho_{t_1},\dots,\bar\rho_{t_K}$ and an anchor
$t_\star$ such that, after centering and whitening,
$\rho_\star:=\bar\rho_{t_\star}=\mathcal N(0,I_m)$; in \S\ref{sec:how-fast}
these are the slices of the reference system at the observation times. Let
$\{\Psi_\beta\}$ be the normalized probabilists' Hermite basis,
$\mathcal P_q$ the degree-$q$ chaos,
$D_q:=\dim\mathcal P_q=\binom{q+m-1}{m-1}$,
$W_q:=\mathcal P_{q-1}\otimes\R^m$, and
$S_q:=\ker(\A_{\rho_\star}|_{W_q})$.

\begin{proposition}[Anchor-weighted gauge]
\label{prop:hermite}
$\A_{\rho_\star}(\Psi_\beta e_i)=-\sqrt{\beta_i+1}\,\Psi_{\beta+e_i}$ for every
multi-index $\beta$ and coordinate $i$. Hence $\A_{\rho_\star}:W_q\to\mathcal
P_q$ is onto and, for $q\ge2$, $\dim S_q=mD_{q-1}-D_q$, with
$\dim S_q/\dim W_q\to(m-1)/m$ as $q\to\infty$.
\end{proposition}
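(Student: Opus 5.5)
We compute the footprint of a Hermite-monomial field directly and then read off dimensions by rank--nullity. At $\rho_\star=\mathcal N(0,I_m)$ the score is $\nabla\log\rho_\star(x)=-x$, so
\[
\A_{\rho_\star}(\Psi_\beta e_i)=\partial_i\Psi_\beta-x_i\Psi_\beta .
\]
For normalized probabilists' Hermite polynomials in one variable, $\psi_n=He_n/\sqrt{n!}$, the recurrence $xHe_n=He_{n+1}+nHe_{n-1}$ together with $He_n'=nHe_{n-1}$ gives $He_n'-xHe_n=-He_{n+1}$. After normalization this reads $\psi_n'-x\psi_n=-\sqrt{n+1}\,\psi_{n+1}$. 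Since $\Psi_\beta=\prod_j\psi_{\beta_j}(x_j)$ and the operator $\partial_i-x_i$ acts only on the $i$-th factor, we get $\A_{\rho_\star}(\Psi_\beta e_i)=-\sqrt{\beta_i+1}\,\Psi_{\beta+e_i}$. This operator is the (negative) creation operator, i.e.\ the $L^2(\rho_\star)$ adjoint of $-\partial_i$ acting on Hermite chaos. That observation gives an independent check.

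Surjectivity of $\A_{\rho_\star}:W_q\to\mathcal P_q$ follows next. Each basis element $\Psi_\gamma$ of $\mathcal P_q$ has $|\gamma|=q\ge1$, so some coordinate has $\gamma_i\ge1$. Put $\beta=\gamma-e_i$, which has $|\beta|=q-1$. Then $\Psi_\beta e_i\in W_q$ and its image is the nonzero multiple $-\sqrt{\gamma_i}\,\Psi_\gamma$. Hence every basis vector of $\mathcal P_q$ lies in the image. The same formula shows the image of $W_q$ is contained in $\mathcal P_q$, so the map is well defined.

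Rank--nullity now gives
\[
\dim S_q=\dim W_q-\dim\mathcal P_q=mD_{q-1}-D_q .
\]
The restriction $q\ge2$ only ensures that the formula is the interesting case and that $S_q$ is nontrivial. For $q=1$, $W_1$ consists of the constant fields, of dimension $m=D_1$, so $S_1=0$, which is consistent with the same formula. For the asymptotic ratio we use
\[
\frac{D_q}{D_{q-1}}=\frac{q+m-1}{q}\to1 .
\]
Therefore
\[
\frac{\dim S_q}{\dim W_q}=1-\frac{D_q}{mD_{q-1}}\to1-\frac1m=\frac{m-1}{m}.
\]

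No step is a real obstacle. The only place requiring care is the normalization constant in the one-dimensional identity, which I would verify with the recurrence. I would also confirm that the notation $\mathcal P_q$ means the exact degree-$q$ chaos rather than all polynomials of degree at most $q$, since surjectivity onto the exact chaos is what makes the count clean.
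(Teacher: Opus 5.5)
Your proposal is correct and follows the paper's proof: identify $\A_{\rho_\star}=\sum_i(\partial_i-x_i)(\cdot)_i$ with minus the Gaussian creation operator, obtain surjectivity by mapping $\Psi_{\alpha-e_i}e_i$ to $-\sqrt{\alpha_i}\,\Psi_\alpha$, then apply rank--nullity and $D_q/D_{q-1}=(q+m-1)/q\to1$. The only difference is that you derive the normalized creation identity from the one-variable Hermite recurrence, whereas the paper takes it as standard.
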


Thus a single Gaussian anchor slice leaves most high-degree directions in
its weighted gauge. With $\bar s_{t_k}:=\nabla\log\bar\rho_{t_k}$ and the
stacked reference operator $\bar\A_Kh:=(\A_{\bar\rho_{t_k}}h)_k$, the
cross-slice footprint of $h\in S_q$ is
$T_{K,q}h:=(h\cdot(\bar s_{t_k}-s_\star))_{k\ne\star}$, and
$\ker T_{K,q}=S_q\cap\ker\bar\A_K$.

For an explicit criterion, suppose further that the comparison slices
are Gaussian with a common mean. After a common translation and whitening, write
$\bar\rho_{t_k}=\mathcal N(0,\Sigma_k)$ with $\Sigma^\star=I_m$, and define
the precision contrasts $\Xi_k:=\Sigma_k^{-1}-I_m$. Then
$T_{K,q}h=-(h(x)^\top\Xi_kx)_{k\ne\star}$.

\begin{theorem}[Gauge contraction by shape change]
\label{thm:gauge}
Let $\Xi_1,\dots,\Xi_r$ be the distinct precision contrasts among the
comparison slices, and call the shapes \emph{shape-rich} if
$\spn\{x,\Xi_1x,\dots,\Xi_rx\}=\R^m$ for $x$ in a nonempty open set. With
$\mathcal S_{\le Q}:=\bigoplus_{2\le q\le Q}S_q$, shape richness implies
$\mathcal S_{\le Q}\cap\ker\bar\A_K=\{0\}$ for every $Q\ge2$.
Generically, shape richness holds iff $r\ge m-1$, and this threshold is sharp:
for $r=m-2$ the nonzero field
$w(x):=\star(x\wedge\Xi_1x\wedge\cdots\wedge\Xi_{m-2}x)$ lies in
$\mathcal S_{\le m}\cap\ker\bar\A_K$. Hence, with distinct generic comparison
shapes, $K\ge m$ is necessary and sufficient to eliminate every finite-degree
polynomial gauge.
\end{theorem}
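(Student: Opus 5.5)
The plan is to reduce membership in $\mathcal S_{\le Q}\cap\ker\bar\A_K$ to two pointwise polynomial identities, and then peel off homogeneous top-degree parts. Let $h\in\mathcal S_{\le Q}\cap\ker\bar\A_K$. Each chaos component of $h$ lies in some $S_q$, so $\A_{\rho_\star}h=0$, that is, $\nabla\!\cdot h=x\cdot h$. Moreover $h$ lies in $\ker T_{K,q}$ summed over $q$, so the identity $T_{K,q}h=-(h^\top\Xi_kx)_k$ gives $h(x)\cdot\Xi_kx=0$ for every distinct contrast. Since $h$ is a polynomial vector field, these identities hold on all of $\R^m$.

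\textbf{Sufficiency.} Write $h=\sum_{d\le D}h_d$ in homogeneous degree-$d$ parts, with $h_D\ne0$.
\begin{itemize}
\item Because $\Xi_kx$ is homogeneous linear, the constraint $h\cdot\Xi_kx=0$ splits by degree, giving $h_D\cdot\Xi_kx\equiv0$.
\item In $\nabla\!\cdot h=x\cdot h$, the right side has top homogeneous part $x\cdot h_D$ of degree $D+1$, while $\nabla\!\cdot h$ has degree at most $D-1$. Hence $x\cdot h_D\equiv0$.
\item So $h_D(x)\perp\spn\{x,\Xi_1x,\dots,\Xi_rx\}=\R^m$ on a nonempty open set. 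Thus $h_D$ vanishes there and, being polynomial, everywhere. This contradicts $h_D\ne0$, so $h=0$.
\end{itemize}
This argument is uniform in $Q$.

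\textbf{Genericity.} Shape richness needs $r+1\ge m$ vectors, so $r\ge m-1$ is necessary. For $r=m-1$, richness is equivalent to the polynomial $\det[x,\Xi_1x,\dots,\Xi_{m-1}x]$ being not identically zero. This is a Zariski-open condition on the symmetric tuple $(\Xi_k)$, so it suffices to exhibit one instance. I would take $\Xi_k=\mathrm{diag}(\lambda_1^k,\dots,\lambda_m^k)$ with distinct $\lambda_i$. The determinant is then $\prod_ix_i$ times a Vandermonde determinant, which is nonzero. For $r>m-1$ the same instance works by taking a subset of the contrasts.

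\textbf{Sharpness for $r=m-2$.} Define $w=\star(x\wedge\Xi_1x\wedge\cdots\wedge\Xi_{m-2}x)$. By construction $w\perp x$ and $w\perp\Xi_jx$, so $w$ is annihilated by all $T_{K,q}$ and $x\cdot w=0$. It remains to show $\nabla\!\cdot w=0$, which I expect to be the main technical point. Since each $\Xi_j$ is symmetric, we have $x=\nabla(|x|^2/2)$ and $\Xi_jx=\nabla(x^\top\Xi_jx/2)$. Therefore $w$ is the Hodge dual of $df_0\wedge\cdots\wedge df_{m-2}$, and
$$\nabla\!\cdot w=\pm\star\, d(df_0\wedge\cdots\wedge df_{m-2})=0.$$
Hence $\A_{\rho_\star}w=0$.

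Next, $w$ is a polynomial of degree at most $m-1$. By Prop.~\ref{prop:hermite}, $\A_{\rho_\star}$ maps each $W_q$ into $\mathcal P_q$, so every chaos component of $w$ lies in the corresponding kernel. The $q=1$ component must vanish, because constants $c$ satisfy $\A_{\rho_\star}c=-c\cdot x\ne0$ unless $c=0$. Hence $w\in\mathcal S_{\le m}$. Finally, $w\ne0$ for generic contrasts, since $x,\Xi_1x,\dots,\Xi_{m-2}x$ are independent at some $x$ by the same Vandermonde instance.

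\textbf{Conclusion.} With distinct generic comparison shapes the number of contrasts is $r=K-1$, since the anchor contributes $\Xi_\star=0$. The threshold $r\ge m-1$ therefore reads $K\ge m$, and the argument above shows it is both necessary and sufficient.
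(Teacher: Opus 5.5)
Your proof is correct. Its skeleton matches the paper's: eliminate the leading term, obtain pointwise orthogonality to $x,\Xi_1x,\dots,\Xi_rx$ on an open set, use a diagonal Vandermonde instance for genericity, and exhibit the wedge field for sharpness. The differences lie in the grading and in the divergence lemma.

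\textbf{Grading.} The paper grades by Hermite chaos. It splits the cross-slice footprint into a raising part $R_{K,q}$ and a lowering part (Lem.~\ref{lem:split}). It then shows, by induction on the top chaos degree, that a kernel element's leading component lies in $\ker R_{K,q}$ (Prop.~\ref{prop:gauge-structure}(ii)). Only after that does it pass through the Koszul isomorphism $\iota$ to homogeneous fields orthogonal to $x$ and the $\Xi_kx$. You grade by ordinary monomial degree, where $\nabla\!\cdot$ lowers degree by one while $x\cdot$ and $\Xi_kx\cdot$ raise it by one. The same orthogonality conditions on the top homogeneous part then appear at once, with no Hermite or Koszul machinery. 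Your leading-part map even reproduces the paper's dimension bound (ii), since its target is the space described in (iii). What the Hermite grading buys is alignment with the chaos blocks $S_q$ and with the raising operator $R_{K,q}$, which reappears in the spectral and rate analysis. For the theorem alone, your route is shorter.

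\textbf{Divergence.} For $\nabla\!\cdot w=0$ you observe that $x$ and $\Xi_jx$ are gradients of quadratics. So $w$ is dual to a wedge of exact $1$-forms, and its divergence is $\pm\star d$ applied to a closed form. The paper instead differentiates the determinant column by column and uses $\tr(\Omega M)=0$ for antisymmetric $\Omega$ and symmetric $M$. Both arguments consume exactly the symmetry of the $\Xi_j$; yours does so more conceptually. You are also more careful than the paper in checking that the constant ($q=1$) chaos component of $w$ vanishes before placing $w$ in $\mathcal S_{\le m}$.

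\textbf{One small completion.} Your sharpness step covers only $K=m-1$. For necessity at every $K<m$, add that removing slices only enlarges $\ker\bar\A_K$. You can then pad the contrasts with auxiliary generic symmetric matrices up to $m-2$ and use the resulting wedge field. Alternatively, use the constant-column fields $w_c$ of App.~\ref{app:below}.
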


Thm.~\ref{thm:gauge} is proved in App.~\ref{app:gauge-contraction}. Below the
threshold the residual polynomial gauge can remain macroscopic: for
$m=3$ it contains the Euler rigid-body field $x\times\Xi_1x$, and its degreewise
size is quantified in App.~\ref{app:below}. More generally, with
$K\le m-1$ generic shapes, the first surviving polynomial gauge appears at
degree $K$ (App.~\ref{app:below}). At $q=2$, $S_2=\{Gx:G^\top=-G\}$,
and invisibility is equivalent to $[G,\Sigma_k^{-1}]=0$ for every $k$. Thus one
generic comparison shape eliminates the linear gauge; the threshold $K\ge m$ is
imposed by higher polynomial degrees (cf.~\citet{guan2024};
App.~\ref{app:gauss}).

\subsection{A conditional tangent benchmark}
\label{sec:how-fast}

Visibility to the source constraints does not determine how strongly a direction
appears in snapshots. Linearize around a known reference
$(\bar F,\bar\rho_t)$ whose observed slices are those of
\S\ref{sec:how-much}, with anchor $\bar\rho_{t_\star}=\rho_\star$ and fixed
initial law. For $F^\delta=\bar F+\delta g$, write
$\rho_t^\delta=\bar\rho_t(1+\delta u_t)+o(\delta)$, defining the
relative-density perturbation $u_t$. Let $\mathcal L_t$ be its linearized
Fokker--Planck operator and $\mathcal U(t,s)$ the propagator of
$\partial_tu=\mathcal L_tu$ (App.~\ref{app:transient}). Then
$\partial_tu_t=\mathcal L_tu_t-\A_{\bar\rho_t}g$ with $u_0=0$, and
\begin{equation}\label{eq:tangent}
u_{t_k}=-(\mathcal B_Kg)_k,\qquad
(\mathcal B_Kg)_k:=\int_0^{t_k}\mathcal U(t_k,s)\,
\A_{\bar\rho_s}g\,ds.
\end{equation}
Thus $\mathcal B_K=-D\mathscr S_K(\bar F)$. Unlike $T_{K,q}$,
$\mathcal B_K$ integrates source footprints along the reference path, so
$\ker\mathcal B_K$ and $\ker\bar\A_K$ need not be nested: between-slice
visibility and Duhamel cancellation can both occur
(App.~\ref{app:transient}). An unknown initial law would add a propagated
nuisance term.

Let $\mathsf M_K$ map the stacked tangent densities to the retained snapshot
statistics, let $\Gamma_K$ be the covariance of their $\sqrt n$-scaled
fluctuations, and set
$\widetilde{\mathcal B}_K:=\Gamma_K^{-1/2}\mathsf M_K\mathcal B_K$. The whitened
\emph{tangent snapshot experiment} observes $Y=\widetilde{\mathcal B}_Kg+n^{-1/2}\xi$
with $\E\xi=0$ and $\Cov(\xi)=I$. At the anchor, write the degree-$q$
anchor-weighted Helmholtz split $W_q=G_q\oplus S_q$, with
$G_q:=S_q^\perp\cap W_q=\nabla\mathcal P_q$, and decompose
$g_q=g_{q,\mathrm{grad}}+h_q$ accordingly. Treating the gradient part as an
unrestricted nuisance and profiling it out degree by degree gives
\begin{equation}\label{eq:Jsol}
    \mathcal J_{\mathrm{sol},q}
    :=\bigl(\widetilde{\mathcal B}_K|_{S_q}\bigr)^\ast
      (I-\Pi_{q,\mathrm{grad}})
      \bigl(\widetilde{\mathcal B}_K|_{S_q}\bigr),
\end{equation}
where $\Pi_{q,\mathrm{grad}}$ projects onto the range of
$\widetilde{\mathcal B}_K|_{G_q}$. Each operator in \eqref{eq:Jsol} is
self-adjoint and nonnegative on $S_q$; let
$\kappa_{q,1}^2\ge\cdots\ge\kappa_{q,r_q}^2\ge0$ be its eigenvalues, zeros
included, so that $r_q=\dim S_q\asymp q^{m-1}$ by Prop.~\ref{prop:hermite}.

To align tangent with source identifiability we assume, on the degrees
considered, the \emph{faithfulness} condition
$\ker\mathcal J_{\mathrm{sol},q}\subseteq\ker T_{K,q}$: tangent propagation
creates no invisible direction beyond the source gauge. Under the shape-rich
setting of Thm.~\ref{thm:gauge} it makes $\mathcal J_{\mathrm{sol},q}$
injective on $S_q$, so all $r_q$ eigenvalues are positive and the tangent
experiment targets exactly the circulation made visible by Thm.~\ref{thm:gauge}.
The rate theorem below does not use it.

\begin{assumption}[Block comparability]
\label{asm:comparability}
Let $\mathcal J_{\mathrm{sol},\le Q}$ be the information on
$\mathcal S_{\le Q}$ obtained by profiling the gradient directions of degree at
most $Q$ jointly, with all components above degree $Q$ set to zero, and let
$\mathcal J_Q^{\mathrm{blk}}:=\bigoplus_{q=2}^Q\mathcal J_{\mathrm{sol},q}$.
There is $C<\infty$, independent of $Q$, with
$\mathcal J_{\mathrm{sol},\le Q}\preceq C\,\mathcal J_Q^{\mathrm{blk}}$.
\end{assumption}

Joint profiling therefore cannot raise information beyond the degreewise
benchmark by more than a constant, which suffices for the lower bound; the
reverse comparison can fail (App.~\ref{app:joint}) and is not used. Let $\{e_{q,j}\}_{j=1}^{r_q}$ be an orthonormal eigenbasis of
$\mathcal J_{\mathrm{sol},q}$ and write
$h_q=\sum_{j=1}^{r_q}\theta_{q,j}e_{q,j}$. In these eigencoordinates, the block-diagonal \emph{degreewise benchmark} is
\begin{equation}\label{eq:sequence}
    Z_{q,j}=\kappa_{q,j}\theta_{q,j}+n^{-1/2}\xi_{q,j},
    \qquad
    \sum_{q\ge2}(1+q)^{2s}\|\theta_q\|^2\le R^2,
\end{equation}
which defines the Hermite--Sobolev ball $\mathcal F_m^s(R)$ for $h=\sum_{q\ge2}h_q$ (conventions in
App.~\ref{app:rate-proofs}). The dimension law $r_q\asymp q^{m-1}$ is proved;
the information scale is not.

\begin{assumption}[First-moment information scale]
\label{asm:scale}
Uniformly over the degrees and designs considered,
$\bar\mu_q:=r_q^{-1}\sum_j\kappa_{q,j}^2\asymp K/q$.
\end{assumption}

Here the factor $K$ describes temporal excitation, not merely the number of
snapshots, and does not follow from the structural threshold $K\ge m$; weakly
excited designs can violate it (\S\ref{sec:exp-tension}), and
App.~\ref{app:scale} sketches where the scale comes from. By Markov's
inequality, Asm.~\ref{asm:scale} implies that a constant fraction of degree-$q$
directions have information $O(K/q)$, which drives the lower bound. A matching
upper bound requires additional control near zero.

\begin{assumption}[Uniform thin lower tail]
\label{asm:thin-tail}
There exist $C'<\infty$ and $\gamma>1$, independent of $q,n,K$, such that
$\mathcal N_q(t):=r_q^{-1}\#\{j:\kappa_{q,j}^2\le t\bar\mu_q\}\le C't^\gamma$ for
$0<t\le1$.
\end{assumption}

This requires no degree-uniform spectral gap, excludes zero eigenvalues, and,
together with Asm.~\ref{asm:scale}, yields
$\sum_j\kappa_{q,j}^{-2}\asymp q^m/K$ (App.~\ref{app:spectral}).

\begin{theorem}[Conditional tangent benchmark]
\label{thm:oracle-rate}
Suppose the whitened noise in the tangent snapshot experiment is standard
Gaussian. Then Asm.~\ref{asm:scale} and Asm.~\ref{asm:comparability}
imply that in that experiment, as $nK\to\infty$,
\[
    \inf_{\widehat h}\sup_{h\in\mathcal F_m^s(R)}
    \E\bigl\|\widehat h-h\bigr\|_{L^2(\rho_\star)}^2
    \;\gtrsim\;(nK)^{-2s/(2s+m+1)} .
\]
If Asm.~\ref{asm:thin-tail} also holds, then in the degreewise benchmark
\eqref{eq:sequence}, for any noise $\xi_{q,j}$ there with $\E\xi=0$ and
$\Cov(\xi)=I$, a linear estimator with degree cutoff
$Q_\star\asymp(nK)^{1/(2s+m+1)}$ attains the matching rate. Constants may depend
on $s,m,R$ and the hypothesis constants; at $\gamma=1$ the upper bound incurs a
logarithmic factor.
\end{theorem}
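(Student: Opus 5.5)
The lower bound reduces the tangent experiment to a Gaussian sequence model on a hypercube of solenoidal Hermite coefficients at a single critical degree, and then applies Assouad's lemma. The upper bound analyses a truncated, inverse-weighted linear estimator in the degreewise benchmark \eqref{eq:sequence}.

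\textbf{Lower bound.} Fix a degree $Q$ to be chosen. Restrict the parameter set to fields $h=h_Q\in S_Q$, with all other degrees of $h$ set to zero, while the gradient components of degree $\le Q$ remain free nuisances. Restricting both the class and the experiment's freedom can only decrease the minimax risk, provided the nuisance is handled correctly. With Gaussian whitened noise, any two hypotheses $h,h'$ (each jointly profiled over the gradient nuisance) have Kullback--Leibler divergence at most
\[
\tfrac n2\,\langle h-h',\mathcal J_{\mathrm{sol},\le Q}(h-h')\rangle .
\]
Asm.~\ref{asm:comparability} bounds this by
\[
\tfrac{Cn}2\,\langle h-h',\mathcal J_{\mathrm{sol},Q}(h-h')\rangle ,
\]
because $h-h'$ lives in the single block $S_Q$.

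By Asm.~\ref{asm:scale} and Markov's inequality, at least $r_Q/2$ eigendirections $e_{Q,j}$ satisfy $\kappa_{Q,j}^2\le 2\bar\mu_Q\lesssim K/Q$. On these directions I would build the hypercube
\[
h_\omega=\epsilon\sum_{j}\omega_j e_{Q,j},\qquad \omega\in\{\pm1\}^{r_Q/2}.
\]
Neighbouring vertices then have KL divergence $\lesssim n\epsilon^2K/Q$. Choosing $\epsilon^2\asymp Q/(nK)$ keeps this bounded by a constant. The Sobolev constraint requires $(1+Q)^{2s}r_Q\epsilon^2\le R^2$, i.e.\ $Q^{2s}\,Q^{m-1}\,Q/(nK)\lesssim 1$ using $r_Q\asymp Q^{m-1}$ from Prop.~\ref{prop:hermite}. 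This fixes $Q\asymp(nK)^{1/(2s+m+1)}$. Assouad's lemma then gives risk
\[
\gtrsim r_Q\epsilon^2\asymp Q^m/(nK)=(nK)^{-2s/(2s+m+1)}.
\]
The $e_{Q,j}$ are orthonormal in $L^2(\rho_\star)$ under the normalised Hermite convention, so the loss equals the coordinate loss.

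\textbf{Upper bound.} In \eqref{eq:sequence}, take the estimator
\[
\hat\theta_{q,j}=Z_{q,j}/\kappa_{q,j}\ \text{ for } q\le Q_\star,\qquad \hat\theta_{q,j}=0\ \text{ otherwise}.
\]
This is well defined because Asm.~\ref{asm:thin-tail} excludes zero eigenvalues. The risk splits into a bias term and a variance term:
\begin{itemize}
\item The bias is $\sum_{q>Q_\star}\|\theta_q\|^2\le R^2Q_\star^{-2s}$.
\item The variance is $n^{-1}\sum_{q\le Q_\star}\sum_j\kappa_{q,j}^{-2}$, which uses only $\E\xi=0$ and $\Cov(\xi)=I$.
\end{itemize}
To control the variance, I would write $\sum_j\kappa_{q,j}^{-2}=r_q\bar\mu_q^{-1}\int t^{-1}\,d\mathcal N_q(t)$. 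Integrating by parts, and using $\mathcal N_q(t)\le C't^\gamma$ with $\gamma>1$, makes the small-$t$ integral finite. Together with Asm.~\ref{asm:scale} this gives $\sum_j\kappa_{q,j}^{-2}\lesssim q^{m-1}\cdot q/K=q^m/K$. Summing over $q\le Q_\star$ gives variance $\lesssim Q_\star^{m+1}/(nK)$. Balancing this against $Q_\star^{-2s}$ reproduces both the cutoff and the rate.

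When $\gamma=1$ the integral $\int t^{-1}\,d\mathcal N_q$ diverges logarithmically. In that case I would use a floor: drop or threshold the eigenvalues with $\kappa_{q,j}^2$ below $\bar\mu_q/(nK)$. This leaves an extra bias of at most a constant times $C'/(nK)$-order mass and costs a $\log(nK)$ factor.

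\textbf{Main obstacle.} The delicate step is the lower bound's nuisance handling. The tangent experiment has $h$ of all degrees and gradient parts of all degrees, and cross-degree coupling through $\widetilde{\mathcal B}_K$ could leak information. I would make the reduction explicit. Set the true components above $Q$ to zero, since this is a sub-model. Then note that the least-favourable choice of the gradient nuisance gives the profiled Fisher information $\mathcal J_{\mathrm{sol},\le Q}$ exactly: the Gaussian KL, minimised over nuisance shifts, equals a Schur complement. Only then invoke Asm.~\ref{asm:comparability}. One must also check that the nuisance shifts chosen for each hypothesis stay admissible, i.e.\ that the gradient nuisance is unconstrained, as the theorem's setup stipulates.
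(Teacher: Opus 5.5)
Your upper bound is the paper's argument: the layer-cake control of $\sum_j\kappa_{q,j}^{-2}$ is Lem.~\ref{lem:trace}. Your nuisance handling in the lower bound is also sound, provided each vertex's gradient nuisance is assigned by the linear least-squares fit map, so that every pairwise KL equals the profiled quadratic form at once.

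The gap is the lower-bound hypercube. Placing it at a single degree $Q$ cannot reach the stated exponent, and two arithmetic slips hide this.

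\begin{itemize}
\item Your Sobolev condition $Q^{2s}Q^{m-1}Q/(nK)\lesssim1$ reads $Q^{2s+m}\lesssim nK$. It allows $Q$ up to $(nK)^{1/(2s+m)}$ rather than fixing $Q\asymp(nK)^{1/(2s+m+1)}$.
\item At $Q\asymp(nK)^{1/(2s+m+1)}$ your risk $Q^m/(nK)$ equals $(nK)^{-(2s+1)/(2s+m+1)}$, not $(nK)^{-2s/(2s+m+1)}$. It is short by a factor $Q$.
\end{itemize}

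No choice of $Q$ repairs this. One degree supplies only $r_Q\asymp Q^{m-1}$ coordinates at noise level $Q/(nK)$. The best single-degree bound is therefore $\max_Q\min\{Q^m/(nK),R^2Q^{-2s}\}\asymp(nK)^{-2s/(2s+m)}$, strictly smaller than the target rate. The paper flags exactly this: a single degree gives only $Q_\star^m/(nK)$.

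The missing step is to spread the perturbation over a band of $\asymp Q$ degrees, $q\in[Q/2,Q]$. In each degree, keep the Markov half with $\kappa_{q,j}^2\le2\bar\mu_q$ and use amplitude $\epsilon_q^2=c\,q/(nK)$. This gives $\asymp Q^m$ coordinates. The Sobolev budget $Q^{2s}\cdot Q^m\cdot Q/(nK)\lesssim c$ now genuinely fixes $Q_\star\asymp(nK)^{1/(2s+m+1)}$, and the risk is $\asymp Q_\star^{m+1}/(nK)=(nK)^{-2s/(2s+m+1)}$.

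Your Assouad route survives this change. Neighbouring vertices differ along a single $e_{q,j}$. Asm.~\ref{asm:comparability} applied to that difference bounds the profiled KL by $2Cn\epsilon_q^2\kappa_{q,j}^2\lesssim c$, whether or not other degrees are present, because $\mathcal J_Q^{\mathrm{blk}}$ is diagonal in the eigenbasis.

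Once repaired, this is a valid alternative to the paper's route. The paper uses Lem.~\ref{lem:compare} to exhibit the profiled subproblem as a randomization of the benchmark at sample size $Cn$, then applies the hyperrectangle bound of \citet{dlm1990}. Assouad needs comparability only on single-coordinate differences, whereas the randomization transfers every minimax risk at once.

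At $\gamma=1$, the divergence you anticipate does not occur. $\mathcal N_q$ has atoms of size $r_q^{-1}$, so $\mathcal N_q(t)\le C't$ already forces $\kappa_{q,j}^2\ge\bar\mu_q/(C'r_q)$. This cuts the layer-cake integral off at $u\le C'r_q$ and costs only $\log q$. Your eigenvalue floor therefore drops nothing once $nK\gg r_{Q_\star}$, so it is harmless. That is fortunate, since the bias bound you state for dropped coordinates is not justified: the Sobolev budget could sit on them.
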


Indeed, $r_q\asymp q^{m-1}$ directions with average inverse information of
order $q/K$ give cumulative variance $\asymp Q^{m+1}/(nK)$, which balances the
Sobolev bias $R^2Q^{-2s}$ at the stated rate (App.~\ref{app:rate-proofs}).

\begin{remark}[What remains for sampled snapshots]
\label{rem:oracle-gap}
Thm.~\ref{thm:oracle-rate} gives the lower bound for the tangent snapshot
experiment and the matching upper bound for the degreewise benchmark. By our
argument, extending the upper bound to the full tangent experiment would need
the reverse block comparison and control of omitted-degree leakage;
transferring either result to nonlinear sampled snapshots would further need
uniform control of the linearization remainder and of the sampling likelihood
(App.~\ref{app:transfer}). We do not prove these transfers; the estimator of
\S\ref{sec:estimator} instead has its own finite-basis guarantee
(Thm.~\ref{thm:weak-bound}).
\end{remark}

\section{Estimation from snapshot moments}\label{sec:estimator}

Sections~\ref{sec:identifiability}--\ref{sec:oracle} characterize the identifiable
target and its local statistical difficulty. A direct implementation of
\eqref{eq:scoreform} requires estimating the score and the source $b_t$,
including the temporal log-density derivative, before fitting the field, so
plug-in nuisance error enters at first order.

\begin{proposition}[Strong-form nuisance amplification]
\label{prop:notorth}
Writing the strong-form moment against test functions $\chi_p$ as
$m_p=\E[r_\theta\chi_p]$, its derivative with respect to the score nuisance is
nonzero. Along a solenoidal eigendirection of the profiled strong-form Hessian (Prop.~\ref{prop:hessian})
with curvature $\lambda>0$, the corresponding component of the first-order
coefficient bias is amplified by $\lambda^{-1}$.
\end{proposition}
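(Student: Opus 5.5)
The plan is to make the strong-form objective concrete and then treat both claims by first-order perturbation in the score nuisance. Parametrize $F_\theta=\sum_\ell\theta_\ell\phi_\ell$. The strong-form residual is $r_\theta=\A_{\rho_t}F_\theta-b_t$, which depends on the score $s_t$ through two routes: the term $F_\theta\cdot s_t$ in the footprint, and the diffusion part $\sigma^2(\nabla\!\cdot s_t+\snorm{s_t}^2)$ of $b_t$. The moments are $m_p(\theta,s)=\E[r_\theta\chi_p]$, and the fitted coefficients minimize a quadratic form in these moments, say $\sum_p m_p^2$ or its GLS weighting.

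\textbf{Nonzero nuisance derivative.} Perturb $s_t\mapsto s_t+\epsilon\eta$ with $\eta$ a smooth direction. The Gateaux derivative is
\[
\partial_\epsilon m_p=\E\bigl[\chi_p\bigl(F_\theta\cdot\eta-\sigma^2\nabla\!\cdot\eta-2\sigma^2 s_t\cdot\eta\bigr)\bigr].
\]
Integrating the divergence term by parts against $\rho_t$ turns $-\sigma^2\E[\chi_p\nabla\!\cdot\eta]$ into $\sigma^2\E[(\nabla\chi_p+\chi_p s_t)\cdot\eta]$. This leaves
\[
\partial_\epsilon m_p=\E\bigl[\eta\cdot\bigl(\chi_p(F_\theta-\sigma^2 s_t)+\sigma^2\nabla\chi_p\bigr)\bigr].
\]
At the truth, $F-\sigma^2 s_t=v_t$ is the probability velocity. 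The integrand therefore vanishes for all $\eta$ only if $\chi_p v_t+\sigma^2\nabla\chi_p=0$ $\rho_t$-a.e. This forces $v_t$ to be a gradient, $v_t=-\sigma^2\nabla\log\chi_p$ wherever $\chi_p\neq0$, which cannot hold for a generic test family; stationarity with detailed balance is the exception. For a constant test function it forces $v_t=0$. Choosing $\eta$ aligned with the nonvanishing vector field exhibits a nonzero derivative, so the moment is not Neyman-orthogonal to the score. The temporal log-density nuisance $\partial_t\log\rho_t$ enters linearly with derivative $-\E[\chi_p\cdot]$, which is plainly nonzero and only reinforces the claim.

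\textbf{Amplification.} Next I would apply the implicit function theorem to the first-order condition $\nabla_\theta\mathcal L(\theta,s)=0$. It gives
\[
\delta\hat\theta=-H^{-1}\,\partial_s\nabla_\theta\mathcal L\,[\delta s]+o(\delta s),
\]
where $H$ is the Hessian in $\theta$. Profiling out the gradient coefficients is done by a Schur complement, and I would take the resulting profiled Hessian on solenoidal coefficients from Prop.~\ref{prop:hessian}. Expanding in its eigenbasis, $H_{\mathrm{sol}}=\sum\lambda_j v_jv_j^\top$, the $j$-th component is
\[
v_j^\top\delta\hat\theta_{\mathrm{sol}}=-\lambda_j^{-1}\,v_j^\top\bigl(\text{profiled score}\bigr)[\delta s].
\]
This is precisely amplification by $\lambda^{-1}$ of the first-order source term.

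\textbf{Main obstacle.} The hard step is showing that the profiled cross-derivative projected on $v_j$ is nonzero, so that the amplification genuinely occurs rather than multiplying zero. I would handle this with an explicit choice of $\eta$, for instance $\eta=\chi_p v_j$-type fields built from the solenoidal eigendirection. I would then show, using the Stein identity, that its pairing equals $\E[\eta\cdot(\cdots)]$ with nonvanishing integrand from the first step, and check that this pairing is not annihilated by the gradient-nuisance projection. That last check uses the fact that the solenoidal direction is $L^2(\rho_t)$-orthogonal to gradients.
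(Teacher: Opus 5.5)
Your proposal is correct and follows the paper's route: a nonzero nuisance derivative of $m_p$, the first-order solve $\delta\hat\theta=-\mathcal H^{-1}\partial_\eta m\,\delta\eta$, and an eigen-readout of the profiled (Schur-complement) Hessian that gives the factor $\lambda^{-1}$. Your first step is sharper than the paper's: by keeping the diffusion channel of $b_t$ and integrating by parts you obtain the exact vanishing condition $\chi_pv_t+\sigma^2\nabla\chi_p=0$, whereas the paper's condition tracks only the footprint term $\E[F_\theta\cdot\delta s\,\chi_p]$ (detailed balance is no real exception, though: with $v_t=0$ the derivative $\sigma^2\E[\eta\cdot\nabla\chi_p]$ still vanishes only for constant $\chi_p$). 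The step you call the main obstacle is not part of the statement, whose $\lambda^{-1}$ claim is the linear algebra you already have and is all the paper proves; if you want nonvanishing anyway, drop the appeal to $L^2(\rho_t)$-orthogonality to gradients, since a direction with $\lambda>0$ cannot be $\rho_{t_k}$-solenoidal at every slice and the profiling is a Schur complement in the footprint metric rather than a projection of fields. Instead, apply your first display at each slice with $\chi_p$ replaced by $\A_{\rho_{t_k}}\tilde\phi_j$, where $\tilde\phi_j$ is the field of $v_j$ completed by its profiled gradient coefficients; its slice-averaged mean square is $\lambda_j>0$, so the $v_j$-component of the profiled cross-derivative vanishes for every $\delta s$ only if $F$ is locally a gradient wherever that footprint is nonzero.
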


Nuisance error is amplified along weakly visible solenoidal
directions. Spectral filtering cannot fix this: it attenuates signal and
first-order nuisance bias together (Prop.~\ref{prop:filter}). A coefficient-side
Riesz correction likewise leaves a nonzero nuisance derivative, so it is only a
one-step estimating-equation correction, not a Neyman orthogonalization
(App.~\ref{app:strong}). We therefore remove the score nuisance altogether,
retaining only a covariance-based strength audit from the strong-form geometry.

\begin{proposition}[Covariance-only strength audit]
\label{prop:hessian}
For centered Gaussian slices and a linear field $F(x)=Hx$, the unweighted
population loss has quadratic part
\(
\frac1K\sum_{k=1}^K
\operatorname{vec}(H)^\top
\bigl(
\Sigma_{t_k}\otimes\Sigma_{t_k}^{-1}
+\mathsf C_m
\bigr)
\operatorname{vec}(H),
\)
with $\operatorname{vec}$ column-stacked and $\mathsf C_m$ the commutation matrix.
Hence the profiled solenoidal curvature --- the corresponding Schur complement
after profiling the gradient part --- depends only on the slice covariances; so
does any version weighted deterministically from them.
\end{proposition}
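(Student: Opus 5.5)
We first pin down the population loss. For a linear field $F(x)=Hx$ we have $\nabla\!\cdot F=\tr H$, and on a centered Gaussian slice $\bar\rho_{t_k}=\mathcal N(0,\Sigma_k)$ the score is $s_k(x)=-\Sigma_k^{-1}x$. Hence the footprint is
\[
\A_{\rho_{t_k}}F(x)=\tr H - x^\top H^\top\Sigma_k^{-1}x .
\]
We read the unweighted strong-form loss as $\frac1K\sum_k\E_{\rho_{t_k}}[(\A_{\rho_{t_k}}F-b_k)^2]$. Its quadratic part in $H$ is therefore $\frac1K\sum_k\E[(\tr H-x^\top H^\top\Sigma_k^{-1}x)^2]$. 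The cross term with $b_k$ is linear in $H$, so it does not enter the quadratic part.

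Next we compute each summand with Gaussian moments, writing $M:=H^\top\Sigma_k^{-1}$ and $x=\Sigma_k^{1/2}z$.
\begin{itemize}
\item The first moment is $\E[x^\top Mx]=\tr(M\Sigma_k)=\tr H$, so the footprint has mean zero. The quadratic part is then exactly the variance $\operatorname{Var}(x^\top Mx)$, which Isserlis' theorem gives as $\tr(M\Sigma_k M\Sigma_k)+\tr(M\Sigma_kM^\top\Sigma_k)$.
\item Substituting $M$, the second term becomes $\tr(H^\top\Sigma_k^{-1}H\Sigma_k)=\operatorname{vec}(H)^\top(\Sigma_k\otimes\Sigma_k^{-1})\operatorname{vec}(H)$. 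This uses the identity $\tr(A^\top BAC)=\operatorname{vec}(A)^\top(C^\top\otimes B)\operatorname{vec}(A)$.
\item The first term becomes $\tr(H^\top H^\top)=\tr(HH)=\operatorname{vec}(H^\top)^\top\operatorname{vec}(H)=\operatorname{vec}(H)^\top\mathsf C_m\operatorname{vec}(H)$.
\end{itemize}
Averaging over $k$ gives the displayed form.

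For the profiled curvature, we split $H$ according to the weighted Helmholtz decomposition. The linear gradient directions are the symmetric matrices, and the $\rho$-solenoidal ones at the anchor are the antisymmetric matrices (as in $S_2$ of \S\ref{sec:how-much}). This splits $\operatorname{vec}(H)$ into blocks, and we treat the whole problem as a quadratic form in that basis. The profiled solenoidal curvature is the Schur complement $A_{ss}-A_{sg}A_{gg}^{+}A_{gs}$ of the matrix $A=\frac1K\sum_k(\Sigma_k\otimes\Sigma_k^{-1}+\mathsf C_m)$. Every block of $A$ is a fixed function of $\{\Sigma_k\}$, so the Schur complement, and hence its eigenvalues, depend only on the slice covariances.

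If the weights are deterministic functions of the $\Sigma_k$, for example reweighting each slice by $w_k(\Sigma_1,\dots,\Sigma_K)$ or whitening by a covariance-determined matrix, then $A$ is replaced by another matrix built from the same data. The same conclusion then holds.

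The main obstacle is to fix precisely which decomposition defines the gradient versus solenoidal split, since anchor-weighted, per-slice, or Euclidean symmetric/antisymmetric splits could all be meant. We would handle this by noting that for any of them the splitting subspaces are determined by the covariances, so the Schur complement still depends only on the $\Sigma_k$. We would also check that a pseudo-inverse handles a possibly singular gradient block.
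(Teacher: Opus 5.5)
Your proposal is correct and follows the paper's proof essentially step for step. Both arguments use the same footprint $\tr H-x^\top H^\top\Sigma_k^{-1}x$, center it via $\E_k[x^\top Mx]=\tr H$, and apply Isserlis' theorem to get $\tr(H^2)+\tr(H^\top\Sigma_k^{-1}H\Sigma_k)$. They then use the same Kronecker/commutation vectorization and the same Schur complement against the symmetric (gradient) block.

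The only difference is cosmetic. The paper first symmetrizes $M$ to $M_s=(M+M^\top)/2$ and then uses the symmetric form of Isserlis' identity. You instead apply the general form $\tr(M\Sigma M\Sigma)+\tr(M\Sigma M^\top\Sigma)$ directly, which yields the same two traces.
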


Thus, in the Gaussian linear model, zero eigenvalues mark source-level gauge
directions and small positive ones weakly visible directions. We use this
covariance-only geometry only as a second-moment \emph{design} audit: beyond
Gaussian slices, strong-form curvature depends on higher moments, while
feasible-GLS information generally involves fourth moments
(App.~\ref{app:rate-proofs}). Estimation instead uses score-free weak moments.

\subsection{Nuisance-free weak moments}\label{sec:weak}

For any smooth test function $\psi$ in the generator domain, It\^o's formula
gives $\frac{d}{dt}\E_{\rho_t}[\psi]=\E_{\rho_t}[F\cdot\nabla\psi+\sigma^2
\Delta\psi]$, and the trapezoidal rule between adjacent snapshots yields
\begin{equation}\label{eq:weak}
    \E_{k+1}[\psi_j]-\E_k[\psi_j]
    =\frac{\Delta_k}{2}(\E_k+\E_{k+1})
     \bigl[F\cdot\nabla\psi_j+\sigma^2\Delta\psi_j\bigr]+O(\Delta_k^3),
\end{equation}
with $\Delta_k=t_{k+1}-t_k$ and $\E_k:=\E_{\rho_{t_k}}$. The diffusion
term requires only an empirical moment, so \eqref{eq:weak}
estimates the It\^o drift $F$: no score, density estimate, or pointwise source. With
$F_a(x)=\sum_{p=1}^Pa_p\phi_p(x)$ the system is linear in $a$.

\begin{proposition}[Block-tridiagonal moment noise under independent slices]
\label{prop:gls}
Let the slices be sampled independently with sizes $n_k$, let
$\Psi=(\psi_1,\dots,\psi_J)^\top$, write
$f_a:=(F_a\cdot\nabla\psi_j+\sigma^2\Delta\psi_j)_j$ and
$u_k^{\mp}(x;a):=\mp\Psi(x)-\tfrac{\Delta_k}{2}f_a(x)$. Then
$V_{kk}(a)=\Cov_k(u_k^-)/n_k+\Cov_{k+1}(u_k^+)/n_{k+1}$,
$V_{k,k+1}(a)=\Cov_{k+1}(u_k^+,u_{k+1}^-)/n_{k+1}$, and $V_{k\ell}=0$ for
$|k-\ell|>1$. The stacked covariance is block tridiagonal, adjacent blocks
sharing a slice with opposite sign.
\end{proposition}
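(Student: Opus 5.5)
The plan is to write the empirical moment equation for interval $k$ as a difference of sample means over the two slices it touches, and then read off the covariance from independence of the slices. The fitted residual for interval $k$ is
\[
R_k(a):=\widehat\E_{k+1}[\Psi]-\widehat\E_k[\Psi]-\tfrac{\Delta_k}{2}\bigl(\widehat\E_k+\widehat\E_{k+1}\bigr)[f_a].
\]
Collecting the terms evaluated on the samples of slice $k$ gives $\widehat\E_k[-\Psi-\tfrac{\Delta_k}{2}f_a]=\widehat\E_k[u_k^-]$. Collecting those on slice $k+1$ gives $\widehat\E_{k+1}[\Psi-\tfrac{\Delta_k}{2}f_a]=\widehat\E_{k+1}[u_k^+]$. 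Hence
\[
R_k(a)=\widehat\E_k[u_k^-]+\widehat\E_{k+1}[u_k^+].
\]
The residual is linear in the empirical measures, and for fixed $a$ each $u_k^\mp$ is a deterministic vector function. Here $a$ is either fixed or the population value; the covariance is of the moment noise at $a$, not of a plug-in.

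The diagonal block follows directly. Within slice $\ell$ the $n_\ell$ draws are i.i.d., so $\Cov(\widehat\E_\ell[u],\widehat\E_\ell[v])=\Cov_\ell(u,v)/n_\ell$. Distinct slices are independent, so every cross-slice covariance vanishes. Applying this to $R_k$ gives
\[
V_{kk}=\Cov_k(u_k^-)/n_k+\Cov_{k+1}(u_k^+)/n_{k+1}.
\]

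For the off-diagonal blocks, $R_k$ involves slices $\{k,k+1\}$ and $R_{k+1}$ involves slices $\{k+1,k+2\}$. The only shared slice is $k+1$, entering $R_k$ through $u_k^+$ and $R_{k+1}$ through $u_{k+1}^-$. By bilinearity of covariance and cross-slice independence,
\[
V_{k,k+1}=\Cov_{k+1}(u_k^+,u_{k+1}^-)/n_{k+1}.
\]
For $|k-\ell|>1$ the index sets $\{k,k+1\}$ and $\{\ell,\ell+1\}$ are disjoint, so $V_{k\ell}=0$. This gives the block-tridiagonal structure.

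The sign remark comes from the leading terms: $u_k^+$ contains $+\Psi$ while $u_{k+1}^-$ contains $-\Psi$. The off-diagonal block is therefore $-\Cov_{k+1}(\Psi)/n_{k+1}+O(\Delta)$, and the shared slice enters adjacent blocks with opposite sign.

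The only point needing care is bookkeeping rather than a genuine obstacle. One must state that the $O(\Delta_k^3)$ quadrature term is deterministic bias, contributing to the mean and not the covariance. One must also keep the convention that both $\Psi$ and $f_a$ are evaluated on the same draws within a slice, which is exactly what makes the $u_k^\mp$ the right per-slice summands.
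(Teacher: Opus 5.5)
Your proof is correct and follows the argument the statement is built around. You write each interval residual as $\widehat\E_k[u_k^-]+\widehat\E_{k+1}[u_k^+]$, and i.i.d.\ draws within a slice plus independence across slices then give all three block formulas and the tridiagonal pattern; the paper gives no separate written proof and only checks the result against a Monte-Carlo covariance in App.~\ref{app:verification}, so your remarks on fixed $a$ and on the quadrature term being pure bias fit the paper's intended reading.
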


The covariance depends on the field through $f_a$, so feasible GLS needs a
pilot (App.~\ref{app:weak-impl}). For the implemented finite representation
this yields a snapshot-only guarantee of a different kind from
Thm.~\ref{thm:oracle-rate}: Thm.~\ref{thm:weak-bound} bounds any linear
readout by a sampling term of order $n_{\min}^{-1/2}$ plus a non-stochastic
quadrature residual, both amplified by the smallest whitened singular value,
so increasing $n$ leaves a floor (App.~\ref{app:weak-proof}).
Independence across slices is a hypothesis,
not a formality: with lineage-tracked particles the true covariance is dominated
by blocks this surrogate sets to zero (App.~\ref{app:caseB}).

Stacking \eqref{eq:weak} over intervals $k$ and probes $j$, with $\widehat\E_k$
the empirical mean over slice $k$, gives the response and design
\[
    \widehat b_{kj}:=\widehat\E_{k+1}[\psi_j]-\widehat\E_k[\psi_j]
      -\tfrac{\Delta_k}{2}(\widehat\E_k+\widehat\E_{k+1})[\sigma^2\Delta\psi_j],
    \qquad
    \widehat X_{kj,p}:=\tfrac{\Delta_k}{2}(\widehat\E_k+\widehat\E_{k+1})
      [\phi_p\cdot\nabla\psi_j],
\]
and $\widehat V$ is the covariance of Prop.~\ref{prop:gls} at a pilot fit. The
estimator is
$\widehat a_\zeta\in\argmin_{a\in\mathcal A_{\rm vis}}
\|\widehat V^{-1/2}(\widehat b-\widehat Xa)\|_2^2+\zeta\|a\|_2^2$, with
$\mathcal A_{\rm vis}$ excluding the exact snapshot gauge, removed rather than
weakly penalized so it carries neither an estimate nor spurious variance. We use
the ridge penalty $\mathcal R(a)=\|a\|_2^2$ (App.~\ref{app:selection}) or a
truncated-SVD cutoff of the whitened design (App.~\ref{app:weak-impl}). The local residual in \eqref{eq:weak} is
$O(\Delta_k^3)$ whereas the corresponding design row is $O(\Delta_k)$, so under
stable inversion fixed gaps leave the usual $O(\Delta_{\max}^2)$
coefficient-level quadrature floor, which increasing $n$ does not remove (App.~\ref{app:weak-impl}).

\subsection{Representation and stabilization}\label{sec:repr}

Three ingredients follow: GLS whitening, exact gauge exclusion before fitting,
and a representation --- a theory-aligned frame or fixed features --- solved
linearly rather than by descent. Regularization is then selected for the
solenoidal target, not for held-out moment fit.

Fit-based selection is aimed at the wrong target: the regularization strength
minimizing moment-prediction risk and the one minimizing the risk of a linear
functional of the field have different minimizers in general, with the gap
governed by the near-gauge spectrum (Prop.~\ref{prop:misalign},
App.~\ref{app:selection}).

Because whitening makes the response law exactly $\mathcal N(0,I)$, the
solenoidal risk is estimable without ground truth by perturbing the whitened
response under its known law; its failure mode is documented in
App.~\ref{app:selection}.

\section{Experiments}\label{sec:exp}

Because marginal agreement cannot validate an ambiguous component, we use
planted dynamics with known identifiable part and exact gauge, matched across
data, seeds, and readouts. We report solenoidal relative error
$\relerr:=\|\widehat h-h^\star\|/\|h^\star\|$, so $\relerr=1$ for a
zero-circulation estimate. These finite-dimensional experiments test the
structural predictions and estimator, not the rate exponent of
Thm.~\ref{thm:oracle-rate}. Protocol and receipts are in
Apps.~\ref{app:log}--\ref{app:repro}.

\paragraph{Structural predictions.}\label{sec:exp-theory}
For a planted rotation of norm $0.5$, exactly matched marginals yield
antisymmetric displacement at the $10^{-16}$ level and gradient-only
$\relerr=1.0000$, illustrating Cor.~\ref{cor:otblind}
(Fig.~\ref{fig:concept}b). An isotropic-block rotation leaves all marginals
unchanged to machine precision (Prop.~\ref{prop:symmetry}); operator checks are
in App.~\ref{app:verification}.

\subsection{Recovery from weak moments}\label{sec:exp-estimator}

Our main benchmark is an autonomous planted-curl Ornstein--Uhlenbeck system
with ambient $d=12$, intrinsic $m=4$, $K=5$ snapshots, and diffusion $\sigma^2$ equal to
the relaxation rate $\nu$ (Tab.~\ref{tab:ou}). Slices are drawn independently, as
Prop.~\ref{prop:gls} requires. The single-draw columns at $n=40$k and $200$k
are consistent with the parametric $n^{-1/2}$ benchmark, but each is one draw
from a heavy-tailed seed distribution, so we do not read them as a measured
rate; the pre-estimation audit of Prop.~\ref{prop:hessian} correctly
refuses the $n=4$k design, predicting $\relerr>1$ against a realized $3.70$.

\begin{table}[t]
\centering
\caption{\textbf{Planted-curl OU benchmark} ($d{=}12$, $m{=}4$, $K{=}5$,
$\sigma^2{=}\nu{=}1$). Entries are $\relerr$. Only weak rows are nuisance-free:
the parametric reference is given the symmetric--antisymmetric tie, while
strong-form rows use the exact score and temporal source. Multi-seed entries are
mean\,$\pm$\,s.d.; the four snapshot-only arms share identical draws over $12$
paired seeds, whereas strong-form rows use a separate $8$-seed set. See
App.~\ref{app:log} for single-seed and seed-spread caveats and solver variants.}
\label{tab:ou}
\begin{tabular}{@{}llccc@{}}
\toprule
estimator & ingredients & $n{=}40$k & $n{=}200$k & $n{=}40$k, $12$ seeds\\
\midrule
gradient-only / displacement & --- & $1.000$ & $1.000$ & ---\\
weak, unweighted & moments & --- & --- & $0.734\pm0.157$\\
weak, diagonal weights & moments & $0.893$ & $0.303$ & $0.782\pm0.199$\\
weak, \textsf{GLS} & moments & $\mathbf{0.651}$ & $\mathbf{0.282}$ & $0.751\pm0.260$\\
parametric reference & exact linear model & $0.511$ & $0.179$ & $0.308\pm0.088$\\
\midrule
strong form, oracle nuisances & exact score \& source & --- & --- & $0.535\pm0.177$\\
strong form, $10\%$ score error & corrupted score & --- & --- & $2.461\pm0.689$\\
\bottomrule
\end{tabular}

\end{table}

\textbf{Whitening.} We find no point-estimation gain from weighting of any kind
here. Paired differences are reported as mean\,$\pm$\,standard error over seeds,
and $t$ is the paired $t$-statistic, their ratio. Over the twelve paired seeds diagonal-to-\textsf{GLS} is
$+0.031\pm0.036$ ($t{=}+0.88$), unweighted-to-\textsf{GLS} is $-0.017\pm0.051$
($t{=}-0.32$), and unweighted is marginally better than diagonal
($-0.048\pm0.026$, $t{=}-1.82$). Calibration is already good without whitening
($1.03\times$ diagonal, $0.87\times$ \textsf{GLS}). We therefore report
Prop.~\ref{prop:gls} as correct algebra for the independent-slice model, not a
demonstrated gain; \S\ref{sec:exp-tension} says why.

\textbf{The cost of not knowing the model class.} The parametric reference,
told the exact linear tie between symmetric and antisymmetric coefficients,
beats every nuisance-free arm: $0.308$ against $0.751$ for \textsf{GLS}, paired
$+0.443\pm0.075$ ($t{=}+5.87$). The price of model-class agnosticism is therefore substantial here.

\subsection{A design tension}\label{sec:exp-tension}

Prop.~\ref{prop:gls} gives whitening its largest potential efficiency gain when
adjacent moment equations are strongly correlated through shared snapshots,
which favors short gaps. Thm.~\ref{thm:gauge} instead requires enough
cross-slice shape variation to expose circulation, which on this system favors
longer gaps.

\begin{observation}[Efficiency and identifiability compete for the same knob]
\label{obs:tension}
Compressing the five snapshot times sixfold, from a span of $1.45$ to $0.20$
while holding everything else fixed, raises solenoidal error from $0.78$ to
$30.9\pm18.3$ with diagonal weights and $25.9\pm13.5$ with \textsf{GLS},
roughly a $40$-fold increase and far above the no-information level. The
whitening contrast remains null there ($t{=}+1.62$ over six paired seeds).
\end{observation}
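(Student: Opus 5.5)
Observation~\ref{obs:tension} is an empirical statement, so the plan has two parts. The first is a controlled experiment that establishes the reported numbers. The second is a structural explanation, assembled from earlier results, of why compressing the snapshot schedule must degrade solenoidal recovery whatever the weighting. For the experiment I would reuse the planted-curl OU benchmark of Tab.~\ref{tab:ou} unchanged ($d{=}12$, $m{=}4$, $K{=}5$, $\sigma^2{=}\nu{=}1$, $n{=}40$k, independent slices as required by Prop.~\ref{prop:gls}). Only the snapshot times change: they are rescaled affinely so that their span goes from $1.45$ to $0.20$, keeping the initial law, basis, probes $\psi_j$, gauge exclusion $\mathcal A_{\rm vis}$ and regularization rule fixed. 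Diagonal-weight and \textsf{GLS} arms would be run on identical draws over paired seeds. I would report $\relerr$ as mean\,$\pm$\,s.d., and the whitening contrast as a paired mean\,$\pm$\,standard error with its paired $t$-statistic, exactly as in \S\ref{sec:exp-estimator}.

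For the explanation, first linearize the shape change in the span $\tau$. For the OU reference, $\Sigma_{t_k}=\Sigma_{t_\star}+O(\tau)$, so the precision contrasts satisfy $\Xi_k=O(\tau)$. The cross-slice footprint $T_{K,q}h=-(h^\top\Xi_kx)_k$ then has operator norm $O(\tau)$. Shape richness (Thm.~\ref{thm:gauge}) still holds for every $\tau>0$, so the target stays identifiable in principle, but every solenoidal direction becomes near-gauge. Second, I would transfer this to the estimator. The weak design rows $\widehat X$ are $O(\Delta_k)$, and their solenoidal part, after profiling the gradient block, inherits the extra factor from $\Xi_k$; equivalently, the profiled curvature of Prop.~\ref{prop:hessian} is a Schur complement built from $\Sigma_{t_k}\otimes\Sigma_{t_k}^{-1}$ whose solenoidal eigenvalues vanish as $\tau\to0$. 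Third, I would invoke Thm.~\ref{thm:weak-bound}. Its sampling term $n_{\min}^{-1/2}$ is divided by the smallest whitened singular value, which collapses with $\tau$. This predicts an error that grows far faster than the compression factor and can exceed the no-information level $\relerr=1$, consistent with the roughly $40$-fold increase observed. As a pre-estimation check, the covariance-only audit of Prop.~\ref{prop:hessian} should already flag the compressed design.

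For the null whitening contrast, I would use Prop.~\ref{prop:gls}. At short gaps the adjacent blocks $V_{k,k+1}$ dominate, so whitening reweights strongly. However, it acts on a design that is nearly singular in the solenoidal directions, and it can rescale but not create the missing cross-slice footprint. The \textsf{GLS} and diagonal errors should therefore differ by an amount small relative to their common inflation, which the paired test would confirm.

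The main obstacle is making the second step quantitative. I would need to show that the smallest profiled solenoidal singular value of the whitened weak design scales like a positive power of $\tau$ uniformly over the finite basis, and not merely that it tends to zero. My first attempt would be an explicit Gaussian computation in the linear OU basis, where all the relevant moments are closed-form. If that becomes unwieldy, I would fall back on reporting the computed audit eigenvalue at both spans alongside the realized errors.
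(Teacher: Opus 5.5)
This is essentially the paper's route. The observation rests entirely on rerunning the planted-curl OU benchmark with only the snapshot times compressed, reported over paired seeds as mean\,$\pm$\,s.d.\ with a paired $t$-statistic, and the paper explains the collapse as you do: shrinking precision contrasts $\Xi_k$ drive the planted circulation toward the near-gauge directions diagnosed by Prop.~\ref{prop:hessian}, and the result is treated as a one-system limitation rather than a derived law. Two small cautions on your explanatory layer: Thm.~\ref{thm:weak-bound} is an upper bound, so the collapse of $\alpha$ permits but does not predict the blow-up (the least-squares variance along the weakest whitened direction, of order $\alpha^{-2}$, is what would), and the null whitening contrast is an empirical finding your argument does not imply, since a constant-factor \textsf{GLS} gain would scale with the common inflation rather than become negligible relative to it.
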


The collapse is consistent with the mechanism behind
Thm.~\ref{thm:gauge}: compressing time suppresses the covariance contrasts that
expose circulation, driving the planted component toward the near-gauge
directions diagnosed by Prop.~\ref{prop:hessian}. In this design, the regime
that strengthens the covariance exploited by whitening weakens
the shape variation needed for recovery. We treat this as a limitation rather
than a general law: we have not characterized when both requirements can hold
simultaneously.

\subsection{Ablations}\label{sec:exp-ablations}

All ablations are reported in full in the appendices. With exact score and
source, a closed-form strong-form solve recovers the planted circulation;
fitting the score raises relative error to $0.862$ and fitting both nuisances
to $2.19$. A probe-wise decomposition attributes $38\%$ amplitude bias to the
temporal log-density value channel versus $7\%$ to the score channel; denoising
score matching supplies $\nabla\log\rho_t$ but not the additive normalization
of $\log\rho_t$, which \eqref{eq:weak} eliminates
(App.~\ref{app:strong}). On the nonlinear system of App.~\ref{app:caseB}, no
snapshot-only recovery remains once slices are drawn independently: curl-channel
error is $1.65$ (diagonal) and $1.01$ (\textsf{GLS}) against a no-information
level of $1.0$. The recovery under lineage tracking therefore came from particle
identity rather than snapshot moments.

Polynomial frames scale as $P=504$, $5{,}460$, $81{,}900$ at $d=6,12,25$,
with the $O(P^3)$ whitening Cholesky as the bottleneck. Random features reach
parity with the correctly specified degree-three frame at $m=6$ ($t{=}0.49$,
six seeds), while a random rotation of all $12$ features leaves recovery
unchanged and the audit recovers the $m=4$ tangent space to $2.58^\circ$.
Noise, parameterization, conditioning, latent-regime, and autonomy stress tests
are in Apps.~\ref{app:noise}, \ref{app:param}, \ref{app:conditioning},
\ref{app:robustness} and \ref{app:stress}.

\section{Discussion}\label{sec:discussion}

Snapshot data supports a smaller estimand than the literature implicitly
assumes, parts of its visibility can be diagnosed from the snapshot geometry
before fitting, and the resulting estimator is a whitened linear solve rather
than a trained field. Reporting zero circulation is a property of a
parameterization, not a finding: within a time-indexed class it is unfalsifiable
from marginal fit (Prop.~\ref{prop:saturated}) and least-action reconstruction
actively selects it (Cor.~\ref{cor:blind}). Marginal metrics therefore cannot
validate non-gradient structure; planted-circulation benchmarks can. Autonomy
converts an untestable model into a testable one, but it is an assumption about
the chosen state representation rather than necessarily about the underlying
biology, and App.~\ref{app:stress} shows it can fail invisibly. The price is
explicit: the rate is governed by intrinsic dimension $m$, and $K\ge m$ generic
shapes eliminate all finite-degree polynomial gauge directions at the source
level (Thm.~\ref{thm:gauge}); fewer shapes can already identify lower-degree
components. Assumptions~\ref{asm:comparability}--\ref{asm:thin-tail}
and the faithfulness condition of \S\ref{sec:how-fast} remain unproved;
App.~\ref{app:ledger} tags every claim.

The current implementation is computationally limited by the intrinsic
representation dimension: polynomial frames grow combinatorially in $m$, and
dense whitening has $O(P^3)$ Cholesky cost. Thus the exact-solve estimator is
most practical when the dynamics admit a low-dimensional representation or a
compact feature basis, consistent with the intrinsic-dimensional formulation of
the theory. All validation here is synthetic. We therefore do not claim a
biological benchmark result or a neural-baseline leaderboard; methods supplied
with velocity, pairing, or lineage information answer a strictly richer
observation problem, while snapshot-only neural transport baselines remain
useful empirical comparisons for future work.

\section*{Reproducibility statement}

All theoretical claims are stated with their hypotheses in the main text and
proved in the appendices; App.~\ref{app:ledger} tabulates every numbered claim
against its status, distinguishing results proved as stated, results proved
conditional on stated assumptions, results with numerical support only, and
open problems. Assumptions that remain unproved are identified as such at the
point of use, and Rem.~\ref{rem:oracle-gap} states explicitly what
Thm.~\ref{thm:oracle-rate} does and does not establish.

All experiments are synthetic with closed-form ground truth. The supplementary
material contains the estimator library, the benchmark suite, every experiment
script, and machine-readable JSON receipts from which each number and table in
this paper is generated; App.~\ref{app:repro} maps each reported quantity to
its receipt and generating command, and the weak-form path runs on NumPy and
SciPy alone, while the strong-form and neural comparison arms also require PyTorch. Gates were fixed before each run. App.~\ref{app:log} records the
negative results and every retracted or superseded claim, including two
corrections that changed reported numbers, with superseded intermediate values
retained in the receipts rather than deleted. The complete suite runs on a
single CPU core.

\section*{AI use statement}

Large language models (Claude, Anthropic, and ChatGPT, OpenAI) were used in
preparing and conducting parts of this work. Their roles are disclosed below
by subtask, following the ICLR 2027 AI Policy for Authors.

\textbf{Manuscript preparation.}
The models revised the experiments and discussion sections, drafted appendices
from existing proof notes and experimental receipts, repaired cross-references,
and audited reported numbers against the machine-readable receipts in the
supplement. They also assisted with a feedback-driven algebra and exposition
audit, including corrections to the Gaussian fourth-moment calculation, the
finite-basis proof, and scope statements.

\textbf{Conceptual and theoretical development.}
The models assisted in developing and refining the conceptual and theoretical
framework of the work, including stress-testing the formulation of the
identifiability problem and proposing or refining hypotheses and intermediate
conjectures considered during development. The authors evaluated these
suggestions and determined the final framework, assumptions, and claims.

\textbf{Mathematical claims and proofs.}
The models assisted in checking and refining mathematical claims, derivations,
and proof arguments, and in drafting portions of proofs from the authors'
existing arguments and notes. This included identifying gaps or errors and
suggesting corrections during the proof audit. The authors checked the final
statements and proofs and take responsibility for their correctness.

\textbf{Methodology, implementation, and interpretation.}
The models provided feedback on experimental methodology and benchmark design
and assisted with implementation, diagnostic code, and interpretation of the
resulting experiments. In particular, a model identified two defects in the
shipped experimental code---an omitted diffusion term in the weak-form response
and a lineage-tracked sampling design inconsistent with the independence
hypothesis of Prop.~\ref{prop:gls}---wrote the corrections and diagnostic
scripts that quantified them, and re-ran the affected benchmarks. The resulting
retractions in App.~\ref{app:log} and the design tension reported in
Obs.~\ref{obs:tension} follow from those runs. All corrections, re-runs, and
interpretations were verified by the authors against the receipts.

\textbf{Synthetic data.}
The synthetic data used in the experimental benchmarks were generated by
simulation code whose design, implementation, or debugging was assisted by the
models. The authors verified the data-generating mechanisms and experimental
configurations used for the reported results.

\textbf{Literature retrieval and bibliography.}
The models assisted in searching for and identifying potentially relevant
literature and in checking bibliographic records. The authors verified cited
works against primary sources and checked titles, author lists, identifiers,
and the relevance of the cited results.

\textbf{Other uses.}
Generative AI was not used for translation, cleaning or reformatting an
external dataset, or qualitative or thematic data analysis; these tasks were
not applicable to this work.

The authors reviewed all AI-assisted work and take full responsibility for the
final content of this paper, including all text, mathematical claims, proofs,
code, experimental results, citations, and conclusions produced with model
assistance.

\section*{Acknowledgements}

N.D.N. is a Senior Postdoctoral Fellow of the Research Foundation -- Flanders
(FWO; fellowship 12ADS26N-7029), and was further supported by a Pilot Grant
from the Stichting Alzheimer Onderzoek -- Fondation Recherche Alzheimer
(SAO-FRA; grant 20230054) and a Young Researcher Grant from the Queen Elisabeth
Medical Foundation for Neurosciences (QEMF). N.D.N. thanks his wife, Dao Ha
Anh, for her steady support, and for caring for their newborn child while this
work was completed.

\bibliographystyle{iclr2027_conference}
\bibliography{refs}

\appendix
\section*{Appendix}
The appendices are organized as follows. App.~\ref{app:ledger} is the
hypothesis ledger: every numbered claim against its status. App.~\ref{app:related}
extends the positioning. Apps.~\ref{app:sec2}--\ref{app:gauge-contraction}
prove the structural results of \S\ref{sec:identifiability}--\ref{sec:how-much}.
Apps.~\ref{app:transient}--\ref{app:temporal-ou} develop the tangent analysis, its
Gaussian specialization and the intrinsic-dimension reduction.
Apps.~\ref{app:strong}--\ref{app:selection} cover the estimator, including its
finite-basis guarantee (App.~\ref{app:weak-proof}).
Apps.~\ref{app:caseB}--\ref{app:repro} contain the remaining experiments, stress
tests, scope, negative results and reproduction instructions.

\section{Hypothesis ledger}\label{app:ledger}

Tab.~\ref{tab:ledger} labels each claim \textbf{P} (proved as stated), \textbf{C} (proved
conditional on stated assumptions), \textbf{N} (numerically supported, not
proved), or \textbf{O} (open).

\begin{table}[htbp]
\centering
\caption{\textbf{Hypothesis ledger.} Status of every numbered claim and what it
rests on (App.~\ref{app:ledger}).}
\label{tab:ledger}
\begin{tabular}{@{}llp{0.64\linewidth}@{}}
\toprule
claim & status & scope and what it rests on\\
\midrule
Prop.~\ref{prop:saturated} & P & Poincar\'e inequality,
$\partial_t\log\rho_t\in L^2(\rho_t)$; App.~\ref{app:sec2}\\
Cor.~\ref{cor:blind} & P & $(\ker\A_\rho)^\perp=\overline{\{\nabla\phi\}}$;
single time\\
Cor.~\ref{cor:otblind} & P & Gaussian marginals; closed-form
Benamou--Brenier\\
Cor.~\ref{cor:fmsb} & P & population level; (i) Gaussian slices, exact OT
coupling, noiseless interpolant; (ii) any (possibly non-gradient) reference
drift, smooth positive Schr\"odinger potentials\\
Prop.~\ref{prop:equiv} & P & source constraints only, at finitely many or a
continuum of times; forward uniqueness for the continuum converse\\
Ex.~\ref{ex:alias} & P & explicit; shows $\ker\A_K$ is not necessary for
snapshot equivalence\\
Prop.~\ref{prop:count} & P & smooth $h$; fibre dimension, not function-space
dimension\\
Prop.~\ref{prop:compensation} & P & time-conditioned head; per-slice Poisson
solvability\\
Prop.~\ref{prop:symmetry} & P & isotropic block; exact marginal invariance\\
Prop.~\ref{prop:hermite} & P & Gaussian anchor slice; Hermite ladder
(App.~\ref{app:hermite})\\
Thm.~\ref{thm:gauge} & P & compared slices Gaussian with a common mean (the
reference slices in the local analysis); exact, no linearization; sharpness by
the explicit field of Lem.~\ref{lem:wedge}\\
Prop.~\ref{prop:gauge-structure} & P & leading-degree bound; its attainment
below threshold is numerical (App.~\ref{app:below}), open in general\\
degree cap (App.~\ref{app:below}) & P/N & exact gauge of degree $K$ for
$K\le m-1$ proved; identification up to degree $K-1$ proved for $K=m-1$,
certified exactly for $m\le6$, open in general\\
faithfulness (\S\ref{sec:how-fast}) & O & bridges source identifiability to the tangent statistical limits; not used
by Thm.~\ref{thm:oracle-rate}; implied by Asm.~\ref{asm:thin-tail}; holds with
equality in the shape-rich cases computed (App.~\ref{app:transient})\\
Asm.~\ref{asm:comparability} & O & one-sided Loewner comparison at every cutoff;
used for the lower bound through Lem.~\ref{lem:compare}; not verified\\
Asm.~\ref{asm:scale} & O & $r_q\asymp q^{m-1}$ proved in Prop.~\ref{prop:hermite}; the
$K/q$ scale is a heuristic factorization with four stated gaps
(App.~\ref{app:scale})\\
Asm.~\ref{asm:thin-tail} & N & tail exponent measured only on the source
operator; the implied trace law measured on the propagated design
(App.~\ref{app:spectral})\\
Thm.~\ref{thm:oracle-rate} & C & lower bound in the tangent snapshot experiment
under Asm.~\ref{asm:scale} and \ref{asm:comparability}; upper bound in the
degreewise benchmark under Asm.~\ref{asm:scale} and \ref{asm:thin-tail};
\emph{not} a sampled-snapshot theorem (Rem.~\ref{rem:oracle-gap})\\
Thm.~\ref{thm:weak-bound} & P & snapshot-only, finite basis, finite sample;
needs $\alpha=s_{\min}(WX)>0$ on the retained subspace\\
Prop.~\ref{prop:notorth} & P & nonzero solenoidal component; first-order
expansion\\
Prop.~\ref{prop:filter} & P & diagonal filters in the profiled basis only\\
Prop.~\ref{prop:hessian} & P & centered Gaussian slices, covariance-determined
weighting; fails for general densities\\
Prop.~\ref{prop:gls} & P & independent slices, fixed $a$; violated by
lineage-tracked designs (App.~\ref{app:caseB})\\
Prop.~\ref{prop:misalign} & P & whitened linear experiment; ridge penalty;
risks averaged over coordinate signs of $a^\star$; linear functional\\
extensions (App.~\ref{app:scope}) & P & known state-dependent diffusion, for
\S\ref{sec:identifiability} and the weak moments only; not \S\ref{sec:how-fast}\\
Prop.~\ref{prop:ratesplit} & C & smoothed-manifold regime of
App.~\ref{app:manifold}\\
Obs.~\ref{obs:tension} & N & one system, $m=4$; no characterization of when
both requirements can hold\\
\bottomrule
\end{tabular}
\end{table}

Three cautions carry across the paper. First, every rate statement lives in the
tangent snapshot experiment or its degreewise benchmark, never in the
sampled-snapshot experiment. Second, the
finite-dimensional experiments of \S\ref{sec:exp} exercise the structural
predictions and the estimator; they do not validate the nonparametric exponent.
Third, $\rho$-solenoidal, Euclidean divergence-free, and ``antisymmetric linear
coefficient'' are three different things, and results proved for one do not
transfer to another.

\section{Extended related work}\label{app:related}
\textbf{Population dynamics and generative transport.}
Trajectory inference is commonly posed as transport between temporal marginals:
optimal transport \citep{schiebinger2019optimal}, neural-ODE and JKO approaches
\citep{tong2020trajectorynet,huguet2022manifold,bunne2022proximal,%
terpin2024learning}, Schr\"odinger bridges \citep{debortoli2021diffusion,shi2023},
and flow-matching or generative formulations
\citep{hashimoto2016learning,yeo2021generative,lipman2023flow,tong2024,%
lee2025mmsfm}. These construct dynamics consistent with prescribed
distributions; we ask which aspects of the dynamics those distributions
determine, and Cor.~\ref{cor:fmsb} shows that OT-coupled flow matching returns
a gradient field at the population level and that Schr\"odinger-bridge matching,
for any reference drift, only ever adds a gradient correction to it, so neither
method can manufacture circulation a curl-free reference lacks. Consistency results often assume gradient drift
\citep{lavenant2024towards}, and action matching explicitly targets a gradient
representative \citep{neklyudov2023action}; rotational ambiguity from
population snapshots was noted by \citet{weinreb2018fundamental}. Scores near a
smoothed manifold encode support geometry strongly \citep{li2025scores}, which
we connect to the present setting in App.~\ref{app:manifold}.

\textbf{Breaking the gauge with dynamical information.}
Autonomy couples Eulerian marginals through one shared field; velocity,
pairing or lineage measurements instead supply Lagrangian information
\citep{petrovic2025curly,lamanno2018rna,bergen2020generalizing}. We isolate
what marginals alone determine; a lineage-tracked variant of our nonlinear
benchmark is in App.~\ref{app:caseB}.

\textbf{Identifiability from marginals.}
The closest identifiability results either impose gradient dynamics
\citep{hashimoto2016learning,lavenant2024towards,neklyudov2023action} or
specialize to linear-Gaussian models. \citet{guan2024} characterize when the
drift and diffusion of a linear additive-noise SDE are identifiable from
population marginals, in terms of generalized rotational symmetries of the
initial law. We treat an infinite-dimensional field class, condition on the
observed shape family rather than on the initial law, and study finite-sample
recovery; App.~\ref{app:gauss} makes the comparison precise.

\textbf{Score-based and weak-form estimation.}
Strong-form approaches combine score estimation
\citep{hyvarinen2005estimation,vincent2011connection,song2019generative} with
continuity or Fokker--Planck equations \citep{maoutsa2020interacting,%
song2021scorebased,boffi2023probability}; we show score error is a
non-orthogonal nuisance there. Weak formulations are classical in system
identification \citep{brunton2016discovering,messenger2021weak} and correspond
to generalized method of moments \citep{hansen1982large}; we adapt them to
unpaired snapshots. Related nonequilibrium methods infer forces from full
trajectories \citep{frishman2020learning,gnesotto2018broken}.

\section{Proofs for Section~\ref{sec:identifiability}}\label{app:sec2}

Throughout, densities are smooth and positive with enough decay for the
displayed integrations by parts, and $\A_\rho F=\rho^{-1}\nabla\!\cdot(\rho F)$.

\subsection{The adjoint and the two orthogonal complements}

Scalars live in $L^2(\rho)$ and vector fields in $L^2(\rho;\R^m)$, with
$\langle F,G\rangle_{\Ltwo}:=\int F\cdot G\,\rho$; orthogonal complements and
closures of sets of fields are taken in $L^2(\rho;\R^m)$. For scalar $\phi$ and
field $F$ in the relevant domains,
$\langle\A_\rho F,\phi\rangle_{\Ltwo}=\int\nabla\!\cdot(\rho F)\,\phi
=-\int\rho\,F\cdot\nabla\phi=-\langle F,\nabla\phi\rangle_{\Ltwo}$, so
$\A_\rho^\ast\phi=-\nabla\phi$ and
\begin{equation}\label{eq:perp}
    (\ker\A_\rho)^\perp=\overline{\operatorname{range}\A_\rho^\ast}
    =\overline{\{\nabla\phi\}}.
\end{equation}
This is a statement about \emph{one} measure and does not survive stacking: for
the stacked operator, with fields in $L^2(\mu;\R^m)$ for a reference density
$\mu$ and the $k$-th component in $L^2(\rho_{t_k})$,
$\A_K^\ast(\phi_k)_k=-\mu^{-1}\sum_k\rho_{t_k}\nabla\phi_k$, whose range is the span of
\emph{reweighted} gradients --- strictly larger than $\overline{\{\nabla\phi\}}$
as soon as two slices differ. Much of the folklore about $L^2$ actions losing
the curl is \eqref{eq:perp} applied at a single time; \S\ref{sec:autonomy} is
what happens when it is not.

\begin{proof}[Proof of Prop.~\ref{prop:saturated}]
Fix $t$, write $\rho=\rho_t$ and $f=\partial_t\rho_t$, and let $C_P$ be the
Poincar\'e constant of $\rho$, so that
$\operatorname{Var}_\rho(\psi)\le C_P\int|\nabla\psi|^2\rho$. On
$\mathcal H:=H^1(\rho)/\mathbb R$ let
\[
    a(\phi,\psi):=\int\nabla\phi\cdot\nabla\psi\,\rho,
    \qquad
    \ell(\psi):=\int\psi f.
\]
Mass conservation gives $\int f=\tfrac{d}{dt}\int\rho_t=0$, so $\ell$
annihilates constants and is well defined on $\mathcal H$. The form $a$ is
bounded and, by the Poincar\'e inequality, coercive on $\mathcal H$, and
\[
    |\ell(\psi)|
    =\Bigl|\int\bigl(\psi-\E_\rho\psi\bigr)\,\partial_t\log\rho\,\rho\Bigr|
    \le C_P^{1/2}\,\|\nabla\psi\|_{L^2(\rho)}\,
        \|\partial_t\log\rho\|_{L^2(\rho)},
\]
so $\ell$ is bounded. Lax--Milgram gives a unique $\phi_t\in\mathcal H$ with
$a(\phi_t,\cdot)=\ell$, which is the weak form of
$\nabla\!\cdot(\rho_t\nabla\phi_t)=-\partial_t\rho_t$. Thus
$v_t:=\nabla\phi_t$ satisfies $\partial_t\rho_t+\nabla\!\cdot(\rho_tv_t)=0$,
and testing with $\psi=\phi_t$ gives
$\|v_t\|_{L^2(\rho_t)}\le C_P^{1/2}\|\partial_t\log\rho_t\|_{L^2(\rho_t)}$.
That transporting $\rho_0$ along $v$ recovers $\{\rho_t\}$, rather than merely
being consistent with it, would further require uniqueness for the continuity
equation, which needs regularity of $v$ beyond $L^2$; the unfalsifiability
conclusion uses only the consistency proved here.
\end{proof}

\begin{proof}[Proof of Cor.~\ref{cor:blind}]
The velocities compatible with the observed path at time $t$ form the affine
set $v_t^0+\ker\A_{\rho_t}$ for any particular solution $v_t^0$. The element of
least $L^2(\rho_t)$ norm in an affine set is its projection onto the orthogonal
complement of the direction space, which is $\overline{\{\nabla\phi\}}$ by
\eqref{eq:perp}. Its $\rho_t$-solenoidal component is therefore zero. Note this
is a projection, not a bias: least-action reconstruction does not shrink the
circulation, it deletes it.
\end{proof}

\begin{corollary}[Displacement interpolation is curl-blind in closed form]
\label{cor:otblind}
For Gaussian marginals with a planted rotation, the Benamou--Brenier
displacement velocity between consecutive slices is a gradient on every
segment. Its solenoidal relative error equals $1$ at every sample size, while
the induced marginal path matches the observed one exactly.
\end{corollary}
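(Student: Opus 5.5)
The plan is to use the closed-form Bures--Wasserstein geodesic between consecutive Gaussian slices. Take $\rho_{t_k}=\mathcal N(\mu_k,\Sigma_k)$ and $\rho_{t_{k+1}}=\mathcal N(\mu_{k+1},\Sigma_{k+1})$, both nondegenerate, with $\Delta_k=t_{k+1}-t_k$. Brenier's theorem gives the optimal map as an affine map $T(x)=\mu_{k+1}+A(x-\mu_k)$. Here
\[
A=\Sigma_k^{-1/2}\bigl(\Sigma_k^{1/2}\Sigma_{k+1}\Sigma_k^{1/2}\bigr)^{1/2}\Sigma_k^{-1/2},
\]
which is symmetric positive definite. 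Hence $T=\nabla\varphi$ with $\varphi$ a convex quadratic. For $\tau=(t-t_k)/\Delta_k\in[0,1]$, the displacement interpolant is $T_\tau=(1-\tau)\mathrm{id}+\tau T$. It pushes $\rho_{t_k}$ to $\mathcal N(\mu_\tau,\Sigma_\tau)$, where $\mu_\tau$ interpolates the means linearly and $\Sigma_\tau=A_\tau\Sigma_kA_\tau$ with $A_\tau=(1-\tau)I+\tau A$. The Eulerian velocity is
\[
v_t(y)=\Delta_k^{-1}\,(T-\mathrm{id})\circ T_\tau^{-1}(y)=\Delta_k^{-1}\Bigl[(\mu_{k+1}-\mu_k)+(A-I)A_\tau^{-1}(y-\mu_\tau)\Bigr].
\]
The matrix $A-I$ and $A_\tau^{-1}$ are both functions of the same symmetric matrix $A$. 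They therefore commute, and their product $(A-I)A_\tau^{-1}$ is symmetric. So $v_t$ is the gradient of a quadratic on every segment. This is the core step and is a short computation.

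Next I will deduce the two claims. First, the induced marginal path matches the observed slices exactly. The construction gives $T_\tau\#\rho_{t_k}=\rho_{t_{k+1}}$ at $\tau=1$, and the concatenation over segments passes through every observed slice. Second, the solenoidal relative error equals $1$. Because $v_t=\nabla\phi_t$ with $\phi_t$ quadratic, hence in $H^1(\rho_t)$, it lies in $(\ker\A_{\rho_t})^\perp=\overline{\{\nabla\phi\}}$ by \eqref{eq:perp}. The same holds for the Itô drift $v_t+\sigma^2 s_t$, since Gaussian scores are gradients. Its $\rho_t$-solenoidal projection is therefore zero. The estimate $\widehat h=0$ gives $\relerr=\|0-h^\star\|/\|h^\star\|=1$, and this holds whenever the planted rotation contributes a nonzero $h^\star$. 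Along the way I will note that Cor.~\ref{cor:blind} already implies the conclusion abstractly. The closed form adds the fact that the optimizer itself is explicit and matches the marginals exactly.

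Finally comes the sample-size independence. The slices estimated from $n$ samples are again Gaussians, namely the fitted means and covariances, and for $n=\infty$ they are the true slices. The argument above applies verbatim to any pair of nondegenerate Gaussians. So the gradient property, and hence the error of $1$, holds for every $n$, including $n=\infty$. The main obstacle is interpretive rather than technical. I must fix which object the error is measured against: the true drift's $\rho_t$-solenoidal part $h^\star$ at the observed times. I must also check that a planted rotation $\omega J$ with anisotropic $\Sigma_t$ indeed gives $h^\star\neq0$. For that check I will use the $q=2$ criterion stated after Thm.~\ref{thm:gauge}, namely $[G,\Sigma_k^{-1}]\neq0$. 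In the isotropic case of Prop.~\ref{prop:symmetry} the rotation is genuinely invisible, and the statement needs the anisotropic nondegeneracy caveat.
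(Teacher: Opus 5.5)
Your core argument is the paper's proof. The Gaussian optimal map is the symmetric positive definite matrix $A$ (the paper's $T_k$), and the interpolant's velocity $(A-I)A_\tau^{-1}$ is a product of commuting symmetric matrices, hence the gradient of a quadratic. So the estimate has zero solenoidal part and $\relerr=1$, while the endpoint marginals are matched by construction. Carrying the means, the factor $\Delta_k^{-1}$, and the It\^o correction $\sigma^2 s_t$ (itself a gradient) are harmless additions that the paper leaves implicit.

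The one thing to correct is your last paragraph. You need $h^\star\neq0$, but the commutant test $[G,\Sigma_k^{-1}]\neq0$ does not decide that. It asks whether a field already solenoidal at the whitened anchor becomes visible at another slice, i.e.\ whether it lies in $\ker\A_K$. That is an identifiability question, not a nonvanishing one. Whether the truth has a nonzero $\rho_t$-solenoidal part is simpler.

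For $F(x)=Hx$ and $\rho_t=\mathcal N(0,\Sigma_t)$, write $H=S+\Sigma_tN$ with $S$ symmetric and $N$ antisymmetric. This split exists and is unique: $S+\Sigma_tN=0$ forces $\Sigma_tN+N\Sigma_t=0$, hence $N=0$ for positive definite $\Sigma_t$. Since $Sx$ is a gradient and $\Sigma_tNx\in\ker\A_{\rho_t}$ by Lem.~\ref{lem:linear}, the solenoidal part is $h^\star_t=\Sigma_tNx$. It is nonzero iff $H$ is not symmetric, whatever the shape (use $x-\mu_t$ for nonzero means).

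In the isotropic case, $\Sigma_t=\varsigma^2I$ and $H=-aI+\omega J$ give $h^\star=\omega Jx\neq0$. Prop.~\ref{prop:symmetry} makes this rotation invisible to every marginal, so no method can identify it. That does not make it vanish, and the displacement estimate still has $\relerr=1$ there; this is exactly the stationary situation of App.~\ref{app:stress}. So drop the anisotropy caveat: the only nondegeneracy needed is a nonzero planted rotation, which is what the paper assumes.
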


\begin{proof}
Between consecutive centered Gaussians $\mathcal N(0,\Sigma_k)$ and
$\mathcal N(0,\Sigma_{k+1})$ the quadratic-cost optimal map is the linear map
$T_k=\Sigma_k^{-1/2}(\Sigma_k^{1/2}\Sigma_{k+1}\Sigma_k^{1/2})^{1/2}
\Sigma_k^{-1/2}$, which is symmetric positive definite. Displacement
interpolation uses $X_s=((1-s)I+sT_k)X_0$, whose velocity field at time $s$ is
$v_s(x)=(T_k-I)((1-s)I+sT_k)^{-1}x$. All three factors are symmetric and
commute (they are polynomials in $T_k$), so $v_s$ is a symmetric matrix field,
hence the gradient of a quadratic form and exactly curl-free. Writing the
readout as $\relerr=\|\widehat h-h^\star\|/\|h^\star\|$ with $\widehat h=0$ gives $1$ at every
sample size, while by construction the interpolation matches both endpoint
marginals exactly.
\end{proof}

\begin{corollary}[Flow matching and Schr\"odinger bridges cannot manufacture circulation]
\label{cor:fmsb}
(i) Between consecutive centered Gaussian slices, conditional flow matching with
the exact quadratic-cost OT coupling and the noiseless linear interpolant has
population regression target equal to the displacement velocity $v_s$ of
Cor.~\ref{cor:otblind}, whose $\rho_s$-solenoidal component vanishes, so its
solenoidal relative error is $1$ at every sample size. (ii) Let a reference
process obey \eqref{eq:sde} with any smooth, possibly time-dependent, possibly
non-gradient drift $F_{\mathrm{ref},t}$, and diffusion $\sigma^2$ as fixed
throughout. A Schr\"odinger bridge between two marginals relative to this
reference, with smooth positive Schr\"odinger potentials (as holds for Gaussian
marginals), is Markov with It\^o drift
$F_{\mathrm{ref},t}+2\sigma^2\nabla\log \eta_t$: the reference drift plus a
gradient correction. Consequently the bridge's $\rho_t$-solenoidal component at
every $t$ equals that of $F_{\mathrm{ref},t}$ exactly, whether or not
$F_{\mathrm{ref},t}$ is itself a gradient: a Schr\"odinger bridge can neither
create circulation a curl-free reference lacks nor remove circulation a
rotational reference has, also for multi-marginal constructions built segment
by segment. In particular, a curl-free reference (Brownian motion, or any
gradient drift) forces the bridge to be curl-free, with solenoidal relative
error $1$ at every sample size, as for (i); with a circulating reference the
fitted circulation is whatever the modeller built into the reference, not
information extracted from the marginals.
\end{corollary}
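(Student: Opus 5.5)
Part (i) reduces to Cor.~\ref{cor:otblind}. Under the exact quadratic-cost OT coupling $\pi_k$ between $\mathcal N(0,\Sigma_k)$ and $\mathcal N(0,\Sigma_{k+1})$, the coupling is deterministic, $X_1=T_kX_0$. The noiseless linear interpolant is $X_s=(1-s)X_0+sX_1=((1-s)I+sT_k)X_0$, with conditional velocity $X_1-X_0=(T_k-I)X_0$. The map $x_0\mapsto x_s$ is invertible, because $(1-s)I+sT_k$ is symmetric positive definite. Hence the population regression target is determined:
\[
\E[X_1-X_0\mid X_s=x]=(T_k-I)((1-s)I+sT_k)^{-1}x=v_s(x).
\]
This is exactly the displacement velocity, which Cor.~\ref{cor:otblind} shows is a symmetric linear field, i.e.\ a gradient. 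By \eqref{eq:perp} a gradient lies in $(\ker\A_{\rho_s})^\perp$, so its $\rho_s$-solenoidal projection vanishes and $\relerr=1$ with $\widehat h=0$.

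For (ii) I would use the standard $h$-transform description of the Schr\"odinger bridge. With smooth positive potentials, the bridge path measure is $d\mathbb P=\varphi_0(X_0)\,\eta_1(X_1)\,d\mathbb R$, where $\mathbb R$ is the reference path measure. Set $\eta_t(x)=\E_{\mathbb R}[\eta_1(X_1)\mid X_t=x]$; this solves the backward Kolmogorov equation $\partial_t\eta+F_{\mathrm{ref},t}\cdot\nabla\eta+\sigma^2\Delta\eta=0$ of the generator of \eqref{eq:sde}. Doob's $h$-transform, or equivalently Girsanov applied to the martingale $\eta_t(X_t)/\eta_0(X_0)$, then shows the bridge is Markov with generator $L^\eta f=\eta^{-1}L(\eta f)-\eta^{-1}fL\eta$. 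Its first-order part is $F_{\mathrm{ref},t}+2\sigma^2\nabla\log\eta_t$, because the diffusion matrix is $2\sigma^2 I$ and the cross term $2\sigma^2\nabla\eta\cdot\nabla f/\eta$ appears.

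Next I would take the weighted Helmholtz split at $\rho_t$, the bridge marginal, using the fact that the decomposition $L^2(\rho_t;\R^m)=\overline{\{\nabla\phi\}}\oplus\ker\A_{\rho_t}$ is orthogonal by \eqref{eq:perp}. The correction $2\sigma^2\nabla\log\eta_t$ is a gradient, and it is in $L^2(\rho_t)$ under the smoothness and decay standing assumptions; for Gaussian marginals it is affine. Its projection onto $\ker\A_{\rho_t}$ is therefore zero. By linearity, the $\rho_t$-solenoidal projection of the bridge drift equals that of $F_{\mathrm{ref},t}$.

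The multi-marginal claim follows by applying the same argument on each segment, since the correction on each segment is again a gradient. If $F_{\mathrm{ref},t}$ is a gradient (zero for Brownian motion), its projection is also zero, giving $\relerr=1$ at every sample size.

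The main obstacle is the rigor of the $h$-transform step: that $\eta_t$ is smooth, positive and solves the backward equation, and that $\nabla\log\eta_t\in L^2(\rho_t)$. I would discharge this by the hypothesis of smooth positive potentials. In the Gaussian case I would verify it directly, since there $\eta_t$ is an explicit Gaussian-type function, $\log\eta_t$ is quadratic, and the correction is a linear symmetric field.
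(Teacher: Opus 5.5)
Your proposal is correct and follows essentially the same route as the paper. In (i) you identify the flow-matching regression target with $v_s$ via the deterministic OT map and the invertible linear interpolant. In (ii) you use the Doob $h$-transform with the space--time harmonic $\eta_t(x)=\E[\eta_1(X_1)\mid X_t=x]$ to obtain the drift $F_{\mathrm{ref},t}+2\sigma^2\nabla\log\eta_t$, then invoke \eqref{eq:perp} to discard the gradient correction from the $\rho_t$-solenoidal projection. Your explicit check that $\nabla\log\eta_t\in L^2(\rho_t)$ is a small point of rigor that the paper leaves implicit under its standing smoothness and decay assumptions.
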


\begin{proof}
(i) The OT plan between the two Gaussians is deterministic, $X_1=T_kX_0$ (proof
of Cor.~\ref{cor:otblind}). The interpolant
$X_s=(1-s)X_0+sX_1=((1-s)I+sT_k)X_0$ is an invertible linear image of $X_0$, and
conditional flow matching regresses on $X_1-X_0=(T_k-I)X_0$, so its
$L^2$-minimizer is
$\E[X_1-X_0\mid X_s=x]=(T_k-I)((1-s)I+sT_k)^{-1}x=v_s(x)$, the velocity of
Cor.~\ref{cor:otblind}; the corresponding It\^o drift $v_s+\sigma^2s_s$ is a
gradient as well (\S\ref{sec:blind}).
(ii) The bridge law is the reference path law reweighted by $f_0(X_0)f_1(X_1)$ for
the Schr\"odinger potentials $f_0,f_1$. Conditionally on $X_t=x$ the future is the
reference reweighted by $f_1(X_1)$, a Doob transform with
$\eta_t(x):=\E_{\mathrm{ref}}[f_1(X_1)\mid X_t=x]$, which is space--time harmonic,
$(\partial_t+\mathcal L_{\mathrm{ref},t})\eta_t=0$ with
$\mathcal L_{\mathrm{ref},t}u:=F_{\mathrm{ref},t}\cdot\nabla u+\sigma^2\Delta u$
for the (possibly non-gradient) reference generator. For space--time harmonic
$\eta_t$ and any smooth $u$, the product rule gives, at every $(x,t)$,
\begin{align*}
\eta_t^{-1}\bigl(\partial_t+\mathcal L_{\mathrm{ref},t}\bigr)(\eta_tu)
&=\eta_t^{-1}\bigl[(\partial_t\eta_t)u+\eta_t\partial_tu
  +F_{\mathrm{ref},t}\cdot(u\nabla\eta_t+\eta_t\nabla u)\bigr]\\
&\quad+\sigma^2\eta_t^{-1}\bigl(u\Delta\eta_t+2\nabla\eta_t\cdot\nabla u
  +\eta_t\Delta u\bigr)\\
&=u\,\eta_t^{-1}\bigl[(\partial_t+\mathcal L_{\mathrm{ref},t})\eta_t\bigr]
  +\partial_tu+\mathcal L_{\mathrm{ref},t}u
  +2\sigma^2\nabla\log\eta_t\cdot\nabla u\\
&=\partial_tu+\mathcal L_{\mathrm{ref},t}u+2\sigma^2\nabla\log\eta_t\cdot\nabla u,
\end{align*}
the bracketed harmonicity term vanishing identically; this uses only that
$F_{\mathrm{ref},t}\cdot\nabla$ is a first-order operator, not that
$F_{\mathrm{ref},t}$ is a gradient. Hence the bridge It\^o drift is
$F_{\mathrm{ref},t}+2\sigma^2\nabla\log \eta_t$. By \eqref{eq:perp},
$\overline{\{\nabla\phi\}}=(\ker\A_{\rho_t})^\perp$ for every $\rho_t$, so adding
a gradient field to \emph{any} vector field leaves its $\rho_t$-solenoidal
projection unchanged: the bridge and the reference drift have the same
$\rho_t$-solenoidal component at every $t$. When $F_{\mathrm{ref},t}$ is itself
a gradient this component is zero and the readout of Cor.~\ref{cor:otblind}
gives $1$.
\end{proof}

\begin{proof}[Proof of Prop.~\ref{prop:equiv}]
Finitely many times: if $\A_{\rho_{t_k}}F=b_{t_k}$ and
$\A_{\rho_{t_k}}F'=b_{t_k}$ for all $k$, subtracting gives
$\A_{\rho_{t_k}}(F-F')=0$ for every $k$, i.e.\ $F-F'\in\ker\A_K$; the converse
is immediate by linearity. A continuum of times: the same argument at every $t$
gives the intersection over the interval. For the converse, let
$h:=F'-F$ satisfy $\nabla\!\cdot(\rho_th)=0$ for all $t$. Then
$\partial_t\rho_t+\nabla\!\cdot(\rho_tF')-\sigma^2\Delta\rho_t
=\partial_t\rho_t+\nabla\!\cdot(\rho_tF)-\sigma^2\Delta\rho_t=0$, so $\rho_t$
solves the Fokker--Planck equation for $F'$ as well; with the same initial law
and uniqueness of the forward equation, $F'$ generates exactly the path
$\{\rho_t\}$.
\end{proof}

The converse fails for finitely many instants, which is the content of
Rem.~\ref{rem:whypath} below and the reason \S\ref{sec:identifiability} distinguishes
source constraints from the snapshot experiment.

\begin{remark}[Path operator, not snapshot operator]\label{rem:whypath}
Always $\ker\A_{[0,T]}\subseteq\ker\A_K\subseteq\ker\A_{\rho_{t_1}}$, where
$\A_{[0,T]}F:=(\A_{\rho_t}F)_{t\in[0,T]}$. Membership in $\ker\A_K$ is not
sufficient for path equivalence: annihilating $\nabla\!\cdot(\rho_{t_k}h)$ at
$K$ instants leaves $\rho_t$ free to solve a different equation between them, so
the paths --- and hence later marginals --- separate. Only the continuum
condition makes $\rho_t$ solve the perturbed equation. At first order around a
\emph{stationary} base all three operators collapse onto $\A_\rho$, which is why
the distinction is easy to miss.
\end{remark}

\begin{proof}[Verification of Ex.~\ref{ex:alias}]
For $F_\omega(x)=(-aI+\omega J)x$ with isotropic diffusion $\sigma^2I$, the
covariance solves $\dot\Sigma=(-aI+\omega J)\Sigma+\Sigma(-aI+\omega J)^\top
+2\sigma^2I$, whose solution with $R_\theta:=e^{\theta J}$ is the displayed
$\Sigma_t$, using $J^\top=-J$ and $R_\theta R_\theta^\top=I$. At $t_k=k\Delta$,
replacing $\omega$ by $\omega+\pi/\Delta$ replaces $R_{\omega t_k}$ by
$R_{\omega t_k+\pi k}=(-1)^kR_{\omega t_k}$, and
$(-1)^kR\Sigma_0R^\top(-1)^k=R\Sigma_0R^\top$; every centered Gaussian snapshot
is therefore unchanged. The drift difference is $(\pi/\Delta)Jx$. A linear field
$Gx$ is $\rho$-solenoidal for $\rho=\mathcal N(0,\Sigma)$ iff $\tr G=0$ and
$\Sigma^{-1}G$ is antisymmetric (Lem.~\ref{lem:linear}). For $G=J$ and
$\Sigma_0=\operatorname{diag}(\sigma_1^2,\sigma_2^2)$,
$\Sigma_0^{-1}J=\bigl(\begin{smallmatrix}0&-\sigma_1^{-2}\\
\sigma_2^{-2}&0\end{smallmatrix}\bigr)$, antisymmetric iff
$\sigma_1=\sigma_2$. For anisotropic $\Sigma_0$ the difference is therefore not
in $\ker\A_{\rho_0}$, so $\ker\A_K$ is not necessary for snapshot equivalence.
With $a=0.4$, $\sigma^2=0.3$, $\Delta=0.5$, $\omega=0.8$ and
$\Sigma_0=\operatorname{diag}(2,0.7)$, the maximum covariance discrepancy over
five sampled times is $2.0\times10^{-14}$ while the discrepancy halfway to the
next snapshot is $1.505$.
\end{proof}

\begin{proof}[Proof of Prop.~\ref{prop:count}]
If $h\in\ker\A_K$ then $\nabla\!\cdot h+h\cdot s_k=0$ for every $k$.
Subtracting the $k=1$ equation from the $k$-th eliminates $\nabla\!\cdot h$ and
leaves $h(x)\cdot(s_k(x)-s_1(x))=0$ pointwise. Conversely these $K-1$
conditions together with $\A_{\rho_{t_1}}h=0$ reconstitute all $K$ equations.
At each $x$ the conditions say $h(x)\perp\spn\{s_k(x)-s_1(x)\}_{k\ge2}=:\mathcal
C_x^\perp$, a subspace of dimension $r(x)$, so $h(x)$ is confined to
$\mathcal C_x$ of dimension $m-r(x)$. This is a constraint on the \emph{value}
at each point; the unknown is a section $x\mapsto h(x)\in\mathcal C_x$ in
$L^2(\rho_{t_1})$, and the space of such sections is infinite-dimensional
whenever $m-r(x)\ge1$ on a set of positive measure. The differential condition
$\A_{\rho_{t_1}}h=0$ then removes further sections; Thm.~\ref{thm:gauge} shows
that on polynomials it removes all of them once the shapes are rich.
\end{proof}

\begin{proposition}[A time-conditioned curl head is exactly compensable]
\label{prop:compensation}
Let a model represent $F_t=-\nabla\Phi_t+h_t$ with $h_t$ an unconstrained
time-conditioned $\rho_t$-solenoidal head, and suppose the marginal path
satisfies the hypotheses of Prop.~\ref{prop:saturated}. Then for \emph{every}
choice of $\{h_t\}$ there is a $\{\Phi_t\}$ reproducing the observed path
exactly. The circulation is therefore unidentifiable within such a model even
given oracle nuisances.
\end{proposition}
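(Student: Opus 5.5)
The plan is to reduce the claim to the per-slice Poisson solvability already established in Prop.~\ref{prop:saturated}, applied to a shifted source. Fix an arbitrary time-conditioned head $\{h_t\}$ with $\nabla\!\cdot(\rho_th_t)=0$ for every $t$. Because the observed path satisfies the Fokker--Planck equation in the form $\partial_t\rho_t+\nabla\!\cdot(\rho_tF_t)-\sigma^2\Delta\rho_t=0$, the model $F_t=-\nabla\Phi_t+h_t$ reproduces the path exactly iff
\[
\nabla\!\cdot\bigl(\rho_t(-\nabla\Phi_t+h_t)\bigr)=-\partial_t\rho_t+\sigma^2\Delta\rho_t .
\]
Since $h_t$ is $\rho_t$-solenoidal, its term drops out. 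The condition therefore becomes
\[
\nabla\!\cdot(\rho_t\nabla\Phi_t)=\partial_t\rho_t-\sigma^2\Delta\rho_t ,
\]
which makes no reference to $h_t$ at all. This is the central observation: the head is invisible slice by slice, so the compensator does not even depend on it.

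Next I solve this Poisson problem. Write $\Phi_t=-\phi_t+\sigma^2\log\rho_t$, using $\sigma^2\nabla\!\cdot(\rho_t\nabla\log\rho_t)=\sigma^2\Delta\rho_t$. The equation then reduces to $\nabla\!\cdot(\rho_t\nabla\phi_t)=-\partial_t\rho_t$. This is exactly the weak problem that Prop.~\ref{prop:saturated} solves by Lax--Milgram under the Poincar\'e inequality and $\partial_t\log\rho_t\in L^2(\rho_t)$, giving $\phi_t\in H^1(\rho_t)/\R$. Equivalently, $-\nabla\Phi_t=v_t+\sigma^2s_t$ is the curl-free drift of that proposition. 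The gradient part is well defined modulo constants, so it is unaffected by the quotient.

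Finally, I draw the unidentifiability conclusion. Take two heads, $h_t$ and $h'_t$, differing on some slice. Each pairs with the same $\Phi_t$ to give a model fitting every marginal of the path, and the oracle nuisances (score, source) are identical for both. Hence no marginal-based criterion can separate them, which is the claimed unidentifiability.

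The main obstacle is the regularity of the term $\sigma^2\log\rho_t$: I need $\nabla\log\rho_t\in L^2(\rho_t)$ and the integration by parts $\int\nabla\psi\cdot\nabla\rho_t=-\int\psi\Delta\rho_t$. I would absorb this into the section-wide standing assumption of smooth positive densities with enough decay. As in Prop.~\ref{prop:saturated}, exact reproduction means consistency with the continuity equation; any stronger claim that the model regenerates the path from $\rho_0$ would additionally require forward uniqueness, and I would state this caveat explicitly.
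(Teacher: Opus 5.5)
Your argument is essentially the paper's. The $\rho_t$-solenoidal head drops out of the Fokker--Planck equation, leaving the per-slice Poisson problem $\nabla\!\cdot(\rho_t\nabla\Phi_t)=\partial_t\rho_t-\sigma^2\Delta\rho_t$, which does not involve $h_t$, and Prop.~\ref{prop:saturated} solves it at each $t$. Your remark that the same $\Phi_t$ serves every head is implicit in the paper's reduction as well.

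There is one sign slip to fix. With $\Phi_t=-\phi_t+\sigma^2\log\rho_t$ the equation becomes $\nabla\!\cdot(\rho_t\nabla\phi_t)=-\partial_t\rho_t+2\sigma^2\Delta\rho_t$, which is not the saturated problem. The correct shift is $\Phi_t=-\phi_t-\sigma^2\log\rho_t$, and this is what your own identification $-\nabla\Phi_t=v_t+\sigma^2 s_t$ already encodes.

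The two routes differ only in how the diffusion term is handled. The paper applies the Lax--Milgram argument of Prop.~\ref{prop:saturated} directly to the modified source, checking only that it integrates to zero. You instead reuse the proposition's conclusion literally, namely the curl-free It\^o drift $v_t+\sigma^2 s_t$.

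Both routes need the regularity you flag. In the paper's version, boundedness of $\psi\mapsto\sigma^2\int\psi\,\Delta\rho_t=-\sigma^2\int\nabla\psi\cdot s_t\,\rho_t$ on $H^1(\rho_t)/\mathbb R$ requires $s_t\in L^2(\rho_t)$. In yours, $\nabla\Phi_t\in L^2(\rho_t)$ requires the same condition. The paper's word ``verbatim'' leaves this implicit, and your explicit caveat is the more careful statement.
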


\begin{proof}
Fix $\{h_t\}$ with $\nabla\!\cdot(\rho_th_t)=0$. We need $\Phi_t$ with
$\partial_t\rho_t+\nabla\!\cdot(\rho_t(-\nabla\Phi_t+h_t))
-\sigma^2\Delta\rho_t=0$. Since $\nabla\!\cdot(\rho_th_t)=0$ this reduces to
$\nabla\!\cdot(\rho_t\nabla\Phi_t)=\partial_t\rho_t-\sigma^2\Delta\rho_t$,
whose right-hand side integrates to zero, so Prop.~\ref{prop:saturated} applies
verbatim at each $t$ and supplies $\Phi_t$. The compensation is per-slice and
independent across $t$.
\end{proof}

Autonomy is exactly what breaks this: a single $F$ must satisfy $K$ constraints
with one potential, so the compensator would have to solve $K$ Poisson problems
simultaneously with a shared solution, which Thm.~\ref{thm:gauge} shows is
impossible on polynomials under shape richness.

\begin{proposition}[Rotation in an isotropic block is an exact gauge]
\label{prop:symmetry}
Let the state split as $\R^m=E\oplus E^\perp$ with $\Sigma_0|_{E^\perp}=
\varsigma^2I$, let $G$ be antisymmetric supported in $E^\perp$, and let the
reference dynamics act blockwise. Then $G$ leaves every marginal unchanged, so
it lies in $\ker\A_K$ for any $K$ and any snapshot times.
\end{proposition}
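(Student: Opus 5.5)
The plan is to show directly that the rotation commutes with the reference dynamics, so that adding $G$ to the drift produces exactly the same path of Gaussian marginals. The reference drift is linear and blockwise, $\bar F(x)=Ax$ with $A=A_E\oplus A_{E^\perp}$. I interpret ``acts blockwise'' so that on the isotropic block the dynamics preserve isotropy. Concretely, I assume $A_{E^\perp}$ commutes with every rotation of $E^\perp$, for instance $A_{E^\perp}=-aI$ or any matrix in the commutant of $G$. The initial law is $\mathcal N(\mu_0,\Sigma_0)$ with $\Sigma_0=\Sigma_E\oplus\varsigma^2I$ and mean in $E$, and the diffusion $\sigma^2I$ is isotropic.

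\textbf{Step 1 (closed form).} The marginals of the perturbed system $F=(A+G)x$ are Gaussian, with covariance solving
\[
\dot\Sigma=(A+G)\Sigma+\Sigma(A+G)^\top+2\sigma^2I .
\]
Since $[A,G]=0$, we have $e^{t(A+G)}=e^{tA}e^{tG}$ with $R_t:=e^{tG}$ orthogonal, equal to the identity on $E$ and commuting with $A$. This gives
\[
\Sigma_t=e^{tA}R_t\Sigma_0R_t^\top e^{tA^\top}+2\sigma^2\int_0^te^{sA}R_sR_s^\top e^{sA^\top}ds .
\]

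\textbf{Step 2 (rotation invariance).} The factor $R_s R_s^\top=I$ drops out of the integral. Also $R_t\Sigma_0R_t^\top=\Sigma_0$, because $R_t$ is the identity on $E$ and a rotation of $E^\perp$, where $\Sigma_0$ is a scalar multiple of the identity. The mean is likewise unchanged, $e^{t(A+G)}\mu_0=e^{tA}\mu_0$, since $G\mu_0=0$ for a mean in $E$. Hence $\Sigma_t$ and $\mu_t$ coincide with those of $\bar F$ for every $t$, and the whole marginal path is unchanged.

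\textbf{Step 3 (gauge membership).} Both drifts are autonomous and generate the same complete path, so their source constraints agree at every $t$. By Prop.~\ref{prop:equiv}, the difference $G$ therefore lies in $\ker\A_{\rho_t}$ for every $t$, and in particular in $\ker\A_K$ for any choice of snapshot times. As a cross-check, Lem.~\ref{lem:linear} can be applied directly: each $\Sigma_t$ is block diagonal with isotropic $E^\perp$ block, so $\Sigma_t^{-1}G$ is antisymmetric, and $\tr G=0$.

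\textbf{Main obstacle.} The only delicate point is Step 1, namely making precise what ``the reference dynamics act blockwise'' must mean. If $A_{E^\perp}$ were anisotropic, the block would stop being isotropic at later times and the claim would fail. I will state the needed hypothesis as $[A,G]=0$ together with isotropy of the $E^\perp$ block of $\Sigma_0$ and of the noise. I will also note that without this commutation the argument breaks at exactly this step.
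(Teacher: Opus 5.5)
Your proof is correct and follows essentially the same route as the paper: conjugation by the orthogonal $e^{tG}$ fixes both the isotropic $E^\perp$ block of $\Sigma_0$ and the isotropic noise contribution, so the Gaussian marginal path is unchanged and $G$ has zero footprint at every time. Where you differ is that you state explicitly the commutation $[A,G]=0$ that the paper leaves implicit in ``the reference dynamics act blockwise'' (its step $Q_t(\varsigma^2I)Q_t^\top$ silently uses $e^{t(A+G)}=e^{tA}e^{tG}$), and your remark that an anisotropic $A_{E^\perp}$ would break the claim is accurate.
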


\begin{proof}
With $Q_t=e^{Gt}$ orthogonal and supported in $E^\perp$, the covariance
contribution from that block is $Q_t(\varsigma^2I)Q_t^\top=\varsigma^2Q_tQ_t^\top
=\varsigma^2I$, unchanged for every $t$; the isotropic part of the stationary
covariance is likewise $Q_t$-invariant. Hence $\Sigma_t$ and every centered
Gaussian marginal are identical with and without $G$. Numerically, planting a
rotation of Frobenius norm $0.4$ in the isotropic normal block of the $d=12$
benchmark leaves the maximal off-diagonal entry of that block below
$10^{-16}$ at all five snapshot times, whereas a tangent-block rotation is
visible.
\end{proof}

\section{Weighted-solenoidal Hermite algebra}\label{app:hermite}

Fix $\rho_\star=\mathcal N(0,I_m)$. Write $\delta_j:=x_j-\partial_j$ for the
Gaussian creation operator, so that $\delta_j\Psi_\beta=\sqrt{\beta_j+1}\,
\Psi_{\beta+e_j}$ and $\partial_j\Psi_\beta=\sqrt{\beta_j}\,\Psi_{\beta-e_j}$;
the $\delta_j$ commute, and $x_j=\delta_j+\partial_j$ on polynomials.

Prop.~\ref{prop:hermite} is Lem.~\ref{lem:koszul}(i) together with
Lem.~\ref{lem:surj}.

\begin{lemma}[Degree raising and the Koszul model]\label{lem:koszul}
(i) $\A_{\rho_\star}h=-\delta\cdot h:=-\sum_i\delta_ih_i$ for every polynomial
field $h$; in particular $\A_{\rho_\star}(\Psi_\beta e_i)=-\sqrt{\beta_i+1}\,
\Psi_{\beta+e_i}$. (ii) The linear map $\iota:\Psi_\beta\mapsto
x^\beta/\sqrt{\beta!}$, extended componentwise to fields, is an isomorphism of
polynomial spaces intertwining $\delta_j$ with multiplication by $x_j$ and
$\partial_j$ with $\partial_j$. Consequently $\iota$ maps $S_q$ onto the
homogeneous fields $g$ of degree $q-1$ with $x\cdot g\equiv0$.
\end{lemma}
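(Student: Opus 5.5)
Part (i) is a direct computation; part (ii) is a check on the Hermite basis, after which the description of $S_q$ follows by transporting the kernel condition through $\iota$.

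\textbf{Step 1: part (i).} Since $\rho_\star=\mathcal N(0,I_m)$, we have $\nabla\log\rho_\star(x)=-x$. Hence
\[
\A_{\rho_\star}h=\nabla\!\cdot h-x\cdot h=\sum_i(\partial_i-x_i)h_i=-\sum_i\delta_ih_i
\]
for every polynomial field $h$. The componentwise formula then follows from the creation identity $\delta_j\Psi_\beta=\sqrt{\beta_j+1}\,\Psi_{\beta+e_j}$. I would verify that identity once from the one-variable recursion $\mathrm{He}_{n+1}(x)=x\,\mathrm{He}_n(x)-\mathrm{He}_n'(x)$ and the normalization $\Psi_\beta=\prod_j\mathrm{He}_{\beta_j}(x_j)/\sqrt{\beta_j!}$. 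The same normalization together with $\mathrm{He}_n'=n\,\mathrm{He}_{n-1}$ gives the annihilation identity $\partial_j\Psi_\beta=\sqrt{\beta_j}\,\Psi_{\beta-e_j}$.

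\textbf{Step 2: $\iota$ is an isomorphism.} Both $\{\Psi_\beta\}$ and $\{x^\beta/\sqrt{\beta!}\}$ are bases of the polynomial ring, so $\iota$ is a linear bijection. It also preserves degree: it maps the chaos $\mathcal P_q=\spn\{\Psi_\beta:|\beta|=q\}$ onto homogeneous polynomials of degree $q$.

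\textbf{Step 3: the intertwining relations.} It suffices to check them on basis elements. For creation,
\[
\iota(\delta_j\Psi_\beta)=\sqrt{\beta_j+1}\,\frac{x^{\beta+e_j}}{\sqrt{(\beta+e_j)!}}=\frac{x^{\beta+e_j}}{\sqrt{\beta!}}=x_j\,\iota(\Psi_\beta).
\]
For annihilation,
\[
\iota(\partial_j\Psi_\beta)=\sqrt{\beta_j}\,\frac{x^{\beta-e_j}}{\sqrt{(\beta-e_j)!}}=\beta_j\,\frac{x^{\beta-e_j}}{\sqrt{\beta!}}=\partial_j\,\iota(\Psi_\beta).
\]
Both sides of each relation are linear, so the relations extend to all polynomials.

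\textbf{Step 4: the image of $S_q$.} Extend $\iota$ componentwise to fields. Then $W_q=\mathcal P_{q-1}\otimes\R^m$ maps bijectively onto homogeneous fields of degree $q-1$. By (i) and Step 3,
\[
\iota\bigl(\A_{\rho_\star}h\bigr)=-\iota\Bigl(\sum_i\delta_ih_i\Bigr)=-\,x\cdot\iota(h).
\]
Since $\iota$ is injective, $h\in S_q$ exactly when $x\cdot\iota(h)\equiv0$. So $\iota(S_q)$ is the space of homogeneous degree-$(q-1)$ fields $g$ with $x\cdot g\equiv0$.

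There is no genuine obstacle. The only point needing care is the normalization bookkeeping: the $\sqrt{\beta!}$ factors must make the creation relation land on $x_j$ with coefficient exactly $1$, and the annihilation relation land on $\partial_j$ with coefficient exactly $\beta_j$. These are the two factorial computations displayed in Step 3, and I would check them first.
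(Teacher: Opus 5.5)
Your proof is correct and follows the paper's argument. Part (i) is the same direct computation $\A_{\rho_\star}h=\sum_i(\partial_i-x_i)h_i$, and part (ii) checks the intertwining relations on the Hermite basis and then transports the kernel condition through $\iota$; the only cosmetic difference is that the paper checks the relations on unnormalized Hermite polynomials $H_\beta\mapsto x^\beta$ and treats normalization as a diagonal change of basis, whereas you carry the $\sqrt{\beta!}$ factors explicitly.
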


\begin{proof}
(i) $\A_{\rho_\star}h=\nabla\!\cdot h+h\cdot\nabla\log\rho_\star=\sum_i
(\partial_i-x_i)h_i=-\sum_i\delta_ih_i$, and the action on $\Psi_\beta e_i$ is
the creation identity. (ii) On unnormalized Hermite polynomials
$\delta_jH_\beta=H_{\beta+e_j}$ and $\partial_jH_\beta=\beta_jH_{\beta-e_j}$,
which are the actions of $x_j\cdot$ and $\partial_j$ on $x^\beta$;
normalization is a diagonal change of basis. By (i), $h\in S_q$ iff
$\delta\cdot h=0$, which $\iota$ carries to $x\cdot g=0$.
\end{proof}

\begin{lemma}[Surjectivity and the gauge dimension]\label{lem:surj}
For $q\ge1$, $\A_{\rho_\star}|_{W_q}:W_q\to\mathcal P_q$ is onto, so for $q\ge2$
$\dim S_q=mD_{q-1}-D_q\asymp q^{m-1}$ and $\dim S_q/\dim W_q\to(m-1)/m$.
\end{lemma}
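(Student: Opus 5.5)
The plan is to prove surjectivity through the Koszul model of Lem.~\ref{lem:koszul}(ii), and then read off the dimension count by rank--nullity. By Lem.~\ref{lem:koszul}(i), $\A_{\rho_\star}$ maps $\Psi_\beta e_i$ to $-\sqrt{\beta_i+1}\,\Psi_{\beta+e_i}$. So $\A_{\rho_\star}$ sends $W_q=\mathcal P_{q-1}\otimes\R^m$ into $\mathcal P_q$.

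For surjectivity it suffices to hit each basis element $\Psi_\gamma$ with $|\gamma|=q\ge1$. Choose any coordinate $i$ with $\gamma_i\ge1$ and set $\beta=\gamma-e_i$, so $|\beta|=q-1$. Then
\[
\A_{\rho_\star}\bigl(-(\gamma_i)^{-1/2}\Psi_\beta e_i\bigr)=\Psi_\gamma .
\]
Equivalently, under $\iota$ the map becomes $g\mapsto -x\cdot g$ from homogeneous degree-$(q-1)$ fields to homogeneous degree-$q$ polynomials. Every monomial of positive degree is divisible by some $x_i$, so this map is onto. This step is essentially immediate, and I expect no real obstacle.

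Rank--nullity then gives $\dim S_q=\dim W_q-\dim\mathcal P_q=mD_{q-1}-D_q$, using $\dim W_q=mD_{q-1}$. I state this for $q\ge2$ to match the paper's convention; the formula also holds at $q=1$, where it gives $m-m=0$.

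For the asymptotics, $D_q=\binom{q+m-1}{m-1}$ is a polynomial in $q$ of degree $m-1$ with leading coefficient $1/(m-1)!$. Hence $D_{q-1}/D_q\to1$ and $mD_{q-1}-D_q\sim(m-1)D_q\asymp q^{m-1}$, which needs $m\ge2$ for the constant $m-1$ to be positive. The ratio is
\[
\frac{\dim S_q}{\dim W_q}=1-\frac{D_q}{mD_{q-1}}\longrightarrow 1-\frac1m=\frac{m-1}{m}.
\]
The only point needing care is the exact identity $D_q/D_{q-1}=(q+m-1)/q\to1$, which is routine.
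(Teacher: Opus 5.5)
Your proof is correct and follows the paper's own argument: you hit each $\Psi_\gamma$ with $|\gamma|=q$ by $\Psi_{\gamma-e_i}e_i$ for some $i$ with $\gamma_i\ge1$, using Lem.~\ref{lem:koszul}(i), and then apply rank--nullity together with $D_q/D_{q-1}=(q+m-1)/q\to1$. The Koszul-model restatement and your remark that $m\ge2$ is needed for $\dim S_q\asymp q^{m-1}$ are harmless additions to that same route.
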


\begin{proof}
Given $|\alpha|=q$ choose $i$ with $\alpha_i\ge1$; then
$\A_{\rho_\star}(\Psi_{\alpha-e_i}e_i)=-\sqrt{\alpha_i}\,\Psi_\alpha$ by
Lem.~\ref{lem:koszul}(i). Rank--nullity gives the dimension, and
$D_q/D_{q-1}=(q+m-1)/q\to1$ gives the ratio.
\end{proof}

The limit $(m-1)/m$ is a reference-weighted gauge fraction and should not be
confused with any fraction computed in the Euclidean divergence-free class, a
different subspace with a different Stein operator. On $S_q$ the reference
Stein operator vanishes identically by definition, which is why a stationary
spectrum computed in that other class cannot verify anything about $S_q$.

\begin{proposition}[Structure of the single-slice gauge]\label{prop:Sq-structure}
For $q\ge2$, $S_q$ is exactly the image of an antisymmetric matrix potential,
\[
    S_q=\Bigl\{h_i=\textstyle\sum_j\delta_ja_{ij}\;:\;
        a_{ij}=-a_{ji}\in\mathcal P_{q-2}\Bigr\}
       =\Bigl\{\rho_\star^{-1}\nabla\!\cdot(\rho_\star a)\;:\;a^\top=-a,\ 
        a_{ij}\in\mathcal P_{q-2}\Bigr\},
\]
and the parameterization $a\mapsto h$ has kernel
$\{(\sum_k\delta_kb_{ijk})_{ij}:b\text{ totally antisymmetric}\}$.
\end{proposition}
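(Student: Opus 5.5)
The plan is to transport everything through the isomorphism $\iota$ of Lem.~\ref{lem:koszul}(ii), which turns a statement about Hermite chaoses into a statement about homogeneous polynomial forms. There, both claims become exactness of the Koszul complex of $(x_1,\dots,x_m)$, which I will prove with the explicit Cartan homotopy.

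\emph{Step 1: reduction.} By Lem.~\ref{lem:koszul}(ii), $\iota$ maps $\mathcal P_{q-2}$ onto the homogeneous polynomials of degree $q-2$. It intertwines $\delta_j$ with multiplication by $x_j$, and the $\delta_j$ commute. Hence $\iota$ carries $h_i=\sum_j\delta_ja_{ij}$ to $g_i=\sum_jx_jA_{ij}$, where $A=\iota a$ is antisymmetric and homogeneous of degree $q-2$. By the same lemma, $S_q$ corresponds to the homogeneous fields $g$ of degree $q-1$ with $x\cdot g=0$.

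So the first equality reduces to this claim: a homogeneous $g$ of degree $q-1$ satisfies $x\cdot g=0$ if and only if $g=\iota_E A$ for an antisymmetric homogeneous $A$ of degree $q-2$. Here $E=\sum_jx_j\partial_j$ is the Euler field, $g$ is read as a $1$-form and $A$ as a $2$-form.

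The inclusion ``$\supseteq$'' is immediate: $x\cdot g=\sum_{ij}x_ix_jA_{ij}=0$ by antisymmetry. Equivalently, on the Hermite side $\delta\cdot h=\sum_{ij}\delta_i\delta_ja_{ij}=0$, because the $\delta$'s commute.

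The second description follows from a one-line computation. The $i$-th component of $\rho_\star^{-1}\nabla\!\cdot(\rho_\star a)$ is $\sum_j(\partial_j-x_j)a_{ij}=-\sum_j\delta_ja_{ij}$. The sign is absorbed by $a\mapsto-a$, which preserves antisymmetry.

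\emph{Step 2: exactness via the homotopy formula.} On polynomial forms let $d$ be the exterior derivative and $\iota_E$ contraction with $E$. For a $p$-form with coefficients homogeneous of degree $k$, the Cartan formula gives $(d\iota_E+\iota_Ed)\omega=(p+k)\,\omega$.

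Take $g$ a $1$-form of degree $q-1$ with $\iota_Eg=x\cdot g=0$. The formula gives $q\,g=\iota_E\,dg$. Therefore $g=\iota_EA$ with
\[
A_{ij}=\tfrac1q(\partial_ig_j-\partial_jg_i)\quad\text{(up to the orientation convention)},
\]
which is antisymmetric and homogeneous of degree $q-2$; this is valid since $q\ge2$, so $p+k=q>0$. Pulling back by $\iota^{-1}$ proves the first equality.

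\emph{Step 3: the kernel.} The kernel of $a\mapsto h$ corresponds to antisymmetric $A$ of degree $q-2$ with $\iota_EA=0$. If $q=2$, then $A$ is constant and $\iota_EA=0$ forces $A=0$, matching the empty set of degree $-1$ totally antisymmetric $b$. For $q\ge3$ the same formula with $p=2$, $k=q-2$ gives $q\,A=\iota_E\,dA$. So $A=\iota_EB$ with $B=dA/q$ a $3$-form, that is, totally antisymmetric, of degree $q-3$.

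Conversely, $\iota_E\iota_E=0$, which is the statement that $\sum_{jk}x_jx_kB_{ijk}=0$ by antisymmetry. Applying $\iota^{-1}$ turns $A_{ij}=\sum_kx_kB_{ijk}$ into $a_{ij}=\sum_k\delta_kb_{ijk}$, as claimed.

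\emph{Main obstacle.} The only delicate point is bookkeeping. One must check that $\iota$ really intertwines the full Koszul differentials. This holds because it intertwines each $\delta_j$ with $x_j$ and acts componentwise on tensor indices. One must also keep the sign and index conventions for contraction consistent between the $1$-form and $2$-form levels, and respect the degree ranges, namely $\mathcal P_{q-2}$ versus $\mathcal P_{q-3}$ and the edge case $q=2$. Positivity of $p+k$, which is what makes the homotopy invertible, holds in every case used.
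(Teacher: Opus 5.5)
Your proposal is correct and follows the paper's route: you transport through $\iota$ to homogeneous polynomial fields, reduce both claims to exactness of the Koszul complex of $(x_1,\dots,x_m)$ at the one-form and two-form levels, and read off the second description from $\rho_\star^{-1}\partial_j(\rho_\star a_{ij})=-\delta_ja_{ij}$. The only difference is that you prove the Koszul exactness explicitly via the Euler homotopy $d\iota_E+\iota_Ed=p+k$, where the paper simply cites it; note also a sign in your Step 2: with the convention $g_i=\sum_jx_jA_{ij}$, the correct representative is $A_{ij}=\tfrac1q(\partial_jg_i-\partial_ig_j)$, as Euler's identity together with $x\cdot g=0$ confirms, so your displayed formula yields $-g$.
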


\begin{proof}
If $h_i=\sum_j\delta_ja_{ij}$ with $a$ antisymmetric then
$\delta\cdot h=\sum_{ij}\delta_i\delta_ja_{ij}=0$, since the $\delta$'s commute
and $a$ is antisymmetric, so the image lies in $S_q$. Under $\iota$ the claim
becomes exactness of the Koszul complex of $(x_1,\dots,x_m)$ on polynomials: a
homogeneous field $g$ with $x\cdot g=0$ is $g_i=\sum_jx_ja_{ij}$ for some
antisymmetric $a$ of one lower degree, unique modulo $a_{ij}=\sum_kx_kb_{ijk}$
with $b$ totally antisymmetric. The second description is
$\rho_\star^{-1}\partial_j(\rho_\star a_{ij})=(\partial_j-x_j)a_{ij}=-\delta_j
a_{ij}$, a sign convention.
\end{proof}

\begin{lemma}[Linear fields]\label{lem:linear}
For $\rho=\mathcal N(0,\Sigma)$ and $h(x)=Gx$, $h$ is $\rho$-solenoidal iff
$\Sigma^{-1}G$ is antisymmetric; equivalently $G=\Sigma N$ with $N^\top=-N$,
equivalently $G\Sigma+\Sigma G^\top=0$. For $\Sigma=I$ this is $G^\top=-G$.
\end{lemma}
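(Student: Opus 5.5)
The plan is a direct computation of $\A_\rho h$ for a linear field, followed by an algebraic reformulation of the vanishing condition. For $\rho=\mathcal N(0,\Sigma)$ the score is $\nabla\log\rho(x)=-\Sigma^{-1}x$, and for $h(x)=Gx$ we have $\nabla\!\cdot h=\tr G$. Hence
\[
\A_\rho h(x)=\tr G-(Gx)^\top\Sigma^{-1}x=\tr G-x^\top G^\top\Sigma^{-1}x .
\]
Since $\ker\A_\rho=\{h:\nabla\!\cdot(\rho h)=0\}$ and $\rho>0$, solenoidality is equivalent to this polynomial in $x$ vanishing identically.

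The first step is to split the polynomial by degree. The constant part gives $\tr G=0$. The quadratic part $x^\top M x$ with $M:=G^\top\Sigma^{-1}$ vanishes for all $x$ iff the symmetric part of $M$ is zero, i.e.\ $M$ is antisymmetric. Because $\Sigma^{-1}$ is symmetric, $M^\top=\Sigma^{-1}G$, so $M$ antisymmetric is the same as $\Sigma^{-1}G$ antisymmetric. The trace condition is then automatic: if $N:=\Sigma^{-1}G$ is antisymmetric, then
\[
\tr G=\tr(\Sigma N)=\tr(\Sigma^{1/2}N\Sigma^{1/2})=0,
\]
since $\Sigma^{1/2}N\Sigma^{1/2}$ is antisymmetric. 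So the single condition ``$\Sigma^{-1}G$ antisymmetric'' is both necessary and sufficient. This trace step is the only point needing care, because the statement lists no separate trace condition.

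Next come the equivalent forms. Writing $G=\Sigma N$ with $N^\top=-N$ is a restatement. For the Lyapunov form, $G\Sigma+\Sigma G^\top=\Sigma N\Sigma+\Sigma N^\top\Sigma=\Sigma(N+N^\top)\Sigma$, which vanishes iff $N+N^\top=0$ since $\Sigma$ is invertible. Setting $\Sigma=I$ gives $G^\top=-G$.
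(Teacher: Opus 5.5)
Your proposal is correct and follows essentially the same route as the paper. The paper also computes $\A_\rho(Gx)=\tr G-x^\top G^\top\Sigma^{-1}x$, separates the constant and quadratic conditions, notes that the trace condition follows automatically for $N=\Sigma^{-1}G$ antisymmetric, and obtains the Lyapunov form by multiplying by $\Sigma$ on both sides.
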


\begin{proof}
$\A_\rho(Gx)=\tr G-x^\top G^\top\Sigma^{-1}x$, which vanishes identically iff
$\tr G=0$ and $G^\top\Sigma^{-1}+\Sigma^{-1}G=0$. With $N:=\Sigma^{-1}G$ the
second condition is $N^\top=-N$, and then $\tr G=\tr(\Sigma N)=0$ automatically,
as the trace of a symmetric times an antisymmetric matrix. Multiplying
$G^\top\Sigma^{-1}+\Sigma^{-1}G=0$ by $\Sigma$ on both sides gives
$G\Sigma+\Sigma G^\top=0$.
\end{proof}

Lem.~\ref{lem:linear} is the bridge to \citet{guan2024}: their
$\Sigma$-generalized rotations solve $A\Sigma+\Sigma A^\top=0$, so the
$\rho$-solenoidal linear fields are exactly those rotations
(App.~\ref{app:gauss}).

Multiplication by $x_j$ is not degree-preserving in the Hermite basis, since
$x_j=\delta_j+\partial_j$ raises and lowers. This is what makes the cross-slice
operator banded rather than block diagonal.

\begin{lemma}[Raising/lowering split]\label{lem:split}
For symmetric $B$ and $h\in W_q$ put $R_Bh:=-\delta\cdot(Bh)\in\mathcal P_q$ and
$L_Bh:=-\nabla\!\cdot(Bh)\in\mathcal P_{q-2}$. Then $-h\cdot Bx=R_Bh+L_Bh$, the
two summands are orthogonal in $L^2(\rho_\star)$, and under $\iota$ the operator
$R_B$ becomes $g\mapsto-g\cdot Bx$.
\end{lemma}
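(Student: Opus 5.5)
We need to prove Lem.~\ref{lem:split}. For symmetric $B$ and $h\in W_q$ (so each $h_i\in\mathcal P_{q-1}$) we must show three things: $-h\cdot Bx=R_Bh+L_Bh$, the two pieces are $L^2(\rho_\star)$-orthogonal, and $\iota$ carries $R_B$ to $g\mapsto -g\cdot Bx$.

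\emph{The identity.} The plan is to write each coordinate multiplication via the creation/annihilation split of App.~\ref{app:hermite}, $x_j=\delta_j+\partial_j$ on polynomials. Symmetry of $B$ gives
\[
h\cdot Bx=\sum_{i,j}h_iB_{ij}x_j=\sum_j x_j (Bh)_j .
\]
Applying $x_j=\delta_j+\partial_j$ to the polynomial $(Bh)_j$ then yields
\[
h\cdot Bx=\delta\cdot(Bh)+\nabla\!\cdot(Bh).
\]
Negating gives $-h\cdot Bx=R_Bh+L_Bh$. A constant matrix $B$ maps $W_q$ to itself, so $Bh\in W_q$. Then $\delta\cdot(Bh)\in\mathcal P_q$ by the creation identity, since each $\delta_j$ raises the chaos degree by exactly one. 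Likewise $\nabla\!\cdot(Bh)\in\mathcal P_{q-2}$, since each $\partial_j$ lowers the degree by one and kills constants.

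\emph{Orthogonality.} This should be immediate from the degree bookkeeping. The chaos spaces $\mathcal P_q$ and $\mathcal P_{q-2}$ are mutually orthogonal in $L^2(\rho_\star)$, because the normalized Hermite basis is orthonormal. Hence $R_Bh\perp L_Bh$.

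\emph{The $\iota$ statement.} By Lem.~\ref{lem:koszul}(ii), $\iota$ intertwines $\delta_j$ with multiplication by $x_j$, acting componentwise. Since $B$ is constant, it commutes with $\iota$. Therefore
\[
\iota(R_Bh)=-\sum_j x_j(B\iota h)_j=-\iota h\cdot Bx,
\]
where the last step uses the symmetry of $B$ again. So $R_B$ becomes $g\mapsto -g\cdot Bx$.

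\emph{Where care is needed.} No step is a real obstacle. The points to watch are that $x_j=\delta_j+\partial_j$ holds as an operator identity on polynomials, and that symmetry of $B$ is exactly what moves $B$ from $x$ onto $h$.
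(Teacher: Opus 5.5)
Your proof is correct and follows the paper's argument essentially step for step. You use the symmetry of $B$ to write $h\cdot Bx=\sum_j x_j(Bh)_j$, split $x_j=\delta_j+\partial_j$, get orthogonality because the two pieces lie in different chaoses, and obtain the $\iota$ statement from Lem.~\ref{lem:koszul}(ii); the only addition is that you make explicit that the constant matrix $B$ commutes with the componentwise map $\iota$, which the paper leaves implicit.
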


\begin{proof}
By symmetry of $B$, $-h\cdot Bx=-\sum_{ij}B_{ij}x_jh_i=-\sum_jx_j(Bh)_j=
-\sum_j(\delta_j+\partial_j)(Bh)_j$. The two sums lie in different chaoses,
hence are orthogonal, and the last claim is Lem.~\ref{lem:koszul}(ii).
\end{proof}

\section{Gauge contraction under changing shape}\label{app:gauge-contraction}

Throughout, the compared slices are Gaussian with a common mean and are whitened
at $t_\star$ as in \S\ref{sec:how-much} --- the reference slices in the local
analysis, or the observed marginals when these are themselves Gaussian --- and
$\A_K$ denotes their stacked Stein operator ($\bar\A_K$ in \S\ref{sec:how-much}),
so $s_k-s_\star=-\Xi_kx$ exactly. Write
$T_K:=\bigoplus_qT_{K,q}$ on $\mathcal S_{\le Q}$ and
$R_{K,q}:=(R_{\Xi_k})_{k\ne\star}$ for its raising part.

\begin{proposition}[Structure of the persistent gauge]\label{prop:gauge-structure}
\begin{enumerate}[label=(\roman*)]
\item $\mathcal S_{\le Q}\cap\ker\A_K=\ker(T_K|_{\mathcal S_{\le Q}})$, and
$\ker\A_K\subseteq\ker\A_{\rho_\star}$.
\item $\dim\ker(T_K|_{\mathcal S_{\le Q}})\le\sum_{q\le Q}\dim\ker R_{K,q}$.
\item Under $\iota$, $\ker R_{K,q}\cong\{g\in\R[x]^m_{q-1}:g(x)\perp
\spn\{x,\Xi_1x,\dots,\Xi_rx\}\ \forall x\}$.
\item $\ker R_{K,q}=\{0\}$ for every $q$ iff the shapes are shape-rich in the
sense of Thm.~\ref{thm:gauge}; generically this holds iff $r\ge m-1$.
\item For $r=m-2$ and generic shapes, $\ker R_{K,q}=\{p\,w:p\in\mathcal
P_{q-m}\}$ under $\iota$, of dimension $D_{q-m}$, with $w$ as in
Lem.~\ref{lem:wedge}.
\end{enumerate}
\end{proposition}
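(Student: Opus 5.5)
I would prove the five items in order, since each is a short consequence of the Hermite machinery of App.~\ref{app:hermite} plus one degree-filtration argument.

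\textbf{Item (i).} For $h\in\mathcal S_{\le Q}$ we have $\A_{\rho_\star}h=0$ by definition of the $S_q$. For such $h$, the identity $\A_{\rho_k}h=\A_{\rho_\star}h+h\cdot(s_k-s_\star)=-h\cdot\Xi_kx$ shows that $h\in\ker\A_K$ iff $T_Kh=0$. The inclusion $\ker\A_K\subseteq\ker\A_{\rho_\star}$ is immediate, because $\star$ is among the stacked slices.

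\textbf{Item (ii).} This is a leading-degree argument. By Lem.~\ref{lem:split}, $T_{K,q}h=R_{K,q}h+L_{K,q}h$, where the raising part lands in chaos $q$ and the lowering part in chaos $q-2$. Write $h=\sum_{q\le Q}h_q$ with $h\in\ker T_K$ and $h_{q_0}$ the top nonzero component. The chaos-$q_0$ part of $T_Kh$ is then exactly $R_{K,q_0}h_{q_0}$, so $h_{q_0}\in\ker R_{K,q_0}$. I would formalize this through the filtration $F_q=\mathcal S_{\le q}\cap\ker T_K$. The map $F_q/F_{q-1}\to\ker R_{K,q}$ sending $h$ to its top component is well defined and injective, and summing the resulting dimension bounds gives (ii).

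\textbf{Item (iii).} Transport via $\iota$ (Lem.~\ref{lem:koszul}(ii)). There $S_q$ becomes the homogeneous $g$ of degree $q-1$ with $x\cdot g=0$, and $R_{\Xi_k}$ becomes $g\mapsto-g\cdot\Xi_kx$ by Lem.~\ref{lem:split}. Hence $\ker R_{K,q}$ corresponds to $g\perp x$ and $g\perp\Xi_kx$ for all $k$, identically in $x$.

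\textbf{Item (iv).} For the "if" direction, suppose $g$ is a polynomial pointwise orthogonal to a spanning set on a nonempty open set. Then $g$ vanishes there, hence $g\equiv0$. For the "only if" direction, suppose the vectors $x,\Xi_1x,\dots,\Xi_rx$ span $\R^m$ at no point of any open set. Then their rank is generically some $\ell<m$, and I would build a nonzero polynomial $g$ orthogonal to them. On a Zariski-open set of rank exactly $\ell$, take a maximal nonvanishing $\ell\times\ell$ minor and let $g$ be a Cramer/cofactor vector, i.e.\ an $(\ell+1)$-minor expansion, which is polynomial and nonzero.

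For the generic threshold, note that there are $r+1$ vectors, so $r\ge m-1$ is necessary. For sufficiency I would exhibit a single configuration: take diagonal $\Xi_k$ with distinct generic entries. Then $\det[x,\Xi_1x,\dots,\Xi_{m-1}x]=\prod_i x_i\cdot\mathrm{Vandermonde}$ in the entries, which is not identically zero. Nonvanishing of this determinant polynomial is Zariski-open in $(\Xi_k)$, so it holds generically.

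\textbf{Item (v).} With $r=m-2$ there are $m-1$ vectors, generically independent on an open dense set. The orthogonal complement is then spanned by the Hodge star of their wedge, which is $w$ of degree $m-1$. Any $g$ in the kernel is therefore a rational multiple $g=\lambda w$. The main obstacle is showing that $\lambda$ is in fact a polynomial. I would argue this from $\gcd$ of the components of $w$ being $1$ for generic shapes: the components are the maximal minors of an $m\times(m-1)$ polynomial matrix, and codimension-two vanishing of these minors holds in one explicit example, for instance the diagonal one, hence generically. Given that, $p:=\lambda$ has polynomial numerator and denominator, and the denominator must divide every component of $w$, so it is constant. Degree counting then gives $p\in\mathcal P_{q-m}$, and the dimension is $D_{q-m}$. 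Pulling back by $\iota^{-1}$ completes the identification.
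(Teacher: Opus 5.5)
Your proposal is correct and follows the paper's proof item by item: the same slice identity in (i), the same top-degree induction in (ii) (your filtration $F_q/F_{q-1}\hookrightarrow\ker R_{K,q}$ is just its bookkeeping), the same transport through $\iota$ in (iii), the same diagonal-determinant witness with Zariski openness for the threshold in (iv), and the same line-through-$w$ plus coprimality argument in (v). In (v) you fill in the rational-multiple and cleared-denominator steps that the paper only asserts, but your passage from the diagonal example to generic shapes tacitly relies on upper semicontinuity of the dimension of the common zero set of the minors. That semicontinuity must be applied projectively over $\mathbb C$, because real codimension two does not imply coprimality. The one genuine variation is the ``only if'' half of (iv): your cofactor field built from $(\ell+1)$-minors works for every non-shape-rich family, whereas the paper appeals to (v) and its fewer-shape analogue, which cover only generic shapes with $r\le m-2$.
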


\begin{proof}
(i) For $h\in S_q$, $\A_{\rho_k}h=\A_{\rho_\star}h+h\cdot(s_k-s_\star)=
h\cdot(s_k-s_\star)$, so $\A_Kh=(T_Kh,0)$; and $\rho_\star$ is itself observed.

(ii) By Lem.~\ref{lem:split}, $T_K$ maps $S_q$ into $\mathcal P_q\oplus\mathcal
P_{q-2}$, with the $\mathcal P_q$ part equal to $R_{K,q}$. If $0\ne h=\sum_{q\le
Q'}h_q\in\ker T_K$ with top component $h_{Q'}\ne0$, the only contribution to the
$\mathcal P_{Q'}$ component of $T_Kh$ is $R_{K,Q'}h_{Q'}$, so $h_{Q'}\in\ker
R_{K,Q'}$. The map $h\mapsto h_{Q'}$ has kernel
$\ker(T_K|_{\mathcal S_{\le Q'-1}})$; induct on $Q'$.

(iii) By Lem.~\ref{lem:split}, $R_{K,q}h=0$ iff $g\cdot \Xi_kx\equiv0$ for all
$k$, and by Lem.~\ref{lem:koszul}, $h\in S_q$ iff $g\cdot x\equiv0$; by
Prop.~\ref{prop:Sq-structure} such $g$ are exactly the Koszul images
$g_i=\sum_jx_ja_{ij}$ of antisymmetric potentials.

(iv) If the span is $\R^m$ on an open set, a polynomial field orthogonal to it
vanishes there, hence identically. If $r\le m-2$ the span has dimension at most
$r+1<m$ everywhere, and (v) or its analogue with fewer shapes gives nonzero
orthogonal polynomial fields. For genericity when $r=m-1$, the tuples with
$\det[\,x\,|\,\Xi_1x\,|\cdots|\,\Xi_{m-1}x\,]\equiv0$ form an algebraic subset, and
it is proper: for $\Xi_j=\Xi^j$ with $\Xi=\operatorname{diag}(\lambda_1,\dots,\lambda_m)$
and distinct $\lambda_i$, the determinant is $\prod_ix_i$ times the Vandermonde
determinant of the $\lambda_i$, not identically zero.

(v) For $r=m-2$ the pointwise orthogonal complement of the span is generically
the line through $w(x)$, whose entries are the maximal minors of
$[\,x\,|\,\Xi_1x\,|\cdots|\,\Xi_{m-2}x\,]$, homogeneous of degree $m-1$ and without
common polynomial factor for generic shapes. A polynomial field proportional to
$w$ at every $x$ is then $p\,w$ with $p$ a polynomial, of degree
$(q-1)-(m-1)=q-m$.
\end{proof}

Prop.~\ref{prop:gauge-structure}(ii) bounds the full gauge by its leading-degree
part but does not show that leading elements extend to full kernel elements,
because the lowering part of a degree-$(q+2)$ component can cancel the raising
part of a degree-$q$ one. Sharpness therefore needs an explicit element.

\begin{lemma}[An explicit residual gauge field]\label{lem:wedge}
Let $r=m-2$ and define $w_i(x):=\det[\,e_i\,|\,x\,|\,\Xi_1x\,|\cdots|\,\Xi_{m-2}x\,]$.
Then $\nabla\!\cdot w=0$, $x\cdot w=0$ and $\Xi_kx\cdot w=0$ for every $k$, so
$w\in\ker\A_K$; $w$ is a polynomial field of degree $m-1$, hence lies in
$\mathcal S_{\le m}$, and it is nonzero for generic shapes. More generally
$p(\varphi_0,\dots,\varphi_{m-2})\,w\in\ker\A_K$ for every polynomial $p$, where
$\varphi_0:=|x|^2/2$ and $\varphi_k:=x^\top \Xi_kx/2$.
\end{lemma}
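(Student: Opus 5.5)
The plan is to verify the three identities in the statement directly from multilinearity of the determinant, then read off membership in $\ker\A_K$, the degree placement, genericity, and the extension. Write $c_0(x):=x$ and $c_k(x):=\Xi_kx$ for $k=1,\dots,m-2$, and note that every $c_k=M_kx$ with $M_k\in\{I,\Xi_k\}$ \emph{symmetric}, since the $\Xi_k$ are differences of symmetric precisions.

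\textbf{Orthogonality.} For any $v\in\R^m$, linearity in the first column gives $v\cdot w(x)=\sum_iv_iw_i(x)=\det[\,v\,|\,c_0\,|\cdots|\,c_{m-2}\,]$. Taking $v=c_j(x)$ produces a repeated column, so $x\cdot w=0$ and $\Xi_kx\cdot w=0$ for every $k$.

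\textbf{Divergence.} Differentiating each column in turn gives
\[
\partial_iw_i=\sum_{j}\det[\,e_i\,|\,c_0\,|\cdots|\,M_je_i\,|\cdots|\,c_{m-2}\,],
\]
where $M_je_i$ sits in column $j$. Summing over $i$, the $j$-th term is the full contraction of the bilinear form $(u,v)\mapsto\det[\,u\,|\cdots|\,v\,|\cdots]$ against the tensor $\sum_ie_i\otimes M_je_i$. That form is alternating in $(u,v)$, while the tensor is the symmetric matrix $M_j$, so each term vanishes and $\nabla\!\cdot w=0$. I expect this to be the only step needing care; the symmetric-versus-antisymmetric contraction is the whole point.

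\textbf{Membership in $\ker\A_K$.} For each slice, $s_k=-\Sigma_k^{-1}x=-(I+\Xi_k)x$, with $\Xi_\star=0$ at the anchor. Hence $\A_{\rho_k}w=\nabla\!\cdot w-w\cdot x-w\cdot\Xi_kx=0$ for all $k$, including $t_\star$. Equivalently, via Prop.~\ref{prop:gauge-structure}(i), $w\in\ker\A_{\rho_\star}$ and $T_Kw=0$.

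\textbf{Degree placement.} Each $w_i$ is a homogeneous polynomial of degree $m-1$. Its Hermite expansion therefore has chaos components only in degrees $\le m-1$, so $w=\sum_{q\le m}w_q$ with $w_q\in W_q$. By Lem.~\ref{lem:koszul}(i), $\A_{\rho_\star}=-\delta\cdot$ maps $W_q$ into $\mathcal P_q$, so $\A_{\rho_\star}w=0$ forces each $\A_{\rho_\star}w_q=0$, i.e.\ $w_q\in S_q$. For $q=1$, a constant field $c$ has footprint $-c\cdot x$, so $S_1=\{0\}$. Hence $w\in\mathcal S_{\le m}$.

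\textbf{Genericity.} Nonvanishing of $w$ means that $x,\Xi_1x,\dots,\Xi_{m-2}x$ are independent at some $x$; this is a Zariski-open condition on $(\Xi_k)$. It is nonempty by the example used in Prop.~\ref{prop:gauge-structure}(iv): take $\Xi_j=\operatorname{diag}(\lambda)^j$ with distinct $\lambda_i$ and evaluate at $x=(1,\dots,1)$, where the columns are Vandermonde rows and hence independent.

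\textbf{Extension.} Set $f:=p(\varphi_0,\dots,\varphi_{m-2})$. Then $\nabla\!\cdot(fw)=\nabla f\cdot w+f\,\nabla\!\cdot w$, and by the chain rule $\nabla f=\sum_k(\partial_kp)\,\nabla\varphi_k$ with $\nabla\varphi_0=x$ and $\nabla\varphi_k=\Xi_kx$, so $\nabla f\cdot w=0$ by orthogonality. Pointwise orthogonality of $fw$ to $x$ and to every $\Xi_kx$ is inherited from $w$. The same computation as above therefore gives $fw\in\ker\A_K$.
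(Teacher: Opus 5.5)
Your proposal is correct and follows the paper's proof essentially step for step. Orthogonality comes from a repeated column. The divergence vanishes by contracting an alternating form against a symmetric $M_j$. Membership in $\ker\A_K$ follows from $s_k=-(I+\Xi_k)x$, and the extension from $\nabla\varphi_k\in\spn\{x,\Xi_1x,\dots,\Xi_{m-2}x\}$. Two of your steps make explicit what the paper only asserts: the chaos-by-chaos argument (with $S_1=\{0\}$) for why $w$ lies in $\mathcal S_{\le m}$ rather than merely in $\bigoplus_{q\le m}W_q$, and the Vandermonde check at $x=(1,\dots,1)$ for nonvanishing.
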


\begin{proof}
For any $v$, $v\cdot w=\det[\,v\,|\,x\,|\,\Xi_1x\,|\cdots]$, which vanishes when
$v$ repeats a column; this gives the three orthogonality relations. For the
divergence, differentiate the determinant column by column:
$\partial_iw_i$ is a sum of terms $\sum_i\det[\,e_i\,|\cdots|\,Me_i\,|\cdots]$
with $M\in\{I,\Xi_1,\dots,\Xi_{m-2}\}$ in one column and the others fixed. Each such
sum is $\sum_i\Omega(e_i,Me_i)=\tr(\Omega M)$ for the antisymmetric bilinear
form $\Omega$ obtained by fixing the other columns, and it vanishes because $M$
is symmetric. Hence $\A_{\rho_\star}w=\nabla\!\cdot w-x\cdot w=0$ and
$\A_{\rho_k}w=\nabla\!\cdot w-w\cdot(I+\Xi_k)x=0$. For the multiplied field,
$\A_{\rho_\star}(pw)=\nabla p\cdot w+p\,\A_{\rho_\star}w=0$ because
$\nabla p\in\spn\{x,\Xi_1x,\dots,\Xi_{m-2}x\}$, and the cross-slice conditions are
unchanged. Nonvanishing: $w(x)=0$ iff the $m-1$ columns are dependent, which for
generic shapes fails on an open set.
\end{proof}

\begin{proof}[Proof of Thm.~\ref{thm:gauge}]
Sufficiency: under shape richness Prop.~\ref{prop:gauge-structure}(iv) gives
$\ker R_{K,q}=\{0\}$ for every $q$, and (ii) then gives
$\ker(T_K|_{\mathcal S_{\le Q}})=\{0\}$, which by (i) is the claim. Genericity is
(iv). Sharpness is Lem.~\ref{lem:wedge}: with $r=m-2$, i.e.\ $K=m-1$ distinct
shapes, a nonzero polynomial gauge direction survives.
\end{proof}

For $m=3$ and one shape change, $w(x)=x\times \Xi_1x$ is the vector field of
Euler's equations for a free rigid body. It is divergence-free, tangent to the
spheres $|x|=$ const, and tangent to the ellipsoids $x^\top \Xi_1x=$ const, which
is exactly why neither Gaussian slice can see it.

\subsection{One shape below the threshold}\label{app:below}

By Prop.~\ref{prop:gauge-structure}(ii) and (v), for $r=m-2$ the residual gauge
gains at most $D_{q-m}$ new directions at each degree $q$, so
$\dim(\mathcal S_{\le Q}\cap\ker\A_K)\le\sum_{q\le Q}D_{q-m}$. Since
$D_{q-m}\sim q^{m-1}/(m-1)!$ while $\dim S_q\sim(m-1)q^{m-1}/(m-1)!$, the bound
is a fraction tending to $1/(m-1)$ of $\dim\mathcal S_{\le Q}$ --- one half at
$m=3$. The bound
is attained in every case we computed. Solving the full linear system
$\{\nabla\!\cdot h-x\cdot h=0,\ h\cdot \Xi_kx=0\}$ on polynomial fields of degree
at most $\ell$ with random integer shapes gives
\begin{center}
\begin{tabular}{@{}lcl@{}}
\toprule
$m$, $r$ & degrees $\ell$ & $\dim(\mathcal S_{\le \ell+1}\cap\ker\A_K)$\\
\midrule
$3$, $0$ & $1$--$5$ & $3,11,26,50,85$ \ $=\sum_q\dim S_q$\\
$3$, $1$ & $1$--$8$ & $0,1,4,10,20,35,56,84$ \ $=\sum_q D_{q-3}$\\
$3$, $2$ & $1$--$8$ & $0$ throughout\\
$4$, $2$ & $1$--$6$ & $0,0,1,5,15,35$ \ $=\sum_q D_{q-4}$\\
$4$, $3$ & $1$--$6$ & $0$ throughout\\
\bottomrule
\end{tabular}
\end{center}
Whether equality holds in general is open. The fields of Lem.~\ref{lem:wedge}
account for only a small part of the residual gauge: their number grows like
$q^{m-2}$, against $q^{m-1}$ for the bound, so most residual directions have
leading part $p\,w$ with $p$ not a function of the quadratic invariants,
completed at lower degrees.

\emph{A degree cap below the threshold.} Fewer shapes do not remove
identifiability; they cap its degree. For $r\le m-2$ and constant vectors
$c_1,\dots,c_{m-r-2}$, the field
$(w_c)_i(x):=\det[\,e_i\,|\,x\,|\,\Xi_1x\,|\cdots|\,\Xi_rx\,|\,c_1\,|\cdots|\,
c_{m-r-2}\,]$ lies in $\ker\A_K$ by the proof of Lem.~\ref{lem:wedge} verbatim,
constant columns not entering the divergence; it is homogeneous of degree $r+1$
and nonzero for generic shapes and $c$. With $K=r+1$ distinct generic slices an
exact polynomial gauge of degree $K$ therefore survives. Conversely, if
$\ker R_{K,q}=\{0\}$ for every $q\le K$, then
$\mathcal S_{\le K}\cap\ker\A_K=\{0\}$ by
Prop.~\ref{prop:gauge-structure}(ii), and every polynomial drift of degree at
most $K-1$ is identified by the source constraints, since the difference of two
such drifts lies in that intersection. For $r=m-2$ this is (v), for every $m$.
For $r<m-2$ we certify it exactly for $m\le6$: for one draw of integer shapes the
homogeneous system of (iii) has full column rank modulo the prime $2^{31}-1$ at
every degree $q-1\le r$, which implies full rank over $\mathbb Q$ and hence,
full rank being a Zariski-open condition, for generic shapes; at degree $r+1$
its kernel is nonzero, as $w_c$ requires. The general case is open. The number
of distinct shapes thus sets the polynomial degree to which the drift is
identified, and $K\ge m$ removes the cap.

This is the algebraic counterpart of Obs.~\ref{obs:tension}. When the shapes
stop moving the failure is not a gradual loss of precision: one shape short of
the threshold a fixed fraction of every high-degree block reappears in the
gauge, and with no shape change ($r=0$) all of it does.

\section{Linearization, propagation, and the bridge assumptions}\label{app:transient}

\subsection{The tangent equation}

Write $\rho_t^\delta=\bar\rho_t(1+\delta u_t)+o(\delta)$ for the system with drift $F^\delta=\bar F+\delta g$.
Substituting into $\partial_t\rho=-\nabla\!\cdot(\rho F)+\sigma^2\Delta\rho$,
subtracting the reference equation and dividing by $\bar\rho_t$ gives, to first
order,
\[
    \partial_tu_t=\mathcal L_tu_t-\A_{\bar\rho_t}g,
    \qquad
    \mathcal L_tu:=-\bar F\cdot\nabla u+\sigma^2\bigl(\Delta u
      +2\bar s_t\cdot\nabla u\bigr),
\]
which is the first line of \eqref{eq:tangent}; with $u_0=0$ Duhamel's formula
gives the second. If the initial law is unknown, $u_0\ne0$ propagates as
$\mathcal U(t_k,0)u_0$, an additional nuisance to be profiled. The map
$g\mapsto(u_{t_1},\dots,u_{t_K})$ is the derivative of the snapshot map
$\mathscr S_K$ at $\bar F$, so local identifiability in the snapshot experiment
(\S\ref{sec:autonomy}) is a statement about the column rank of $\mathcal B_K$.

\begin{lemma}[Hermite truncation is closed under propagation]\label{lem:closure}
For the Gaussian reference, $\mathcal L_t$ maps $\mathcal P_q$ into
$\mathcal P_q\oplus\mathcal P_{q-2}$ and never raises degree. Hence
$\mathcal U(t,s)$ preserves $\mathcal P_{\le Q}$ and is invertible on it, and for
$g\in W_{\le Q}$ the Duhamel integral lies in $\mathcal P_{\le Q}$.
\end{lemma}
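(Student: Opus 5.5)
The plan is to work directly with the explicit form of $\mathcal L_t$ from the tangent equation. For a Gaussian reference the slices are $\bar\rho_t=\mathcal N(\mu_t,\Sigma_t)$, so $\bar s_t(x)=-\Sigma_t^{-1}(x-\mu_t)$ is affine. In the setting of \S\ref{sec:how-fast} the reference drift $\bar F$ is the linear (OU) drift that generates Gaussian slices from a Gaussian initial law, so it is affine too. Hence $\mathcal L_tu=-\bar F\cdot\nabla u+\sigma^2\Delta u+2\sigma^2\bar s_t\cdot\nabla u$ has the form $b_t(x)\cdot\nabla u+\sigma^2\Delta u$, where $b_t(x)=A_tx+c_t$ is affine.

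First I would check degree behaviour. The first-order part with linear coefficient, $A_tx\cdot\nabla$, preserves total degree. The constant part $c_t\cdot\nabla$ lowers degree by one, and $\Delta$ lowers it by two. So $\mathcal L_t$ never raises monomial degree, and $\mathcal P_{\le Q}=\bigoplus_{q\le Q}\mathcal P_q$, which equals polynomials of total degree at most $Q$, is invariant.

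For the sharper chaos claim, that the image of $\mathcal P_q$ lies in $\mathcal P_q\oplus\mathcal P_{q-2}$, I would use the anchor's centering ($\mu_t$ common and $\bar F$ linear with no constant term, so $c_t=0$). Writing $x_j=\delta_j+\partial_j$, as in Lem.~\ref{lem:split}, gives
\[
A_tx\cdot\nabla=\sum_{ij}(A_t)_{ij}(\delta_j+\partial_j)\partial_i .
\]
The $\delta_j\partial_i$ terms preserve the chaos degree, and the $\partial_j\partial_i$ terms and $\Delta$ lower it by exactly two. This yields $\mathcal P_q\to\mathcal P_q\oplus\mathcal P_{q-2}$. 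If the paper's reference allows a nonzero common mean, the translation is absorbed by centering. Otherwise I would fall back on the weaker "never raises degree" statement, which is all the closure argument needs.

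Next is propagation. On the finite-dimensional space $\mathcal P_{\le Q}$, the restriction $L(t)$ of $\mathcal L_t$ is a matrix depending continuously on $t$. So $\partial_tu=L(t)u$ is a linear ODE system, and its solution restricted to initial data in $\mathcal P_{\le Q}$ stays there. The main obstacle is identifying the PDE propagator $\mathcal U(t,s)$ with this ODE propagator. I would handle it by uniqueness of polynomially growing solutions of the backward-Kolmogorov-type equation, or simply by taking the finite-dimensional flow as the definition on polynomials. Invertibility then follows from Liouville's formula: $\det\mathcal U(t,s)=\exp\int_s^t\tr L(\tau)\,d\tau\neq0$.

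Finally, for $g\in W_{\le Q}$, the term $\A_{\bar\rho_s}g=\nabla\!\cdot g+g\cdot\bar s_s$ has degree at most $Q$, because $\bar s_s$ is affine and $g$ has degree at most $Q-1$. So the integrand $\mathcal U(t_k,s)\A_{\bar\rho_s}g$ lies in $\mathcal P_{\le Q}$ for each $s$ and is continuous in $s$. The Duhamel integral therefore stays in this closed finite-dimensional subspace.
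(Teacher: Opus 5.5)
Your proposal is correct and follows essentially the paper's argument: you expand the linear-coefficient terms of $\mathcal L_t$ via $x_j=\delta_j+\partial_j$ to get degree changes of $0$ or $-2$, restrict the flow to the finite-dimensional truncation, and check that $\A_{\bar\rho_s}$ maps $W_{\le Q}$ into $\mathcal P_{\le Q}$. The differences are cosmetic: you get invertibility from Liouville's formula where the paper uses the block lower-triangular structure of the propagator with invertible diagonal blocks, and you explicitly flag the identification of the PDE propagator with the finite-dimensional flow, which the paper leaves implicit.
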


\begin{proof}
Each term of $\mathcal L_t$ is a differential operator with constant or linear
coefficients: $-\bar F\cdot\nabla=-\sum_{ij}\bar A_{ij}x_j\partial_i$ for a
linear reference drift, $\sigma^2\Delta$, and
$2\sigma^2\bar s_t\cdot\nabla=-2\sigma^2\sum_{ij}(\bar\Sigma_t^{-1})_{ij}x_j
\partial_i$. Each $\partial$ lowers degree by one and $x_j=\delta_j+\partial_j$
changes it by $\pm1$, so each term changes degree by $0$ or $-2$. A
degree-nonincreasing generator has a degree-nonincreasing, block lower
triangular propagator, invertible because its diagonal blocks are, and
$\A_{\bar\rho_s}$ maps $W_{\le Q}$ into $\mathcal P_{\le Q}$ by
Lem.~\ref{lem:split}.
\end{proof}

Lem.~\ref{lem:closure} says a \emph{truncated input} propagates exactly. It does
not say that low-degree observations exclude high-degree inputs, and they do
not (App.~\ref{app:guard}).

\subsection{How the snapshot kernel relates to the source kernel}

Three kernels are in play. If $\A_{\bar\rho_s}h=0$ for every $s$ --- the path
constraints --- then every Duhamel integrand vanishes and $\mathcal B_Kh=0$, so
the path kernel lies in $\ker\mathcal B_K$ and in $\ker\bar\A_K$. Between
$\ker\mathcal B_K$ and $\ker\bar\A_K$ there is no inclusion in general.

\emph{Source-invisible but snapshot-visible.} A direction with
$\A_{\bar\rho_{t_k}}h=0$ at every observed time can have
$\A_{\bar\rho_s}h\ne0$ for $s$ between them, and propagation accumulates that
footprint. The oracle design is informed by the continuum of shapes
$\{\bar\Sigma_s\}$ traversed along the path, whereas $T_{K,q}$ sees only the $K$
observed shapes; this is the precise sense in which an oracle that knows the
reference trajectory is stronger than an estimator that sees only snapshots.

\emph{Source-visible but snapshot-invisible.} Because $\mathcal U(t,s)$ is
invertible on each truncation (Lem.~\ref{lem:closure}), $\mathcal B_K$ can fail
to be injective on directions the source constraints see only through
cancellation in the $s$-integral --- an analytic condition on the path. That is
a first-order form of the aliasing in Ex.~\ref{ex:alias}, and it is what
the faithfulness condition of \S\ref{sec:how-fast} excludes, together with the analogous
possibility that the profiling projection in \eqref{eq:Jsol} cancels a
solenoidal direction exactly.

Numerically, for an autonomous Gaussian reference with
$\dot{\bar\Sigma}=\bar A\bar\Sigma+\bar\Sigma\bar A^\top+2\sigma^2I$, building
$\mathcal B_K$ by integrating the tangent equation from $u_0=0$ gives
$\ker T_{K,q}=\ker\mathcal J_{\mathrm{sol},q}=\{0\}$ in every shape-rich case
tested ($m=3$, $K=4$, $Q\le6$; $m=4$, $K=5$, $Q\le5$; two reference paths each),
so the faithfulness condition holds there with equality. Without shape
richness the inclusion is strict in the favourable direction: at $m=3$, $K=2$ the
source gauge has dimension $10$ while the profiled kernel has dimension $1$ and
$0$ on two paths, and at $m=4$, $K=2$ the dimensions are $18$ against $7$ and
$6$. The directions propagation recovers are recovered weakly --- the profiled
spectrum has $\kappa_{\min}/\kappa_{\max}\approx10^{-5}$ against $\approx10^{-1}$
for the source operator --- so shape-poor paths convert exact gauge into
near-gauge rather than into usable information.

\subsection{Propagated aliasing: the guard band does not transfer}
\label{app:guard}

The instantaneous source has only raising and lowering bands, and
$P_{\le Q-2}T_{>Q}=0$ for the source operator. This does \emph{not} justify the
same statement for $\mathcal B_K$: exponentiating a degree-lowering generator
produces arbitrarily many downward bands, so invariance of $\mathcal P_{\le Q}$
proves exact propagation of a truncated \emph{input} but not that projecting
observations to low degree excludes high-degree inputs.

The leakage is nonzero. Take the Brownian reference $\bar F=0$, $\sigma^2=1$,
$\bar\Sigma_t=(1+2t)I_2$, and with $H_j$ the unnormalized probabilists' Hermite
polynomials set $h=(x_2H_5(x_1),-H_6(x_1))\in S_7$, which satisfies
$\A_{\rho_\star}h=0$ at $\rho_\star=\mathcal N(0,I_2)$. The transient source is
$\A_{\bar\rho_t}h=5(1-\tfrac{1}{v})x_2H_4(x_1)$ with $v=1+2t$. Writing
$u_t=x_2[c_0(t)+c_2(t)H_2(x_1)+c_4(t)H_4(x_1)]$, the tangent generator gives
\[
    \dot c_4=-\tfrac{10c_4}{v}-5\bigl(1-\tfrac1v\bigr),\quad
    \dot c_2=-\tfrac{6c_2}{v}+12\bigl(1-\tfrac2v\bigr)c_4,\quad
    \dot c_0=-\tfrac{2c_0}{v}+2\bigl(1-\tfrac2v\bigr)c_2 .
\]
From zero initial conditions, $c_0(0.4)=-7.7213162\times10^{-3}$: a degree-$7$
field reaches scalar degree $1$. The degreewise benchmark \eqref{eq:sequence}
has no such term by construction, since it treats degrees as independent; in the
linearized experiment high-degree leakage is an additional bias, one of the
transfer steps of Rem.~\ref{rem:oracle-gap}, and its control is open
(App.~\ref{app:ledger}).

\subsection{Joint versus blockwise information}\label{app:joint}

Asm.~\ref{asm:comparability} bounds the jointly profiled information above by
the blockwise one. The reverse bound, which would be needed to carry the upper
bound of Thm.~\ref{thm:oracle-rate} from the benchmark to the tangent
experiment (App.~\ref{app:transfer}), is not assumed, and it can fail: good
information in every block does not control the joint inverse trace. With
\[
    A=\begin{pmatrix}1&1\\0&\delta\end{pmatrix},
\]
the two coordinates have individual information $1$ and $1+\delta^2$, whereas
$\tr[(A^\top A)^{-1}]=1+2/\delta^2$, which diverges as $\delta\to0$. Because
propagation is not Hermite-degree diagonal --- $\mathcal U(t,s)$ mixes degrees,
and profiling removes only $G_q$ within each block rather than gradients at
other degrees or guard-band nuisance fields --- overlapping output degrees can
in principle create exactly this configuration. Neither direction of the
comparison has been verified numerically, and both are open.

\section{Near-gauge spectral geometry and the thin tail}\label{app:spectral}

\subsection{From a thin tail to the inverse-trace law}

\begin{lemma}[Trace law]\label{lem:trace}
Under Asm.~\ref{asm:scale} and \ref{asm:thin-tail}, every $\kappa_{q,j}^2$ is
positive and
\[
    \frac{r_q}{\bar\mu_q}\;\le\;\tr\bigl(\mathcal J^{-1}_{\mathrm{sol},q}\bigr)
    \;\le\;\Bigl(1+\frac{C'}{\gamma-1}\Bigr)\frac{r_q}{\bar\mu_q}
    \;\asymp\;\frac{q^m}{K}.
\]
\end{lemma}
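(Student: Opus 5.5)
Work with the eigenvalues $\kappa_{q,j}^2$ of $\mathcal J_{\mathrm{sol},q}$ normalized by their mean: set $\lambda_j:=\kappa_{q,j}^2/\bar\mu_q$, so that $r_q^{-1}\sum_j\lambda_j=1$ and
\[
\tr(\mathcal J^{-1}_{\mathrm{sol},q})=\bar\mu_q^{-1}\sum_j\lambda_j^{-1}.
\]
Positivity comes first. A zero eigenvalue would satisfy $\kappa_{q,j}^2\le t\bar\mu_q$ for every $t>0$. Then $\mathcal N_q(t)\ge r_q^{-1}$ for all $t\in(0,1]$, which contradicts $\mathcal N_q(t)\le C't^\gamma\to0$. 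Hence all $\lambda_j>0$ and the inverse trace is finite. The lower bound is Jensen's inequality, or equivalently Cauchy--Schwarz, applied to the convex map $\lambda\mapsto\lambda^{-1}$:
\[
r_q^{-1}\sum_j\lambda_j^{-1}\ge\Bigl(r_q^{-1}\sum_j\lambda_j\Bigr)^{-1}=1.
\]
Multiplying by $r_q/\bar\mu_q$ gives the left inequality.

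For the upper bound, write the empirical distribution of the $\lambda_j$ as a measure $\nu$ with distribution function $\mathcal N_q$. Split $\int\lambda^{-1}d\nu$ at $\lambda=1$. The part with $\lambda>1$ contributes at most $\nu(\lambda>1)\le1$. The part with $\lambda\le1$ is handled by the layer-cake formula:
\[
\int_{(0,1]}\lambda^{-1}d\nu=\mathcal N_q(1)+\int_0^1 t^{-2}\mathcal N_q(t)\,dt .
\]
This follows from $\lambda^{-1}=1+\int_\lambda^1t^{-2}dt$ and Fubini. Inserting $\mathcal N_q(t)\le C't^\gamma$ bounds the integral by $C'\int_0^1t^{\gamma-2}dt=C'/(\gamma-1)$, which is finite exactly because $\gamma>1$; at $\gamma=1$ this step produces a logarithm. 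Combining the two ranges gives $r_q^{-1}\sum_j\lambda_j^{-1}\le \mathcal N_q(1)+\nu(\lambda>1)+C'/(\gamma-1)=1+C'/(\gamma-1)$, since $\mathcal N_q(1)+\nu(\lambda>1)=1$. This is precisely the stated constant.

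Finally, the asymptotic form follows from $r_q\asymp q^{m-1}$ (Prop.~\ref{prop:hermite}, Lem.~\ref{lem:surj}) and $\bar\mu_q\asymp K/q$ (Asm.~\ref{asm:scale}), which give $r_q/\bar\mu_q\asymp q^m/K$. The only delicate step is the bookkeeping in the layer-cake split that yields the exact constant $1+C'/(\gamma-1)$, rather than $2+C'/(\gamma-1)$. It requires noting that the mass at $\lambda\le1$, weighted by $1$, and the mass at $\lambda>1$ together sum to one.
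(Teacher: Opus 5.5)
Your proposal is correct and follows essentially the same route as the paper: positivity from $\mathcal N_q(t)\le C't^\gamma\to0$, the lower bound by Cauchy--Schwarz, and the upper bound by a layer-cake estimate. The paper applies the layer-cake formula in the variable $u=1/t$ on $(0,\infty)$ and bounds the $u\le1$ part by $1$; under $t=1/u$ its remaining term is exactly your integral $\int_0^1t^{-2}\mathcal N_q(t)\,dt$, so it reaches the same constant $1+C'/(\gamma-1)$.
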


\begin{proof}
Since $\mathcal N_q(t)\le C't^\gamma\to0$ as $t\to0$ and $\mathcal N_q$ takes
values in $r_q^{-1}\mathbb Z$, no eigenvalue is zero. The lower bound is
Cauchy--Schwarz, $r_q^2=(\sum_j\kappa_{q,j}\kappa_{q,j}^{-1})^2\le
(\sum_j\kappa_{q,j}^2)(\sum_j\kappa_{q,j}^{-2})$. For the upper bound put
$t_j:=\kappa_{q,j}^2/\bar\mu_q$; the layer-cake formula gives
$r_q^{-1}\sum_jt_j^{-1}=\int_0^\infty r_q^{-1}\#\{j:t_j<1/u\}\,du
\le1+\int_1^\infty\mathcal N_q(1/u)\,du\le1+C'\int_1^\infty u^{-\gamma}du$,
finite exactly because $\gamma>1$. At $\gamma=1$ the integral is cut off at
$u\le C'r_q$, since $\mathcal N_q(t)<r_q^{-1}$ forces $\mathcal N_q(t)=0$, giving
an extra $\log r_q\asymp\log q$.
\end{proof}

The thin tail bounds no individual eigenvalue: what it excludes is a large
population of near-gauge directions, and it needs no degree-uniform spectral
gap.

\subsection{What has been measured, and on which operator}

The diagnostics bearing on Asm.~\ref{asm:scale} and \ref{asm:thin-tail} were
computed by direct construction in the tensor-Hermite basis, and they are not all
computed on the operator the assumptions concern. We separate them.

\emph{On the propagated, profiled operator $\mathcal J_{\mathrm{sol},q}$.}
Fitting $\tr(\mathcal J^{-1}_{\mathrm{sol},q})\sim q^p$ with propagation and
gradient profiling included gives $p=3.20$ at $m=3$, $4.08$ at $m=4$ and $4.76$
at $m=5$, each with $r=m-1$ shapes, against the predicted $p=m$. This is the
\emph{conclusion} of Lem.~\ref{lem:trace}, measured where it is used, and it is
the evidence the upper bound actually rests on.

\emph{On the source operator $T^\ast_{K,q}T_{K,q}$.} The tail exponent
itself has been measured only here: $\gamma\approx1.6$ at $m=3$ and
$\gamma\approx2.0$--$2.2$ at $m=4$ (degrees up to $40$ and $14$), comfortably above
the threshold. For the source operator the inverse trace scales as $q^{m-2}$,
and one shape short of richness it drifts only logarithmically while two short it
gains a power. Asm.~\ref{asm:thin-tail} is stated for
$\mathcal J_{\mathrm{sol},q}$, so these source-level exponents support it only
through the structural relation between the two operators; they are not a
measurement of it.

\emph{Why the symbol does not settle it.} The raising part of the source
operator alone has a semiclassical symbol whose lower-tail exponent is
$\gamma_{\mathrm{sym}}=1/2$, below the threshold, and the exponent measured for
the raising part drifts downward toward that value as $q$ grows. What restores
$\gamma>1$ for the full source operator is the lowering part of
Lem.~\ref{lem:split}, which the symbol discards. That is also where a proof would
have to come from, and no fixed-degree computation substitutes for a bound
uniform in $q$.

\section{The tangent benchmark: reduction, rate proofs, and transfer}
\label{app:rate-proofs}

\subsection{Reduction to a Gaussian shift, and comparison}

In the tangent snapshot experiment the gradient part of $g$ is an unrestricted
nuisance. For the lower bound it suffices to consider, at a cutoff $Q$, the
subproblem in which $h\in\mathcal S_{\le Q}$, the gradient
directions of degree at most $Q$ are unrestricted, and all components above
degree $Q$ vanish: its parameter set is contained in the full one, so its
minimax risk is no larger. With standard Gaussian noise the subproblem is a
linear Gaussian model with unrestricted linear nuisance. Projecting $Y$ onto
the orthocomplement of the range of the retained gradient columns leaves a
Gaussian shift for $h$ with information operator
$\mathcal J_{\mathrm{sol},\le Q}$ of Asm.~\ref{asm:comparability}, and this
reduction loses nothing for minimax risk, by the usual limit of least
favourable priors on the nuisance with growing variance.

\begin{lemma}[Comparison of Gaussian shifts]\label{lem:compare}
For $J\succeq0$ on a finite-dimensional parameter space, let $\mathcal E_J$ be
the Gaussian shift with sufficient statistic $S\sim\mathcal N(J\theta,J)$. If
$J_2\preceq J_1$, then $\mathcal E_{J_2}$ is a randomization of $\mathcal
E_{J_1}$, so every minimax risk in $\mathcal E_{J_1}$ is at most the
corresponding one in $\mathcal E_{J_2}$. For estimators linear in $S$ the same
conclusion holds when the Gaussian law is replaced by any noise with the same
mean and covariance.
\end{lemma}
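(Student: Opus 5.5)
The plan is to write down an explicit randomization of $\mathcal E_{J_1}$ that reproduces $\mathcal E_{J_2}$ exactly; the minimax comparison then follows by composing Markov kernels. I first treat the case where $J_1$ is invertible and then reduce the general case to it. Given $S_1\sim\mathcal N(J_1\theta,J_1)$, set
\[
S_2:=J_2J_1^{-1}S_1+\eta,\qquad \eta\sim\mathcal N\bigl(0,\;J_2-J_2J_1^{-1}J_2\bigr),
\]
with $\eta$ independent of $S_1$. The mean of $S_2$ is $J_2\theta$. Its covariance is $J_2J_1^{-1}J_2+(J_2-J_2J_1^{-1}J_2)=J_2$, so $S_2\sim\mathcal N(J_2\theta,J_2)$ for every $\theta$.

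The key step is to check that the added covariance $J_2-J_2J_1^{-1}J_2$ is positive semidefinite. From $J_2\preceq J_1$ we get $J_1^{-1/2}J_2J_1^{-1/2}\preceq I$. Writing $M:=J_1^{-1/2}J_2J_1^{-1/2}$, which satisfies $0\preceq M\preceq I$, the added covariance becomes $J_1^{1/2}(M-M^2)J_1^{1/2}$. Since $M$ is symmetric with spectrum in $[0,1]$, we have $M-M^2\succeq0$, so the added covariance is PSD. This matrix inequality is the only real obstacle in the argument.

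For singular $J_1$, I restrict to $\operatorname{range}J_1$. Because $0\preceq J_2\preceq J_1$, we have $\ker J_1\subseteq\ker J_2$, so both experiments depend on $\theta$ only through its projection onto $\operatorname{range}J_1$. On that subspace the construction above applies with the pseudoinverse $J_1^{+}$ in place of $J_1^{-1}$. Alternatively, one can replace $J_1$ by $J_1+\varepsilon I$ and let $\varepsilon\to0$, but I would use the pseudoinverse route since it is exact.

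For the minimax conclusion, take any estimator $\widehat\theta_2(S_2)$ in $\mathcal E_{J_2}$. The randomized estimator $\widehat\theta_2(J_2J_1^{+}S_1+\eta)$ in $\mathcal E_{J_1}$ then has the identical risk function, so the infimum over estimators in $\mathcal E_{J_1}$ can be no larger.

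For the linear-estimator clause, fix a linear estimator $LS_2$ and note that its risk depends only on the mean and covariance of $S_2$. Under any noise law with mean $J\theta$ and covariance $J$, the statistic $J_2J_1^{+}S_1+\eta$ (with $\eta$ of the stated covariance, independent of $S_1$) has mean $J_2\theta$ and covariance $J_2$. The bias of $LJ_2J_1^{+}S_1+L\eta$ and the trace of its covariance therefore match those of $LS_2$. Averaging over $\eta$ only adds variance, so dropping $\eta$ gives the linear estimator $LJ_2J_1^{+}S_1$ with risk no larger. Hence the best linear risk in $\mathcal E_{J_1}$ is at most that in $\mathcal E_{J_2}$.
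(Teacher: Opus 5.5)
Your proposal is correct and follows essentially the same route as the paper. Both use the explicit randomization $S_2=J_2J_1^{+}S_1+\eta$ with $\eta\sim\mathcal N(0,J_2-J_2J_1^{+}J_2)$, handle singular $J_1$ via $\ker J_1\subseteq\ker J_2$, and finish with the second-moment argument for linear estimators; you certify $J_2-J_2J_1^{+}J_2\succeq0$ by the congruence $J_1^{1/2}(M-M^2)J_1^{1/2}$ instead of the paper's $J_2^{1/2}(I-J_2^{1/2}J_1^{+}J_2^{1/2})J_2^{1/2}$, and your step of dropping the independent mean-zero $\eta$ is a small sharpening, since it gives an estimator $LJ_2J_1^{+}S_1$ that is genuinely linear in $S_1$ with no larger quadratic risk.
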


\begin{proof}
If $J_1x=0$ then $0\le x^\top J_2x\le x^\top J_1x=0$, so $J_2x=0$: hence
$\ker J_1\subseteq\ker J_2$ and $J_2J_1^+J_1=J_2$, with $J_1^+$ the
pseudo-inverse. Put $M:=J_2J_1^+$. Then $MS_1$ has mean $J_2\theta$ and
covariance $J_2J_1^+J_2$, and $J_2-J_2J_1^+J_2=J_2^{1/2}(I-J_2^{1/2}J_1^+
J_2^{1/2})J_2^{1/2}\succeq0$, because $J_2\preceq J_1$ gives
$(J_1^+)^{1/2}J_2(J_1^+)^{1/2}\preceq I$ and the nonzero spectra of $XX^\ast$
and $X^\ast X$ coincide for $X=(J_1^+)^{1/2}J_2^{1/2}$. Adding independent
Gaussian noise with that covariance yields exactly the law of the sufficient
statistic of $\mathcal E_{J_2}$, so any estimator in $\mathcal E_{J_2}$ can be
run in $\mathcal E_{J_1}$ with the same risk. The risk of a linear estimator
depends only on the first two moments, which gives the last claim. Scaling $J$
by a constant is the same as scaling $n$.
\end{proof}

Under Asm.~\ref{asm:comparability} the subproblem at cutoff $Q$ is therefore a
randomization of the benchmark \eqref{eq:sequence}, restricted to degrees at
most $Q$, with $n$ replaced by $Cn$. The comparison is stated at every cutoff
and one-sidedly because that is exactly what the lower bound uses. A statement
about inverse traces alone would not suffice: comparable traces do not prevent
the joint experiment from being more informative along the weak directions on
which the lower bound is built.

Two conventions. The weights $(1+q)^{2s}$ define the smoothness class and are not
identified with a conventional $s$-derivative norm. And $\theta$ are coordinates
of $h$ itself: if one writes $F^\delta=\bar F+\delta g$ with $g$ fixed, the signal
is $\delta\kappa\theta$, $n$ becomes $n\delta^2$, and the risk must be
rescaled accordingly. Coordinates with $\kappa_{q,j}=0$ are unidentified, which
only helps the lower bound.

\subsection{Where the \texorpdfstring{$K/q$}{K/q} scale comes from}\label{app:scale}

Asm.~\ref{asm:scale} is assumed, not proved. The heuristic behind it is a
factorization of each slice's contribution to $\tr\mathcal J_{\mathrm{sol},q}$
into a source strength and a propagator gain.

\emph{Source strength $\asymp q$.} For $h\in S_q$ the footprint at time $s$ is
$\A_{\bar\rho_s}h=-h^\top\Xi_sx$ with $\Xi_s:=\bar\Sigma_s^{-1}-I$. By
Lem.~\ref{lem:split} its raising part multiplies by $x$, and on degree-$q$
Hermite modes multiplication by $x_j$ has norm $\asymp\sqrt q$, so
$\|\A_{\bar\rho_s}h\|^2\lesssim\|\Xi_s\|^2q\,\|h\|^2$. A matching lower bound
uniform over $S_q$ is exactly what shape richness does not quantify: it makes
$R_{K,q}$ injective (Prop.~\ref{prop:gauge-structure}) without bounding its
smallest singular value.

\emph{Propagator gain $\asymp q^{-2}$ per slice.} For an Ornstein--Uhlenbeck-type
reference with relaxation rate $\nu>0$, the diagonal part of $\mathcal L_t$
on $\mathcal P_q$ is $\approx-q\nu$, so on degree-$q$ components
$\int_0^{t_k}\mathcal U(t_k,s)\,ds\approx(q\nu)^{-1}$ once
$q\nu t_k\gg1$, and the squared gain is $\asymp q^{-2}$. Summing over the $K$
observed slices gives the factor $K$, provided each slice sees a shape
perturbation $\Xi_s$ of order one on the preceding window of length $\asymp
(q\nu)^{-1}$.

Multiplying, $\bar\mu_q\asymp q\cdot K/q^2=K/q$. The gaps are explicit: the
uniform lower bound on the source strength, uniformity of the damping along a
transient path, the requirement that every slice be preceded by genuine shape
change --- which fails in the short-gap design of Obs.~\ref{obs:tension} ---
and a non-degeneracy of profiling against $G_q$, which can only decrease the
trace and could in principle decrease it by more than a constant. Without
diffusion there is no damping and the factorization changes. Numerically, the
trace law that Asm.~\ref{asm:scale} and \ref{asm:thin-tail} jointly imply is
observed on the propagated design (App.~\ref{app:spectral}).

\begin{corollary}[Bulk consequence]\label{cor:bulk}
Under Asm.~\ref{asm:scale}, at least half the degree-$q$ coordinates satisfy
$\kappa^2_{q,j}\le2\bar\mu_q\lesssim K/q$.
\end{corollary}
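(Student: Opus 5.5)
This is a Markov-inequality argument applied to the degree-$q$ spectrum. Fix a degree $q$ and recall that $\kappa_{q,1}^2,\dots,\kappa_{q,r_q}^2$ are the eigenvalues of the nonnegative operator $\mathcal J_{\mathrm{sol},q}$ on $S_q$, zeros included. By definition, $\bar\mu_q=r_q^{-1}\sum_j\kappa_{q,j}^2$ is their arithmetic mean. The claim then needs no structure beyond nonnegativity of the eigenvalues and the definition of the mean. Asm.~\ref{asm:scale} enters only at the very end, to replace $\bar\mu_q$ by its order $K/q$.

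\textbf{Step 1.} Let $N:=\#\{j:\kappa_{q,j}^2>2\bar\mu_q\}$ be the number of eigenvalues above twice the mean. Since every $\kappa_{q,j}^2\ge0$, dropping the terms at or below the threshold gives
\[
  r_q\bar\mu_q=\sum_j\kappa_{q,j}^2\ \ge\ \sum_{j:\,\kappa_{q,j}^2>2\bar\mu_q}\kappa_{q,j}^2\ \ge\ 2\bar\mu_q\,N .
\]

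\textbf{Step 2.} If $\bar\mu_q>0$, dividing by $2\bar\mu_q$ gives $N\le r_q/2$. Hence at least $r_q-N\ge r_q/2$ indices satisfy $\kappa_{q,j}^2\le2\bar\mu_q$. If instead $\bar\mu_q=0$, every eigenvalue is zero and the inequality holds for all $r_q$ coordinates.

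\textbf{Step 3.} Asm.~\ref{asm:scale} supplies a constant $c$, uniform over the degrees and designs considered, with $\bar\mu_q\le cK/q$. This yields $2\bar\mu_q\lesssim K/q$, with the implicit constant independent of $q$, $n$ and $K$. The orthonormal eigenbasis $\{e_{q,j}\}$ then furnishes a subspace of $S_q$ of dimension at least $r_q/2\asymp q^{m-1}$ (Prop.~\ref{prop:hermite}) on which the information is at most of order $K/q$. This subspace is what the lower bound of Thm.~\ref{thm:oracle-rate} uses.

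There is no genuine obstacle here. The only point needing care is that the bound must use only the upper half of Asm.~\ref{asm:scale}, so that the constant is uniform in $q$; the statement needs no lower bound on any individual eigenvalue.
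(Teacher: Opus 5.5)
Your proposal is correct and follows the paper's proof, which is Markov's inequality applied to the empirical measure of $\{\kappa^2_{q,j}\}_j$ with mean $\bar\mu_q$, with Asm.~\ref{asm:scale} used only to bound $2\bar\mu_q\lesssim K/q$. You simply write out the Markov step explicitly and also handle the degenerate case $\bar\mu_q=0$.
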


\begin{proof}
Markov's inequality for the empirical measure of $\{\kappa^2_{q,j}\}_j$, whose
mean is $\bar\mu_q$. No tail hypothesis is used.
\end{proof}

\subsection{Upper bound}

Estimate $\theta_{q,j}$ by $Z_{q,j}/\kappa_{q,j}$ for $q\le Q$ and by zero
beyond; this is well defined because Asm.~\ref{asm:thin-tail} excludes zero
eigenvalues (Lem.~\ref{lem:trace}). The risk is
\[
    \sum_{q\le Q}\frac1n\tr\bigl(\mathcal J^{-1}_{\mathrm{sol},q}\bigr)
    +\sum_{q>Q}\|\theta_q\|^2
    \;\lesssim\;\sum_{q\le Q}\frac{q^m}{nK}+R^2(1+Q)^{-2s}
    \;\asymp\;\frac{Q^{m+1}}{nK}+R^2Q^{-2s},
\]
using Lem.~\ref{lem:trace} for the first term and the Sobolev constraint in
\eqref{eq:sequence} for the second. Taking $Q_\star\asymp(nK)^{1/(2s+m+1)}$ gives
risk $\asymp(nK)^{-2s/(2s+m+1)}$. At $\gamma=1$, Lem.~\ref{lem:trace} carries an
extra $\log q$, which costs a logarithmic factor in the rate. This is the only
place Asm.~\ref{asm:thin-tail} is used, and since the estimator is linear the
calculation uses only second moments of the noise.

\subsection{Lower bound}

The lower bound uses only Cor.~\ref{cor:bulk}. Fix a small constant $c>0$ and
consider the band of degrees $q\in[Q_\star/2,Q_\star]$ with
$Q_\star:=(nK)^{1/(2s+m+1)}$. In each such degree keep the at-least-half of the
coordinates with $\kappa^2_{q,j}\le2\bar\mu_q$, whose noise variance
$1/(n\kappa^2_{q,j})$ is $\gtrsim q/(nK)$, and let $\theta_{q,j}$ range over
$[-\tau_q,\tau_q]$ with $\tau_q^2:=c\,q/(nK)$, setting all other coordinates to
zero. For $c$ small this hyperrectangle lies in the Sobolev ball, since
\[
    \sum_{q\in[Q_\star/2,Q_\star]}(1+q)^{2s}\,r_q\,\tau_q^2
    \;\lesssim\;c\,Q_\star^{2s}\cdot Q_\star\cdot Q_\star^{m-1}\cdot
    \frac{Q_\star}{nK}
    \;=\;c\,\frac{Q_\star^{2s+m+1}}{nK}\;=\;c\;\le\;R^2 .
\]
The minimax risk over a hyperrectangle in a Gaussian sequence model is at least
a constant multiple of $\sum_j\min(\tau_j^2,\sigma_j^2)$
\citep{dlm1990,ik1981}, and here $\tau_q^2\lesssim\sigma_{q,j}^2$ on every kept
coordinate, so the risk is
\[
    \gtrsim\sum_{q\in[Q_\star/2,Q_\star]}\frac{r_q}{2}\cdot\frac{c\,q}{nK}
    \;\asymp\;\frac{Q_\star^{m+1}}{nK}\;\asymp\;(nK)^{-2s/(2s+m+1)} .
\]
Restricting to a single degree $q\asymp Q_\star$ would give only
$Q_\star^m/(nK)$, short of the rate by a factor $Q_\star$; the band of
$\asymp Q_\star$ degrees is what recovers it.

\subsection{Proof of Thm.~\ref{thm:oracle-rate}}

\emph{Lower bound.} The hyperrectangle of the lower-bound subsection lives in
degrees $q\le Q_\star$. By the reduction above, the minimax risk of the tangent
experiment is at least that of the subproblem at cutoff $Q_\star$, a Gaussian
shift with information $\mathcal J_{\mathrm{sol},\le Q_\star}$. By
Asm.~\ref{asm:comparability} and Lem.~\ref{lem:compare} this is at least the
minimax risk of \eqref{eq:sequence} restricted to degrees at most $Q_\star$ at
sample size $Cn$, which the lower-bound subsection bounds below by a constant
multiple of $(CnK)^{-2s/(2s+m+1)}$. Constants in $n$ do not change the rate.

\emph{Upper bound.} This is the upper-bound subsection, stated in
\eqref{eq:sequence}; the estimator is linear, so only the first two moments of
the noise enter.

In both parts the $L^2(\rho_\star)$ loss equals the coordinate loss, because the
eigenbases are orthonormal in $L^2(\rho_\star)$ and the Sobolev weights are
constant on each degree.

\subsection{Where the \texorpdfstring{$+1$}{+1} comes from}

Direct regression on the $\asymp Q^m$ Hermite coefficients up to degree $Q$,
each with unit inverse information, gives exponent $2s+m$. Here each degree-$q$
direction has inverse information $\asymp q/K$, one power of $q$ worse, which
multiplies the cumulative variance by $Q$ and gives $2s+m+1$. In the
factorization of App.~\ref{app:scale}, the propagator's degree-proportional
damping costs two powers and the degree-raising footprint returns one. This is
bookkeeping under Asm.~\ref{asm:scale}; the proof is the two bounds above.

\subsection{From the benchmark to the tangent experiment, and to sampled snapshots}
\label{app:transfer}

The upper bound of Thm.~\ref{thm:oracle-rate} is stated in the benchmark.
Carrying it to the tangent experiment along the route of this appendix would
need two further ingredients. The first is the reverse comparison
$\mathcal J_{\mathrm{sol},\le Q}\succeq c\,\mathcal J_Q^{\mathrm{blk}}$, uniformly
in $Q$; with it, the second-moment part of Lem.~\ref{lem:compare} transfers the
variance of the cutoff estimator with $n$ replaced by $cn$. It is not assumed,
and it can fail (App.~\ref{app:joint}). The second is control of omitted-degree
leakage. Components above the cutoff reach the retained statistics through
propagation, which lowers degree (App.~\ref{app:guard}), so the bias of a cutoff
estimator is not just the Sobolev tail. For the solenoidal part this is a
boundedness condition on the leakage map; for the gradient part it cannot hold
while the nuisance is unrestricted in every degree, so the gradient nuisance
above the cutoff must be either restricted or profiled jointly with the retained
degrees. Neither ingredient is established.

The nonlinear sampled-snapshot experiment observes
$X_i^{(k)}\stackrel{\mathrm{iid}}{\sim}\rho_{t_k}^F$. Transfer from the tangent
experiment would need uniform control of the linearization remainder and of the
sampling likelihood through the growing cutoff. A direct lower bound, for
example, would need
$\mathrm{KL}(\rho_{t_k},\rho'_{t_k})
=\tfrac12\|u_k-u'_k\|_{L^2(\bar\rho_{t_k})}^2(1+o(1))$ uniformly over the
hypercube used in the proof. Finally, the tangent oracle is credited with the
reference path entering $\mathcal B_K$ up to each observation time, whereas
$T_{K,q}$ uses only the source footprints at the observed slices: the oracle
knows shapes that the data never show.

\subsection{Proof of the covariance-only audit}

\begin{proof}[Proof of Prop.~\ref{prop:hessian}]
Let the slices be centered Gaussians $\mathcal N(0,\Sigma_k)$ and the field class
linear, $F=Hx$. By Lem.~\ref{lem:linear} the footprint at slice $k$ is
$\A_{\rho_k}(Hx)=\tr H-x^\top H^\top\Sigma_k^{-1}x$, affine in $H$, and $b_k$
does not depend on $H$, so the Hessian of
$\frac1{2K}\sum_k\E_k[(\A_{\rho_k}(Hx)-b_k)^2]$ is the quadratic form
$G\mapsto\frac1K\sum_k\E_k[(\A_{\rho_k}(Gx))^2]$. Put $M:=G^\top\Sigma_k^{-1}$,
which is not symmetric in general, and $M_s:=(M+M^\top)/2$. Since
$x^\top Mx=x^\top M_sx$ and $\E_k[x^\top Mx]=\tr(M\Sigma_k)=\tr G$, the footprint
$\A_{\rho_k}(Gx)=-(x^\top M_sx-\E_kx^\top M_sx)$ is centered. The consolidated
fourth-moment identity
$\E[(x^\top Ax)(x^\top Bx)]=\tr(A\Sigma)\tr(B\Sigma)+2\tr(A\Sigma B\Sigma)$ from
Isserlis' theorem holds for \emph{symmetric} $A,B$ --- for general $A,B$ the last
term is $\tr(A\Sigma B\Sigma)+\tr(A\Sigma B^\top\Sigma)$ --- so it must be
applied to $M_s$, not $M$:
\[
    \E_k[(\A_{\rho_k}(Gx))^2]=2\tr(M_s\Sigma_kM_s\Sigma_k)
    =\tr(G^2)+\tr(G^\top\Sigma_k^{-1}G\Sigma_k).
\]
Every term is a function of $\Sigma_k$ alone. With $\vec G:=\operatorname{vec}(G)$
column-stacked, $\operatorname{vec}(\Sigma_k^{-1}G\Sigma_k)
=(\Sigma_k\otimes\Sigma_k^{-1})\vec G$ and $\tr(G^2)=\vec G^\top\mathsf C_m\vec G$, so the
Hessian is $\frac1K\sum_k[\Sigma_k\otimes\Sigma_k^{-1}+\mathsf C_m]$; in
row-stacked coordinates the Kronecker factors appear in the opposite order.
Profiling against the symmetric block is the Schur complement. As a check, the
exact gauge $G=\Sigma_kN$ with $N^\top=-N$ (Lem.~\ref{lem:linear}) gives
$\tr(\Sigma_kN\Sigma_kN)+\tr(N^\top\Sigma_kN\Sigma_k)=0$, a null direction of the
slice-$k$ term, as it must be.
\end{proof}

The Gaussian hypothesis is essential and its failure is not subtle. In one
dimension with $g(x)=x$, a variance-one Gaussian gives
$\E[(\A_\rho g)^2]=2$, whereas a variance-one rescaling of a density
proportional to $e^{-ax^4}$ gives $4$. The two densities have the same
covariance, so no covariance-determined expression can distinguish them: for a
general density, linearity of $Hx$ does not make $\A_\rho(Hx)$
covariance-determined. Nonzero means and an arbitrary weighting operator add
further dependence. Prop.~\ref{prop:hessian} is therefore a second-moment
\emph{design} audit under a Gaussian reference, not a general information audit;
a GLS information audit for the weak form needs fourth moments as well.

\section{The Gaussian specialization}\label{app:gauss}

\subsection{The commutant criterion}

Take the compared slices Gaussian with a common mean, whitened as in
\S\ref{sec:how-much} so that $\rho_\star=\mathcal N(0,I_m)$ and the others are
$\mathcal N(0,\Sigma_k)$; the statements below are exact. Restrict to
$q=2$, so
$h(x)=Gx$ with $G^\top=-G$ by Lem.~\ref{lem:linear}. With
$\Xi_k=\Sigma_k^{-1}-I$, the cross-slice footprint is
$T_{K,2}h=-(x^\top G^\top \Xi_kx)_k=(x^\top G\Xi_kx)_k$, which vanishes for all $x$
iff the symmetric part of $G\Xi_k$ vanishes, i.e.
\[
    G\Xi_k+\Xi_k^\top G^\top=G\Xi_k-\Xi_kG=[G,\Xi_k]=0
    \iff[G,\Sigma_k^{-1}]=0 .
\]
So the degree-two gauge is the commutant of the observed precision family
inside the antisymmetric matrices. It is $\{0\}$ exactly when the $\Sigma_k^{-1}$
have no common nontrivial antisymmetric commutant --- and at $q=2$ a single
generic shape difference already suffices: a symmetric matrix with distinct
eigenvalues has only symmetric commutant, whose intersection with the
antisymmetric matrices is trivial, so $K=2$ is enough. The $K\ge m$ threshold of
Thm.~\ref{thm:gauge} is therefore \emph{not} driven by $q=2$; it comes from
shape richness (Thm.~\ref{thm:gauge}), which needs $r\ge m-1$ and governs all
degrees jointly. The theorem gives a sufficient condition uniform in $q$, and
the low-degree blocks are cleared sooner. Both statements are verified in
App.~\ref{app:verification}.

The common-mean hypothesis is where the degree separation lives. For a general
mean, an affine field $Hx+c$ is invisible under $\mathcal N(\mu,\Sigma)$ iff
$H\mu+c=0$ and $H\Sigma+\Sigma H^\top=0$: with $y:=x-\mu$,
$\A_\rho(Hx+c)=\tr H-y^\top H^\top\Sigma^{-1}y-(H\mu+c)^\top\Sigma^{-1}y$,
and the three terms have different degrees in $y$. Differing means across
slices mix these degrees, which is why Thm.~\ref{thm:gauge} assumes a common
one.

\subsection{Comparison with \texorpdfstring{\citet{guan2024}}{Guan et al. (2024)}}

By Lem.~\ref{lem:linear} the $\rho$-solenoidal linear fields are exactly the
$\Sigma$-generalized rotations of \citet{guan2024}, so the algebra agrees. The
statistical experiments do not, in three ways, and the comparison is worth
making precisely because it is easy to overstate.

\emph{What is conditioned on.} Their criterion is a property of the
\emph{initial law}: they characterize when the pair (drift, diffusion) is
identifiable for the \emph{whole} linear class given $p_0$, and a Gaussian
$p_0$ is always auto-rotationally invariant, so their theorem declines to
certify it. Ours is a property of a fixed \emph{family of Gaussian shapes} --- the reference
slices in the local analysis, the observed marginals when these are Gaussian ---
for a given
system: we ask whether the particular $\Sigma_k$ move. There is no
contradiction --- their non-identifiability witness has zero base drift, so
every traversed shape commutes with the perturbation and shape richness fails
--- but the two statements are not the same statement, and ours is the
per-system refinement their own limitations section leaves open.

\emph{What is estimated.} They identify drift \emph{and} diffusion jointly.
We take $\sigma$ known throughout, which removes their second clause, the
mean-zero Gaussian cross-section, since that clause encodes a drift--diffusion
rescaling ambiguity that a known $\sigma$ eliminates. Our comparison is
therefore only with the rotational clause.

\emph{What is observed.} Their result uses marginals on a continuum of times;
ours uses $K$ discrete shapes, and Ex.~\ref{ex:alias} shows the discrete case
admits aliasing that the continuum case does not. We therefore do not claim to
recover their theorem, and we do not claim our criterion implies theirs.

\subsection{Fibrewise conditions become algebraic}

Under a fixed finite-dimensional model class the infinite-dimensional
difficulty of Prop.~\ref{prop:count} disappears: with $F=Hx$ the gauge directions are $h=Gx$, $G^\top=-G$, and the conditions
$h(x)\in\mathcal C_x$ become $[G,\Sigma_k^{-1}]=0$, finitely many linear
equations in the $\binom m2$ free parameters of $G$. This is why the linear
benchmark of \S\ref{sec:exp} has an exactly computable gauge and why
Prop.~\ref{prop:hessian} can refuse a design before any field is fitted.

\section{Intrinsic dimension and the normal scale}\label{app:intrinsic}

The rate of Thm.~\ref{thm:oracle-rate} is governed by $m$, not the ambient
dimension $d$. Suppose the slices concentrate near an $m$-dimensional affine
subspace $E$ with normal width $\tau_\perp$, so that
$\Sigma_k=\Sigma_k^\parallel\oplus\tau_\perp^2I_{d-m}$ with
$\Sigma_k^\parallel$ of order one. Two consequences follow, and we keep the
diffusion amplitude $\sigma$ and the normal width $\tau_\perp$ notationally
distinct because they have different physical dimensions.

First, by Prop.~\ref{prop:symmetry} every rotation supported in the isotropic
normal block is an exact gauge, so the estimable antisymmetric parameters
number $\binom m2$ rather than $\binom d2$, and the remaining
$\binom{d-m}2$ directions are declared, not estimated. Second, the typical
score magnitude in the normal directions is $O(\tau_\perp^{-1})$ and the
corresponding block of the information is $\Theta(\tau_\perp^{-2})$; these are
a score magnitude and a precision respectively, and conflating them is how the
normal block acquires a spurious reputation for carrying information about the
field. It carries information about the \emph{support}, not the circulation, and
recovery is correspondingly flat in $d$ (Fig.~\ref{fig:intrinsic}).

The subspace is recovered rather than assumed: the isotropic bulk of each
$\Sigma_k$ is its maximal-multiplicity eigenvalue cluster, and the curl-visible
subspace is the average of the complements' projectors across $k$. This is
basis-free, and \S\ref{sec:exp-ablations} reports that a random orthogonal
rotation of all $12$ ambient coordinates leaves the recovered tangent space
correct to a principal angle of $2.58^\circ$ and the solenoidal error
essentially unchanged ($0.6373$ against $0.6361$, both under the velocity
convention of App.~\ref{app:log}).

\begin{figure}[htb]
\centering
\includegraphics[width=0.55\linewidth]{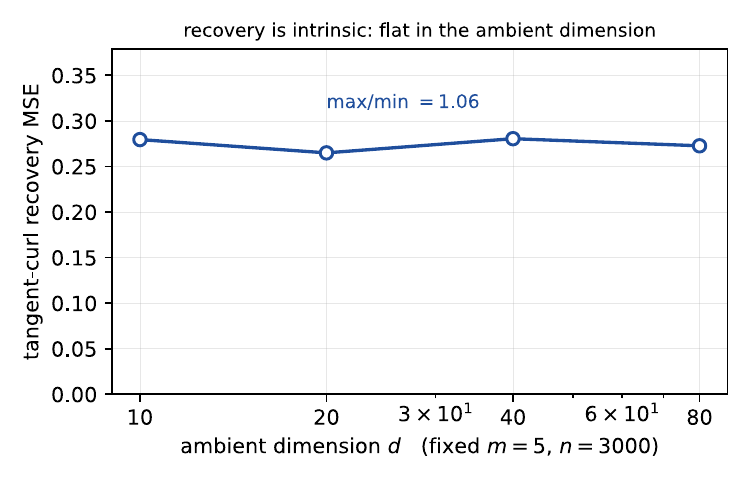}
\caption{\textbf{Recovery is intrinsic.} Tangent-curl recovery mean squared
error against ambient dimension $d$ at fixed intrinsic $m=5$ and $n=3000$, over
$300$ trials per point with the parametric estimator. The error is flat in $d$:
the ratio of largest to smallest across $d\in\{10,20,40,80\}$ is close to one.
Ambient dimension enters the cost of assembly, not the statistical difficulty of
the solenoidal channel.}
\label{fig:intrinsic}
\end{figure}

We do not claim a solution to intrinsic-dimension selection. The audit above
assumes a spectral gap between tangent and normal blocks; when that gap is
absent the cluster is ill-defined and the method has no principled fallback.
Nonlinear intrinsic geometry is discussed in App.~\ref{app:manifold}.

\section{Manifold concentration and rate separation}\label{app:manifold}

When the slices concentrate near a curved $m$-dimensional submanifold rather
than an affine subspace, the score acquires a large normal component that
encodes support geometry rather than dynamics. \citet{li2025scores} show that
near a smoothed manifold the score is dominated by the normal projection at
scale $\tau_\perp$, with the tangential component an order smaller. In our
setting this produces a clean separation.

\begin{proposition}[Rate separation under manifold concentration]
\label{prop:ratesplit}
In the smoothed-manifold regime with normal width $\tau_\perp\to0$ and
curvature bounded on the region carrying the mass, the footprint decomposes as
$\A_\rho g=\A^\parallel_\rho g^\parallel
-\tau_\perp^{-2}\,r\,\langle g^\perp,e_\perp\rangle+O(1)$, where $r$ is the signed
normal coordinate, $e_\perp$ the outward unit normal, $\A^\parallel$ the intrinsic Stein
operator, and the $O(1)$ term is controlled by the curvature bound and the $C^1$
norm of $g$. Since $r=O(\tau_\perp)$ on the mass, the normal term has size
$\tau_\perp^{-1}$ and its block of the information is $\Theta(\tau_\perp^{-2})$,
against $\Theta(1)$ tangentially. Hence the normal component of $g$ is identified at rate $\tau_\perp^{2}$ times the
tangential rate, and the solenoidal estimand, which is tangential by
Prop.~\ref{prop:symmetry}, is governed by the intrinsic operator alone.
\end{proposition}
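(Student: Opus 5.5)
The argument works in Fermi (tubular) coordinates around the manifold $M$. Near $M$ I write each point as $x=y+r\,e_\perp(y)$, where $y\in M$ is the projection, $r$ is the signed normal coordinate and $e_\perp$ is the unit normal. For a submanifold of higher codimension I use the normal frame and treat each normal coordinate the same way; I describe the codimension-one case for notation only. In the smoothed-manifold regime I take the density to have the local form
\[
\rho(y,r)=p(y)\,\phi_{\tau_\perp}(r)\,\bigl(1+O(r)\bigr),
\]
where $\phi_{\tau_\perp}$ is a centred Gaussian of variance $\tau_\perp^2$ and the $O(r)$ correction is the Jacobian of the tubular map, controlled by the curvature bound. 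Differentiating the logarithm, I expect the score to split as
\[
\nabla\log\rho=-\tau_\perp^{-2}\,r\,e_\perp+\nabla^\parallel\log p+O(1),
\]
which is the structure quantified by \citet{li2025scores}. I would record this as a lemma and cite it rather than rederive it.

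Next I split $g=g^\parallel+g^\perp$ and expand $\A_\rho g=\nabla\!\cdot g+g\cdot\nabla\log\rho$ term by term.

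\emph{Divergence.} In Fermi coordinates, $\nabla\!\cdot g$ equals the intrinsic divergence of $g^\parallel$ on the level set $\{r=\text{const}\}$, plus $\partial_r\langle g,e_\perp\rangle$, plus mean-curvature terms. Each extra term is $O(\|g\|_{C^1})$ times the curvature bound, so it is $O(1)$.

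\emph{Score pairing.} Pairing $g$ with the score gives three contributions:
\begin{itemize}
\item $g^\parallel\cdot\nabla^\parallel\log p$, which combines with the intrinsic divergence of $g^\parallel$ to form $\A^\parallel_\rho g^\parallel$;
\item the singular term $-\tau_\perp^{-2}\,r\,\langle g^\perp,e_\perp\rangle$;
\item an $O(1)$ remainder.
\end{itemize}
One point needs care: $g^\parallel$ is evaluated at $x$ rather than at $y$. Replacing it by its value at $y$ costs $O(r\|g\|_{C^1})$, which is harmless.

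For the sizes I use that $\E[r^2]\asymp\tau_\perp^2$ under $\phi_{\tau_\perp}$. This gives $\|\tau_\perp^{-2}r\langle g^\perp,e_\perp\rangle\|_{L^2(\rho)}\asymp\tau_\perp^{-1}\|g^\perp\|$. For the information, I form the Gram operator of the footprint map, blocked into tangential and normal parts. The normal--normal block is $\tau_\perp^{-4}\E[r^2]\,\|\cdot\|^2=\Theta(\tau_\perp^{-2})$. The tangential block is $\Theta(1)$. The key observation for the cross term is that $\A^\parallel g^\parallel$, evaluated at the projected point $y$, does not depend on $r$, while $r$ is odd under the leading Gaussian factor. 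So $\E[r\mid y]=O(\tau_\perp^2)$ once curvature corrections are included. This makes the cross block $O(1)$ and not $O(\tau_\perp^{-1})$, so the two blocks decouple.

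Inverting the block matrix through its Schur complement then gives the following. The normal parameters have variance $\tau_\perp^2$ times the tangential variance, which is the stated rate factor $\tau_\perp^2$. The tangential Schur complement equals the intrinsic information plus $O(\tau_\perp^2)$.

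Finally, the estimand is tangential by Prop.~\ref{prop:symmetry}, since rotations supported in the normal block are exact gauge. Its information is therefore governed by $\A^\parallel$ alone, up to vanishing corrections.

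The step I expect to be the main obstacle is the cross-block bound. It needs the approximate conditional symmetry $\E[r\mid y]=O(\tau_\perp^2)$ to hold uniformly under the curvature bound, together with enough decay of the density away from the tube to discard the region where Fermi coordinates fail. I would handle this by assuming the density factorization with a Jacobian $1+O(\kappa r)$ on a tube of radius $\gg\tau_\perp$, and treating the mass outside the tube as exponentially small. Conditional on these modelling assumptions of the smoothed-manifold regime, the remaining steps are bookkeeping.
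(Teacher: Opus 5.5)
Your proposal is correct and follows the paper's sketch essentially step for step: tubular coordinates with a product density (intrinsic factor times a width-$\tau_\perp$ Gaussian in the normal fibre), the score split $\nabla\log\rho=\nabla^\parallel\log p-\tau_\perp^{-2}r\,e_\perp+O(1)$ substituted into $\A_\rho g=\nabla\!\cdot g+g\cdot\nabla\log\rho$ with the mean-curvature and off-manifold distortion terms absorbed at $O(1)$, and $\E[r^2]=\tau_\perp^2$ giving the $\Theta(\tau_\perp^{-2})$ normal block. Your parity argument that the tangential--normal cross block is $O(1)$ rather than the Cauchy--Schwarz $O(\tau_\perp^{-1})$, and the Schur-complement conclusion built on it, go beyond the paper, which compares only the diagonal blocks; note that this step uses $O(r)$ control of $\A_\rho g^\parallel(x)-\A^\parallel g^\parallel(y)$, slightly more than $C^1$, although $C^1$ alone still leaves only an $o(1)$ correction to the tangential information.
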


\begin{proof}[Sketch]
Write $\rho$ in tubular coordinates as a product of an intrinsic density and a
Gaussian factor of width $\tau_\perp$ in the normal fibre. Then
$\nabla\log\rho=\nabla^\parallel\log\rho^\parallel
-\tau_\perp^{-2}re_\perp+O(1)$ with $r$ the signed normal coordinate, and
substituting into $\A_\rho g=\nabla\!\cdot g+g\cdot\nabla\log\rho$ gives the
stated split; the curvature bound controls the cross terms at $O(1)$ --- the
mean-curvature term from the volume element and the $O(r)$ curvature distortion of
tangential derivatives off the manifold. (For higher codimension read $re_\perp$ as
the normal displacement.) Because $\E[r^2]=\tau_\perp^2$ under the normal
factor, $\E[(\tau_\perp^{-2}r\langle g^\perp,e_\perp\rangle)^2]
=\tau_\perp^{-2}\,\E\langle g^\perp,e_\perp\rangle^2$: the normal block of the
information is $\Theta(\tau_\perp^{-2})$, which is the claimed separation.
Without the factor $r$ it would be $\Theta(\tau_\perp^{-4})$.
\end{proof}

Two cautions. The Euclidean diffusion in \eqref{eq:sde} is not silently
projected onto an estimated manifold; a genuinely intrinsic treatment requires
intrinsic differential operators and boundary conditions, which we do not
develop. And the separation is a statement about scales, not a selection
procedure: choosing $m$ and $\tau_\perp$ from data remains open.

Numerically, on a curved benchmark with normal width $\tau_\perp$ swept over
$1$, $0.1$, $0.03$, $0.01$, the within-manifold and moving-frame readouts
separate as predicted. At $\tau_\perp=1$ the two are comparable
($0.210\pm0.030$ and $0.201\pm0.087$ over five seeds); at $\tau_\perp=0.01$
with $10\%$ score error they differ by more than an order of magnitude
($0.954\pm0.085$ against $0.028\pm0.030$), and the score-free weak form gives
$0.116$ and $0.054$ on the same design.

\section{Stationary separable-OU specialization}\label{app:temporal-ou}

For a stationary reference with separable space--time structure the propagated
operator factorizes, and the temporal channel can be analysed in closed form.
Let the reference be an OU process at stationarity, so $\bar\rho_t\equiv
\bar\rho$, and let the temporal profile of the perturbation be $\varpi(t)$ with
relaxation rate $\nu$. Then $\mathcal B_K$ separates into a spatial factor,
which is the stationary Stein operator, and a temporal factor determined by
$\varpi$.

This regime is a warning, not a target. With $\bar\rho_t$ constant, all
snapshot shapes coincide, $r(x)\equiv0$ in Prop.~\ref{prop:count}, and
Thm.~\ref{thm:gauge} gives no contraction at all: the entire degree-$q$ gauge
$S_q$ survives. The separable calculation therefore describes how much
\emph{temporal} information a shared field supplies when the spatial shapes
supply none, and it should not be read as a rate for the transient problem.

\subsection{Cost of estimating the temporal spectrum}

Take one spatial mode of amplitude $\theta$ that is an eigenfunction of the
stationary tangent generator with eigenvalue $-\nu$. From $u_0=0$, the Duhamel
formula \eqref{eq:tangent} gives it the temporal profile
$\varpi(t;\nu)=(1-e^{-\nu t})/\nu$, so slice $k$ carries $\theta\,\varpi(t_k;\nu)$.
Suppose $\nu$ is itself unknown. With $\varpi:=(\varpi(t_k;\nu))_{k\le K}$ and
$\varpi_\nu:=(\partial_\nu \varpi(t_k;\nu))_{k\le K}$ in $\R^K$, the joint Fisher
information for $(\theta,\nu)$ is proportional to
$\bigl(\begin{smallmatrix}\|\varpi\|^2&\theta\langle \varpi,\varpi_\nu\rangle\\
\theta\langle \varpi,\varpi_\nu\rangle&\theta^2\|\varpi_\nu\|^2\end{smallmatrix}\bigr)$, and
profiling out $\nu$ multiplies the variance of $\widehat\theta$ by
\begin{equation}\label{eq:inflation}
    \frac{1}{1-\rho_{\mathrm{temp}}^2},
    \qquad
    \rho_{\mathrm{temp}}
    :=\frac{\langle \varpi,\varpi_\nu\rangle}{\|\varpi\|\,\|\varpi_\nu\|},
\end{equation}
where $\rho_{\mathrm{temp}}$ is the cosine between the profile and its
derivative in the \emph{parameter} $\nu$, not in time, and
$\rho_{\mathrm{temp}}^2$ is the squared cosine. With $K\ge2$ distinct times
$\varpi_\nu/\varpi$ is not constant in $k$, so $\rho^2_{\mathrm{temp}}<1$ and the
inflation is finite for every $\nu t_1$. It is nevertheless large once the mode
has relaxed before the first observation: the slices then see it only near its
plateau $\theta/\nu$, in which amplitude and rate are confounded, and separating
them requires the transient, of size $e^{-\nu t_1}$. Numerically the inflation
tracks $e^{2\nu t_1}/(\nu t_1)^2$ up to a constant (Fig.~\ref{fig:inflation}).

\begin{figure}[htb]
\centering
\includegraphics[width=0.55\linewidth]{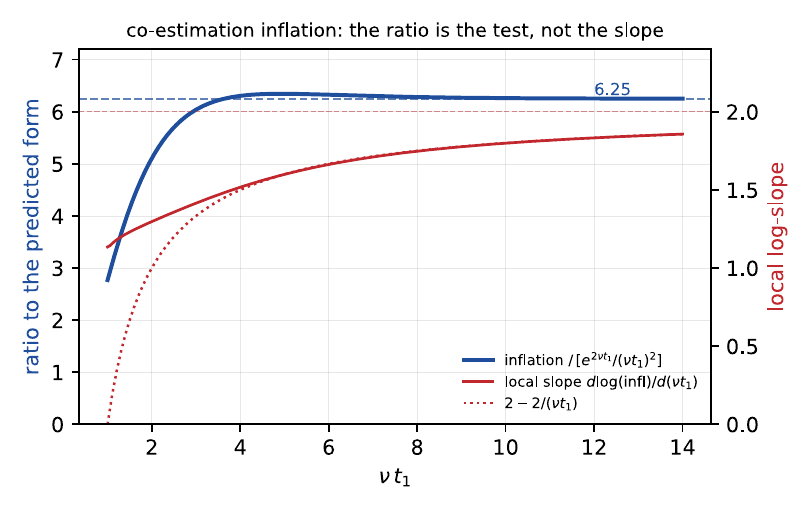}
\caption{\textbf{Cost of profiling the temporal nuisance.} Ratio of the exact
inflation \eqref{eq:inflation} to the surrogate $e^{2\nu t_1}/(\nu t_1)^2$
against $\nu t_1$, for $K=5$ observation times with
$t_k/t_1\in\{1,1.6,2.4,3.4,5\}$: the ratio settles at ${\approx}6.25$ for
$\nu t_1\gtrsim4$ while the inflation grows by more than seven orders of
magnitude, so the surrogate captures the growth and misses only a constant; the
local log-slope approaches $2$ only as $2-2/(\nu t_1)$. The inflation is finite
at every $\nu t_1$ and large once the mode has relaxed before the first
observation.}
\label{fig:inflation}
\end{figure}

This is a nuisance-conditioning statement about one scalar, and we state it as
such. Earlier drafts of this work claimed a phase transition in which
co-estimating the dynamics collapses the polynomial rate to a logarithmic one.
That claim was tied to a spectral model this paper no longer uses, and we
withdraw it: nothing here establishes a dichotomy, and \eqref{eq:inflation} is
a finite factor for each mode, not a change of rate.

\section{Strong-form diagnostics and failed repairs}\label{app:strong}

\subsection{First-order nuisance sensitivity}

\begin{proof}[Proof of Prop.~\ref{prop:notorth}]
Write the strong-form residual at slice $k$ as
$r_\theta=\A^{\eta}_{\rho_k}F_\theta-b^\eta_k$, where $\eta$ collects the score
and temporal-source nuisances, and the moment as $m_p(\theta,\eta)
=\E[r_\theta\chi_p]$. Then $\partial_\eta m_p=\E[(\partial_\eta
\A^\eta F_\theta-\partial_\eta b^\eta)\chi_p]$, which contains
$\E[F_\theta\cdot\delta s\,\chi_p]$ and does not vanish at the truth unless
$F_\theta$ is orthogonal to the score perturbation in every probe direction.
Solving the moment equations to first order, the induced parameter bias is
$-\mathcal H^{-1}\partial_\eta m\,\delta\eta$ with $\mathcal H$ the profiled
Hessian of Prop.~\ref{prop:hessian}. Reading out a solenoidal coordinate with
curvature $\lambda$ multiplies the bias by $\lambda^{-1}$. Since $\lambda\to0$
precisely along near-gauge directions, nuisance error is amplified exactly
where the signal is weakest. Exact orthogonalization of this moment is
unavailable whenever $F$ has a nonzero solenoidal component, because
$\partial_\eta m$ then has a component along the solenoidal block that no
reweighting of $\chi_p$ removes.
\end{proof}

\begin{proposition}[Filtering cannot separate signal from nuisance bias]
\label{prop:filter}
Let $\mathcal H=\sum_i\lambda_iv_iv_i^\top$ be the profiled Hessian in the
solenoidal basis and let a spectral filter act diagonally,
$\widehat\theta_\phi=\sum_i\phi(\lambda_i)\lambda_i^{-1}v_i^\top z$, where $z$ is the right-hand side of the profiled normal equations. Then both
the signal and the nuisance bias along $v_i$ are multiplied by the same factor
$\phi(\lambda_i)$, so the ratio of bias to signal in every coordinate is
invariant under $\phi$. No diagonal filter improves it.
\end{proposition}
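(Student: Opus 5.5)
The plan is to write the first-order perturbed right-hand side of the profiled normal equations as a sum of a signal term and a nuisance-bias term. Then I check that the diagonal filter acts on both through the same scalar in each eigencoordinate.

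\emph{Signal and bias decomposition.} Linearize around the truth $(\theta^\star,\eta^\star)$, as in the proof of Prop.~\ref{prop:notorth}. The right-hand side becomes
\[
z=\mathcal H\theta^\star-\partial_\eta m\,\delta\eta+o(\delta\eta).
\]
Here $\mathcal H$ is the profiled Hessian and $-\partial_\eta m\,\delta\eta$ is the first-order nuisance contribution. Profiling the gradient block is a fixed linear map, namely the Schur complement of Prop.~\ref{prop:hessian}. It therefore acts identically on both pieces, and $z$ splits into a signal part $z_{\rm sig}=\mathcal H\theta^\star$ and a bias part $z_{\rm bias}=-\partial_\eta m\,\delta\eta$.

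\emph{Coordinatewise comparison.} Substitute this split into $\widehat\theta_\phi=\sum_i\phi(\lambda_i)\lambda_i^{-1}v_iv_i^\top z$. Since $v_i^\top\mathcal H\theta^\star=\lambda_iv_i^\top\theta^\star$, the coefficient along $v_i$ is
\[
\phi(\lambda_i)\,v_i^\top\theta^\star\;+\;\phi(\lambda_i)\lambda_i^{-1}v_i^\top z_{\rm bias}.
\]
The unfiltered estimator corresponds to $\phi\equiv1$, and its signal and bias components along $v_i$ are $v_i^\top\theta^\star$ and $\lambda_i^{-1}v_i^\top z_{\rm bias}$. The filtered components are exactly these multiplied by the common factor $\phi(\lambda_i)$. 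Whenever $\phi(\lambda_i)\neq0$, the ratio of bias to signal in coordinate $i$ is $\lambda_i^{-1}v_i^\top z_{\rm bias}/v_i^\top\theta^\star$, which does not depend on $\phi$. If $\phi(\lambda_i)=0$, both components vanish, so the coordinate is discarded rather than corrected. Either way no diagonal filter improves the ratio, which gives the proposition. The $\lambda_i^{-1}$ amplification in the bias term is exactly the one identified in Prop.~\ref{prop:notorth}.

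\emph{Main obstacle.} The only subtle point is scope. The argument needs the filter to be diagonal in the same eigenbasis $\{v_i\}$ that diagonalizes the signal map $\mathcal H$. A filter that mixes eigencoordinates, or one built from a different operator, would not commute with $\mathcal H$, and the common-factor argument would fail. I therefore restrict explicitly to diagonal filters in the profiled basis, as the statement and the ledger entry do. I also restrict to first order in $\delta\eta$, so that the $o(\delta\eta)$ remainder does not break the exact factorization.
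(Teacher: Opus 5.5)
Your proposal is correct and follows the paper's argument. The paper's proof simply notes that the signal and the first-order nuisance contribution both enter $v_i^\top z$ linearly and are scaled by the same factor $\phi(\lambda_i)\lambda_i^{-1}$; your split $z=\mathcal H\theta^\star-\partial_\eta m\,\delta\eta$ makes that coordinatewise computation explicit, and your scope remarks (the $\phi(\lambda_i)=0$ case, filters diagonal in the profiled eigenbasis) match the paper's, with the ratio of course defined only where $v_i^\top\theta^\star\neq0$.
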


\begin{proof}
Both the signal contribution and the first-order nuisance bias enter
$v_i^\top z$ linearly and are acted on by the same scalar
$\phi(\lambda_i)\lambda_i^{-1}$; their ratio is therefore unchanged. Filtering
trades variance against bias uniformly, which is useful, but it cannot separate
two contributions that occupy the same eigendirection.
\end{proof}

Fig.~\ref{fig:repair} separates the two failures: the optimization one is a
solver choice, the nuisance one is structural.

\subsection{The coefficient-side correction is not an orthogonalization}

A natural repair solves $\mathcal H^\top r_c=\ell$ for a Riesz representer and
reports $\ell^\top a-r_c^\top m(a,\eta)$. In the local model
$m(a,\eta)=\mathcal H(a-a^\star)+b(\eta)$ with $b(\eta^\star)=0$, the corrected
functional is exactly $\ell^\top a^\star-r_c^\top b(\eta)$, whose nuisance
derivative is $-r_c^\top D_\eta b$ --- generally nonzero. The correction removes
first-order sensitivity to the \emph{initial coefficient estimate}, not to the
nuisance, and at a same-sample exact moment root it is identically zero.
Cross-fitting does not change this derivative. We therefore describe the
implemented method as a one-step estimating-equation correction and report its
behaviour empirically; we do not claim Neyman orthogonality, and
Prop.~\ref{prop:notorth} is not a proof that every orthogonal construction must
fail.

\begin{proposition}[The targeting window]\label{prop:targetsnr}
The correction's own sampling error is amplified by the same $\lambda^{-1}$
that amplifies the bias it removes. The corrected estimator improves on the
plug-in only when the nuisance error lies below a threshold set by that
amplification.
\end{proposition}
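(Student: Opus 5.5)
We would prove the proposition in the local model already used for the coefficient-side correction, extended by one order. Write the sample moment as
\[
\widehat m(a,\eta)=\mathcal H(a-a^\star)+b(\eta)+\varepsilon+\mathcal R(a-a^\star,\eta-\eta^\star),
\]
where $\varepsilon$ is the centered sampling error of the moments with covariance of order $n^{-1}$ and $\mathcal R$ is the second-order Taylor remainder. We will bound $\mathcal R$ by $\|\mathcal R(u,\delta)\|\le C_2(\|u\|^2+\|\delta\|^2)$ on a neighbourhood of the truth, assuming bounded second derivatives of $m$. We then work along a single solenoidal eigendirection $v$ of the profiled Hessian of Prop.~\ref{prop:hessian}, with curvature $\lambda$ and readout $\ell=\ell_v v$. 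On that direction the Riesz representer solving $\mathcal H^\top r_c=\ell$ is $r_c=(\ell_v/\lambda)v$. This one computation already gives the first sentence of the statement: the correction term $-r_c^\top\widehat m$ contains $-r_c^\top\varepsilon$, whose variance is $\ell_v^2\,\Var(v^\top\varepsilon)/\lambda^2$. The correction's own sampling error is therefore amplified by exactly the factor $\lambda^{-1}$ that Prop.~\ref{prop:notorth} attaches to the nuisance bias.

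\textbf{Plug-in error.} By Prop.~\ref{prop:notorth}, the plug-in coefficient satisfies
\[
\widehat a-a^\star=-\mathcal H^{-1}\bigl(b(\widehat\eta)+\varepsilon\bigr)+O(\text{second order}).
\]
Its readout error along $v$ is therefore $-\lambda^{-1}\ell_v\,v^\top\bigl(D_\eta b\,\delta\eta+\varepsilon\bigr)$ plus higher-order terms, where $\delta\eta:=\widehat\eta-\eta^\star$. In particular $\|\widehat a-a^\star\|\lesssim\lambda^{-1}\bigl(\|D_\eta b\|\,\|\delta\eta\|+\|\varepsilon\|\bigr)$.

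\textbf{Corrected error.} Evaluating the identity of the previous subsection, the first-order terms in $\widehat a-a^\star$ cancel. What remains is
\[
-r_c^\top\bigl(b(\widehat\eta)+\varepsilon\bigr)-r_c^\top\mathcal R(\widehat a-a^\star,\delta\eta).
\]
The first piece is the same first-order nuisance-plus-noise term as in the plug-in; the correction does not remove it, which is the non-orthogonality already recorded. The genuinely new piece is the remainder. Substituting the bound on $\widehat a-a^\star$ gives a contribution of order
\[
\lambda^{-1}C_2\bigl(\lambda^{-2}\|\delta\eta\|^2+\|\delta\eta\|^2\bigr)\asymp C_2\lambda^{-3}\|\delta\eta\|^2
\]
for small $\lambda$.

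\textbf{Comparison.} The correction helps only if its error is smaller than the plug-in's. The plug-in's extra error over the corrected version is the initial-estimate sensitivity, of size $\asymp\lambda^{-1}\|D_\eta b\|\,\|\delta\eta\|$ at first order. Against this, the correction introduces a remainder of size $C_2\lambda^{-3}\|\delta\eta\|^2$ and sampling variance amplified by $\lambda^{-2}$. Requiring the remainder to be dominated gives
\[
C_2\lambda^{-3}\|\delta\eta\|^2\lesssim\lambda^{-1}\|\delta\eta\|,\qquad\text{that is,}\qquad \|\delta\eta\|\lesssim\lambda^2/C_2.
\]
This is a threshold on the nuisance error set by the amplification $\lambda^{-1}$, and it closes along near-gauge directions as $\lambda\to0$. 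Above it, the quadratic term amplified by $\lambda^{-3}$ dominates and the corrected estimator is worse than the plug-in.

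\textbf{Main obstacle.} The hard step is making the "only when" direction rigorous. We need a matching lower bound showing that the $\lambda^{-3}\|\delta\eta\|^2$ term cannot cancel against the first-order term. We would obtain it by exhibiting a nuisance perturbation $\delta\eta$ along which $v^\top\mathcal R$ is bounded below, for example a score perturbation aligned with $v$ in the Gaussian linear model, where $\mathcal R$ is explicit through the fourth-moment identity in the proof of Prop.~\ref{prop:hessian}. Failing that, we would state the threshold as a sufficient-and-necessary-up-to-constants condition under a nondegenerate-curvature hypothesis on $\mathcal R$.
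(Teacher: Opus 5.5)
Your representer $r_c=(\ell_v/\lambda)v$ and the $\lambda^{-2}$ scaling of the variance of $r_c^\top\varepsilon$ are correct. That settles the first sentence: the single representer $r_c=\mathcal H^{-\top}\ell$ multiplies both $b(\widehat\eta)$ and $\varepsilon$ in the correction term. The paper states this proposition without a separate proof. It relies on the local-model identity of the preceding subsection and on the measured window in Fig.~\ref{fig:targeting}.

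The gap is in your comparison step. The plug-in you analyse is the same-sample moment root, and for that plug-in your own expansions give the two estimators identical errors:
\begin{itemize}
\item To first order the plug-in readout error is $-\lambda^{-1}\ell_v\,v^\top(D_\eta b\,\delta\eta+\varepsilon)$. As you yourself observe, this is exactly the piece of the corrected error that the correction keeps.
\item To second order the root satisfies $\mathcal H(\widehat a-a^\star)=-(b(\widehat\eta)+\varepsilon+\mathcal R)$, so it carries the same remainder $-r_c^\top\mathcal R$.
\item At the root $\widehat m(\widehat a,\widehat\eta)=0$, so the correction term vanishes identically, as the paper points out.
\end{itemize}
Hence there is no first-order plug-in ``extra error'' of size $\lambda^{-1}\|D_\eta b\|\,\|\delta\eta\|$, and no remainder ``introduced'' by the correction. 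Your threshold $\|\delta\eta\|\lesssim\lambda^2/C_2$ weighs against each other two terms that both estimators contain. No lower bound on $\mathcal R$, which you name as the main obstacle, can turn it into an ``only when'' statement. Separately, for a field linear in its coefficients $\widehat m$ is affine in $a$. Then $\mathcal R$ has no $\|u\|^2$ part, only a bilinear score cross term, and your $\lambda^{-3}$ would be $\lambda^{-2}$.

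What is missing is a plug-in whose error the correction actually removes, namely a pilot that is not the same-sample root. Examples are a filtered or ridge solution as in Prop.~\ref{prop:filter}, an early-stopped descent, or a fit computed elsewhere. The correction removes the pilot's deviation from the root, not the nuisance bias. In the local model the corrected readout is then exactly $\ell^\top a^\star-r_c^\top(b(\widehat\eta)+\varepsilon)$, whatever the pilot. So the correction trades the pilot's error along $v$ for $-\lambda^{-1}\ell_v\,v^\top(b(\widehat\eta)+\varepsilon)$, and the window is a first-order comparison with no remainder at all.

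For instance, take a filter factor $\phi=\phi(\lambda)\in[0,1)$, fixed $\delta\eta$, and work to first order in $\delta\eta$. Set $\theta_v^\star:=v^\top a^\star$, $B:=\lambda^{-1}v^\top D_\eta b\,\delta\eta$ and $V:=\lambda^{-2}\operatorname{Var}(v^\top\varepsilon)$. The corrected mean squared readout error is $\ell_v^2(B^2+V)$, against $\ell_v^2\bigl[((1-\phi)\theta_v^\star+\phi B)^2+\phi^2V\bigr]$ for the pilot. The correction wins iff $\bigl((1+\phi)B-\phi\theta_v^\star\bigr)^2<(\theta_v^\star)^2-(1+\phi)^2V$. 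This condition has two consequences:
\begin{itemize}
\item It forces $|B|<|\theta_v^\star|$, i.e.\ $|v^\top D_\eta b\,\delta\eta|<\lambda|\theta_v^\star|$, a nuisance threshold linear in $\lambda$.
\item It leaves no window at all once $(1+\phi)^2V\ge(\theta_v^\star)^2$, that is, once the $\lambda^{-1}$-amplified sampling error alone matches the signal.
\end{itemize}
This is the second sentence of the statement, obtained without any second-order analysis.
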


Measured on the planted-curl benchmark over eight paired seeds
(Fig.~\ref{fig:targeting}), the threshold sits below the practical
denoising-score-matching error floor: the targeting study's plug-in at
$\varepsilon=0.10$ score error gives $2.547\pm1.201$ (the Gauss--Newton arm of
Tab.~\ref{tab:ou}, on a different eight-seed set, gives $2.461\pm0.689$) and the targeted variant
does not recover the score-free weak form's performance. The two failure modes
--- amplitude bias from the temporal log-density value channel, and gradient
error from the score --- separate cleanly: a probe-wise decomposition attributes
$38\%$ to the former against $7\%$ to the latter. Denoising score matching
supervises $\nabla\log\rho_t$ but not the additive normalization of
$\log\rho_t$, which is exactly the quantity \eqref{eq:weak} eliminates.

\begin{figure}[htb]
\centering
\includegraphics[width=0.92\linewidth]{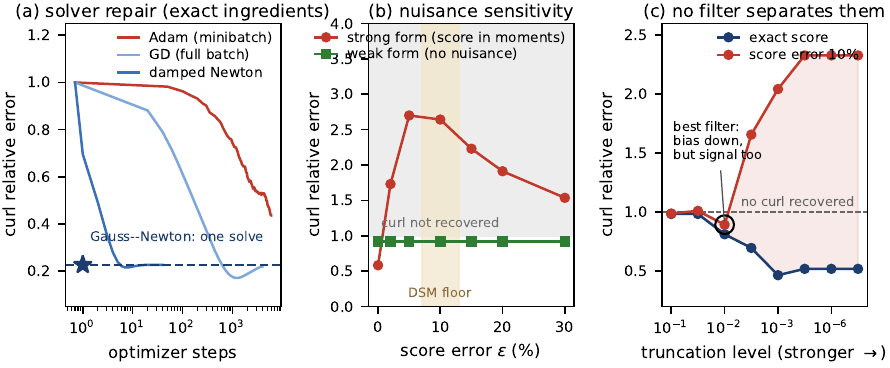}
\caption{\textbf{Repairing the strong form, and the limit of repair.}
(a) The optimization failure is a solver choice: Gauss--Newton returns the exact
minimizer in one linear solve, damped Newton in ${\sim}5$ steps, first-order
methods in $10^3$--$10^4$. (b) The nuisance failure is structural: as score
error grows the strong form degrades through the ``no curl recovered'' line
while the weak form stays flat, its moments containing no score; the band is the
$7$--$13\%$ error floor of denoising score matching. (c) Filtering cannot rescue
it (Prop.~\ref{prop:filter}) --- the truncation that best suppresses the
nuisance bias suppresses the curl with it.}
\label{fig:repair}
\end{figure}

\begin{figure}[htb]
\centering
\includegraphics[width=\linewidth]{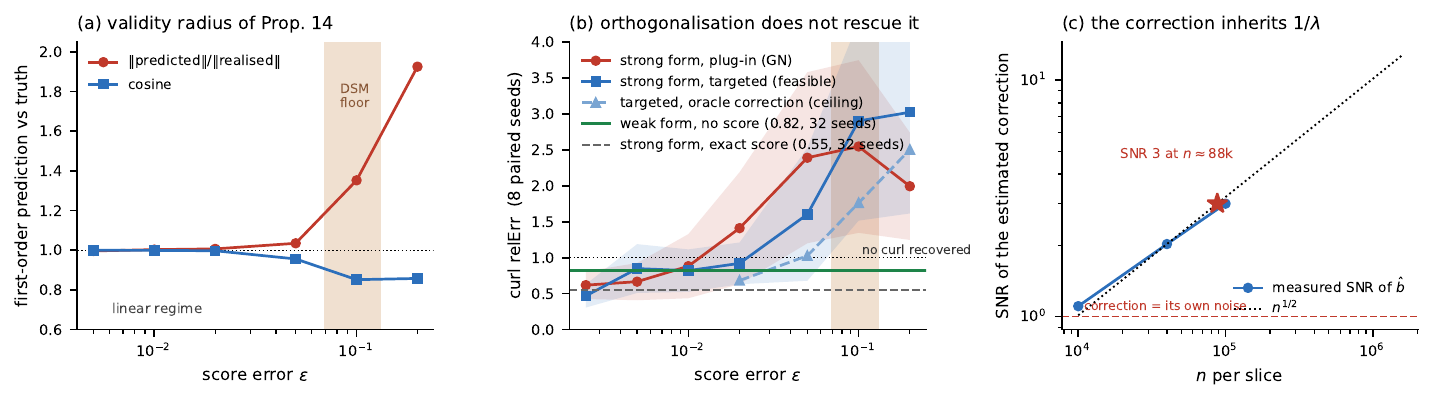}
\caption{\textbf{The targeting window.} Correcting the strong-form moment helps
only for score error $\varepsilon\lesssim0.05$, and the window shuts below the
practical denoising-score-matching floor, so the one-step route cannot rescue
the strong form at feasible nuisance quality (Prop.~\ref{prop:targetsnr}). The
correction's own sampling error is amplified by the same $\lambda^{-1}$ that
amplifies the bias it removes.}
\label{fig:targeting}
\end{figure}

\subsection{Reference arms}

With exact score and temporal source, a closed-form linear least squares
reaches solenoidal relative error $0.351$; a Gauss--Newton solve on the same
objective gives $0.535\pm0.177$ over eight seeds and $0.547\pm0.182$ over
thirty-two; Adam on the same objective gives $0.510\pm0.123$. Fitting the score
alone raises the closed-form arm to $0.862$, and fitting both score and source
to $2.19$. These arms are handed oracle nuisances and are reported as
references, not as competitors to the weak form.

\section{Weak-form implementation}\label{app:weak-impl}

\subsection{Assembly}

Probes are tensor frames $\psi_e$, either monomial $\prod_ax_a^{e_a}$ or
normalized probabilists' Hermite $\prod_a\Psi_{e_a}(x_a/\hat\varsigma_a)$, with $\Psi_k$ the
univariate case of the basis of \S\ref{sec:how-much} and $\hat\varsigma$ the pooled
per-coordinate standard deviation. Both require gradients and
Laplacians; for the Hermite frame $\Psi_k'=\sqrt k\,\Psi_{k-1}$ gives
$\partial_a\psi_e=\hat\varsigma_a^{-1}\sqrt{e_a}\,\psi_{e-e_a}$ and
$\Delta\psi_e=\sum_a\hat\varsigma_a^{-2}\sqrt{e_a(e_a-1)}\,\psi_{e-2e_a}$, and for the
monomial frame $\Delta x^e=\sum_ae_a(e_a-1)x^{e-2e_a}$. The Laplacian is what
carries the $\sigma^2\Delta\psi$ term of \eqref{eq:weak}; omitting it silently
changes the estimand, as \S\ref{sec:exp} quantifies. For degree-two monomial
probes $\Delta\psi$ is constant, so the correction is a deterministic offset to
the response and leaves the noise model untouched.

The design entry for field basis element $\phi_p=x^{e_p}e_{i_p}$ and probe
$\psi_j$ is $\tfrac{\Delta_k}{2}(\E_k+\E_{k+1})[\partial_{i_p}\psi_j\cdot
x^{e_p}]$, an empirical moment of a single monomial; the response is
$\E_{k+1}[\psi_j]-\E_k[\psi_j]-\tfrac{\Delta_k}{2}(\E_k+\E_{k+1})
[\sigma^2\Delta\psi_j]$. Both sides are computable from raw samples with no
score, density, or pointwise source.

\subsection{Feasible GLS}

Prop.~\ref{prop:gls} gives the covariance at a fixed coefficient vector, and
the covariance depends on $a$ through $f_a$. We use a same-sample unweighted
least-squares pilot $\widehat a_0$, build $\widehat V(\widehat a_0)$, and take
one frozen GLS step; the one-step feasible variant that rebuilds
$\widehat V(\widehat a_1)$ is reported separately where it differs. The
covariance is regularized by an eigenvalue floor at $10^{-10}$ times the
largest eigenvalue before the Cholesky factorization, and singular blocks are
handled by the same truncation as the design. The distinction between the exact
population covariance algebra and the estimated whitening matters: the
finite-sample coverage of a same-sample pilot is not the coverage the
fixed-weight proposition describes, and we do not claim it.

\subsection{Truncation, gauge exclusion, and the quadrature floor}

Fig.~\ref{fig:estimation} collects the recovery, calibration and conditioning
diagnostics. The whitened design is column-equilibrated before a truncated SVD
solve, so the
truncation metric is comparable across monomial columns spanning orders of
magnitude. The cutoff is selected by split-sample scoring on independently
assembled halves, with a tie-break preferring the least truncation among
candidates within $15\%$ of the best score: the half-sample fit has a
$\sqrt2$-higher noise floor, so its optimum is biased toward over-truncation.

Gauge columns are excluded structurally rather than penalized. For the linear
benchmark these are the zero-curvature directions of Prop.~\ref{prop:hessian};
for the Hermite construction they are the blockwise kernels of
\S\ref{sec:how-much}. A downstream monitor verifies that the fitted field has
exactly zero component along the declared gauge.

Because the snapshot times are fixed, increasing $n$ does not remove the
trapezoidal bias. The residual of \eqref{eq:weak} is $O(\Delta_k^3)$ per
interval, but the design entries carry the factor $\Delta_k/2$, so the induced
bias in the coefficients, and hence in the drift, is $O(\Delta^2)$. On a two-dimensional OU control with exact
moments, the population drift error falls as $\Delta^2$ --- $3.0\times10^{-2}$,
$7.5\times10^{-3}$, $1.9\times10^{-3}$, $4.6\times10^{-4}$ at gaps $0.2$,
$0.1$, $0.05$, $0.025$ --- confirming the order and showing that the floor is a
property of the design, not of the sample size. On the main benchmark the
corresponding floor is $0.033$ in solenoidal relative error.

\begin{figure}[htb]
\centering
\includegraphics[width=\linewidth]{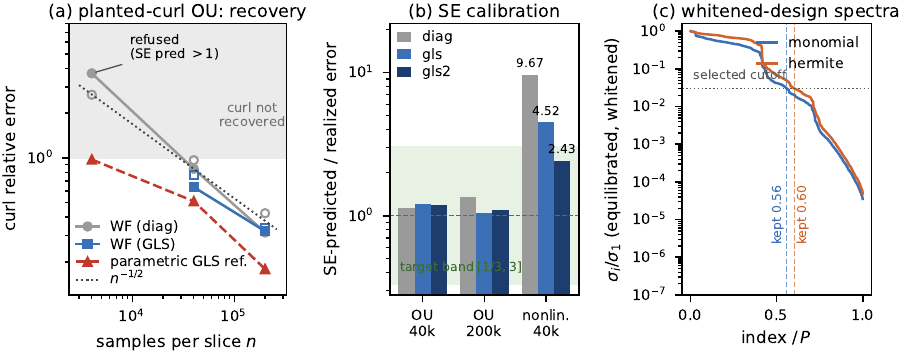}
\caption{\textbf{Recovery, calibration, conditioning.} (a) Curl error against
$n$, filled for realized and open for standard-error predicted; the power check
of Prop.~\ref{prop:hessian} refuses the $n{=}4$k design. (c) Equilibrated
spectra of the whitened nonlinear design: Hermite frames retain more of the
spectrum and decay more slowly, which is where their advantage lies.
\textbf{Panel (b) is retained as a record of a withdrawn claim} and should not
be read as a current result: its $9.7\times$ calibration gap was measured on the
lineage-tracked design of App.~\ref{app:caseB}, where the slices are not
independent, and with independent slices the diagonal predictor is calibrated at
$1.00\times$ with nothing to repair (App.~\ref{app:log}). Panels (a) and (c)
were computed under the velocity convention and are qualitatively unchanged
under the correction, but their absolute values are those of
App.~\ref{app:log}'s first column.}
\label{fig:estimation}
\end{figure}

\section{A snapshot-only finite-basis error bound}\label{app:weak-proof}

Thm.~\ref{thm:oracle-rate} is a statement about the tangent experiment and its degreewise benchmark,
under unproved assumptions.
This appendix gives a complementary guarantee of a different kind: finite
sample, snapshot only, no Gaussian-reference or spectral hypothesis, for the
estimator actually implemented. It is much weaker than the oracle rate --- it
says nothing about a nonparametric exponent --- but it is unconditional in the
assumptions the rate needs, and it makes the quadrature obstruction explicit.

Fix the probe set, the field basis, and a retained coefficient subspace with
orthonormal coordinates, and restrict $\widehat X$ to that subspace. Write
$X=\E\widehat X$, $b=\E\widehat b$, and $r=b-Xa^\star$ for the population
residual of a reference coefficient vector $a^\star$.

\begin{theorem}[Snapshot-only finite-basis error bound]\label{thm:weak-bound}
Let $a^\star$ and the whitener $W$ be deterministic, and let
$\alpha:=s_{\min}(WX)>0$ on the retained subspace. Suppose the required second
moments are finite and
$\Pr(\|W(\widehat X-X)\|_{\mathrm{op}}>\alpha/2)\le\eta$. Then for the
unregularized solution $\widehat a_0$ of
$\min_a\|W(\widehat b-\widehat Xa)\|^2$ and any linear readout $C$, with
probability at least $1-\eta-\delta$ and $0<\delta<1$,
\begin{equation}\label{eq:weak-bound}
    \bigl\|C(\widehat a_0-a^\star)\bigr\|
    \le
    \frac{2\|C\|_{\mathrm{op}}}{\alpha}
    \left(\|Wr\|+\sqrt{\frac{\tr\bigl(WV(a^\star)W^\top\bigr)}{\delta}}\right).
\end{equation}
One may take $\eta\le4\,\E\|W(\widehat X-X)\|_F^2/\alpha^2$. The statement holds
conditionally on an independent pilot fixing $W$ and the retained subspace,
provided the conditional hypotheses hold.
\end{theorem}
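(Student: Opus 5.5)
The plan is a standard perturbation argument for least squares with a random design. First reduce everything to the whitened system. Put $A:=WX$, $\widehat A:=W\widehat X$ and $E:=\widehat A-A$. Let $\varepsilon:=W\bigl((\widehat b-\widehat Xa^\star)-(b-Xa^\star)\bigr)$ be the centered sampling fluctuation of the whitened residual at the fixed vector $a^\star$. Then $W(\widehat b-\widehat Xa^\star)=Wr+\varepsilon$. On the event $\mathcal E_1:=\{\|E\|_{\mathrm{op}}\le\alpha/2\}$, Weyl's inequality gives $s_{\min}(\widehat A)\ge\alpha-\alpha/2=\alpha/2>0$. Hence $\widehat A$ has full column rank on the retained subspace, $\widehat a_0$ is unique, and
\[
\widehat a_0-a^\star=\widehat A^{+}W(\widehat b-\widehat Xa^\star)=\widehat A^{+}(Wr+\varepsilon).
\]
Since $\|\widehat A^{+}\|_{\mathrm{op}}=1/s_{\min}(\widehat A)\le2/\alpha$, it follows that $\|C(\widehat a_0-a^\star)\|\le(2\|C\|_{\mathrm{op}}/\alpha)(\|Wr\|+\|\varepsilon\|)$. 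The deterministic bias $\|Wr\|$ therefore enters untouched, and this is the quadrature floor.

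Second, control $\|\varepsilon\|$. Because $a^\star$ and $W$ are deterministic, $\widehat b-\widehat Xa^\star$ is the stacked vector of empirical means of the functions $u_k^{\mp}(\cdot;a^\star)$ of Prop.~\ref{prop:gls}. Unbiasedness of empirical means gives $\E\varepsilon=0$. The covariance computation of Prop.~\ref{prop:gls} shows that its covariance is exactly $WV(a^\star)W^\top$, since the response and design in \eqref{eq:weak} are assembled from the same slice means. Then $\E\|\varepsilon\|^2=\tr(WV(a^\star)W^\top)$. Markov's inequality applied to $\|\varepsilon\|^2$ gives $\|\varepsilon\|\le\sqrt{\tr(WV(a^\star)W^\top)/\delta}$ with probability at least $1-\delta$. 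A union bound with $\Pr(\mathcal E_1^c)\le\eta$ yields \eqref{eq:weak-bound} with probability at least $1-\eta-\delta$. The step needing the most care is matching the sign and bookkeeping conventions of $u_k^{\mp}$ so that $\Cov(\varepsilon)$ is literally $WV(a^\star)W^\top$. This means checking that the diffusion term and the $\tfrac{\Delta_k}{2}$ weights in $\widehat b_{kj}$ and $\widehat X_{kj,p}$ combine into $-\Psi\pm\tfrac{\Delta_k}{2}f_{a^\star}$ per slice. I expect this to be routine but fiddly. Only finite second moments are used, which is why the hypothesis asks for nothing more.

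Third, the explicit choice of $\eta$ follows from Markov again: $\Pr(\|E\|_{\mathrm{op}}>\alpha/2)\le\Pr(\|E\|_F^2>\alpha^2/4)\le4\E\|E\|_F^2/\alpha^2$, using $\|\cdot\|_{\mathrm{op}}\le\|\cdot\|_F$. Finally, for the pilot version, condition on the independent pilot's $\sigma$-field. Given the pilot, $W$ and the retained subspace are fixed and the snapshot samples keep their law, so every step above holds conditionally with the conditional hypotheses. The unconditional statement then follows by integrating if desired. The one obstacle I anticipate beyond the bookkeeping is ensuring the retained subspace is fixed before $\widehat X$ is formed; the independence assumption is exactly what supplies this.
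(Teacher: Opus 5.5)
Your proposal is correct and matches the paper's proof step for step. Both use Weyl's inequality on $\{\|E\|_{\mathrm{op}}\le\alpha/2\}$, the identity $\widehat a_0-a^\star=\widehat A^{+}(Wr+\varepsilon)$ with $\varepsilon$ the centred whitened moment at $a^\star$ (the paper's $Z=W(\widehat b-b)-Ea^\star$), Chebyshev via $\E\|\varepsilon\|^2=\tr(WV(a^\star)W^\top)$, a union bound, and Markov on $\|E\|_F^2$ for $\eta$.

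The only slip is in your bookkeeping aside. The per-slice functions are $u_k^{\mp}=\mp\Psi-\tfrac{\Delta_k}{2}f_{a^\star}$, with $-\Psi$ from slice $k$ and $+\Psi$ from slice $k+1$, not $-\Psi\pm\tfrac{\Delta_k}{2}f_{a^\star}$; your version would flip the sign of the off-diagonal blocks of $V$.
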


\begin{proof}
Write $E:=W(\widehat X-X)$ and $\varepsilon:=W(\widehat b-b)$, and let
$M:=WX$. On the event $\mathcal E:=\{\|E\|_{\mathrm{op}}\le\alpha/2\}$, which
has probability at least $1-\eta$, Weyl's inequality gives
$s_{\min}(M+E)\ge\alpha-\|E\|_{\mathrm{op}}\ge\alpha/2>0$, so $M+E$ has
full column rank on the retained subspace, the least-squares solution is unique,
$\widehat a_0=(M+E)^{+}W\widehat b$, and $(M+E)^+(M+E)=I$ there with
$\|(M+E)^+\|_{\mathrm{op}}\le2/\alpha$. The product $(M+E)\widehat a_0$ is only
the projection of $W\widehat b$ onto the range of $M+E$, so we do not equate it
with $W\widehat b$; instead, since $M+E=W\widehat X$,
\[
    \widehat a_0-a^\star=(M+E)^+\bigl(W\widehat b-(M+E)a^\star\bigr),
    \qquad
    W\widehat b-(M+E)a^\star=W(\widehat b-\widehat Xa^\star)=Wr+Z,
\]
with $Z:=W(\widehat b-\widehat Xa^\star)-Wr=\varepsilon-Ea^\star$, using
$Wb=Wr+Ma^\star$. Hence, on $\mathcal E$,
$\|C(\widehat a_0-a^\star)\|\le\frac{2\|C\|_{\mathrm{op}}}{\alpha}
(\|Wr\|+\|Z\|)$. Since $\E[\widehat b-\widehat Xa^\star]=b-Xa^\star=r$, $Z$ is
the centred whitened empirical moment at $a^\star$, with mean zero and covariance
$WV(a^\star)W^\top$, $V$ the block-tridiagonal covariance of
Prop.~\ref{prop:gls}. Chebyshev's inequality in the form
$\Pr(\|Z\|^2>\tr\Cov(Z)/\delta)\le\delta$ for mean-zero $Z$ gives
\eqref{eq:weak-bound} on the intersection, of probability at least
$1-\eta-\delta$. For the bound on $\eta$, Markov's inequality applied to
$\|E\|_{\mathrm{op}}^2\le\|E\|_F^2$ gives
$\Pr(\|E\|_{\mathrm{op}}>\alpha/2)\le4\E\|E\|_F^2/\alpha^2$.
\end{proof}

Three consequences deserve statement, because they are what the theorem is for.

\emph{Sampling error and quadrature bias are different terms.} The second
bracket in \eqref{eq:weak-bound} is $O(n_{\min}^{-1/2})$ through
$V(a^\star)$; the first, $\|Wr\|$, is not stochastic at all. At fixed snapshot
times $r$ is the whitened trapezoidal residual, of order $\Delta_{\max}^3$ in
moment scale by \eqref{eq:weak}, and no amount of data removes it. The design
entries carry the factor $\Delta_k/2$ while the whitener does not shrink with
the gaps (the moment noise of Prop.~\ref{prop:gls} contains the gap-independent
term $\mp\Psi$), so $\alpha$ scales with the gaps and, for comparable gaps, the
amplified bias $\|Wr\|/\alpha$ is $O(\Delta_{\max}^2)$ in coefficient scale ---
the formal version of the $\Delta^2$ floor measured in App.~\ref{app:weak-impl}.

\emph{Both terms are amplified by $1/\alpha$.} The same near-gauge
ill-conditioning that governs the oracle spectrum governs the finite-basis
bound, which is why gauge exclusion --- removing directions with $\alpha=0$
rather than penalizing them --- is not a numerical convenience.

\emph{The population target need not be the truth.} With $W$ fixed the
population least-squares target is $(WX)^{+}Wb$, which equals $a^\star$ only
when $Wr\perp\operatorname{range}(WX)$. Consistency for the drift along
refining grids therefore requires the amplified bias $\|Wr\|/\alpha\to0$ as
well as design stability; increasing $n$ alone does not deliver it. A readout
in a moment-null direction is not identified at all unless $C$ annihilates that
null space, and excluding such a direction is a modelling convention rather
than recovery of its coefficient.

Two scope limits. The theorem needs $\alpha>0$ on the retained subspace, and
empirically small singular values indicate weak identification, not a proved
structural gauge: the exclusion in $\mathcal A_{\rm vis}$ is certified by
Prop.~\ref{prop:hessian} only in the centered-Gaussian linear case, so for the
frame constructions the retained subspace is a modelling choice. And the field
metric must be specified --- orthonormality of coordinates is imposed through a
field Gram matrix under a chosen evaluation measure, not assumed for arbitrary
features.

\section{Measurement noise}\label{app:noise}

Suppose $Y=X+\varepsilon$ with $\varepsilon$ independent of $X$, mean zero and
covariance $\Sigma_\varepsilon$. Independence, or at least zero
cross-covariance, is required: mean-zero noise alone does not give
$\E[YY^\top]=\E[XX^\top]+\Sigma_\varepsilon$.

It is tempting to argue that the antisymmetric readout is immune, because
shifting a covariance by $cI$ leaves $G\Sigma-\Sigma G$ unchanged. That
argument is incomplete. For $H=S+G$ the same shift moves the drift moment by
$2cS$, so an uncorrected symmetric block can bias a \emph{jointly} profiled
estimate of $G$. The effect is real at the population level: with
\[
    H=\begin{pmatrix}-1&-0.6\\0.6&-2\end{pmatrix},\quad
    \Sigma_0=\begin{pmatrix}2&0.3\\0.3&0.5\end{pmatrix},\quad D=0.4I,
\]
regressing exact diffusion-corrected covariance derivatives on
$H\Sigma+\Sigma H^\top$ at $t=0.1,0.3,0.7,1$ recovers skew coefficient
$0.600000$ with clean covariances and $0.441094$ after adding $0.2I$ to the
design covariances without deconvolution --- a $26\%$ bias with no sampling
noise at all. We therefore do not claim universal immunity (Fig.~\ref{fig:noise}). Moment
deconvolution of the design, using a known or separately estimated
$\Sigma_\varepsilon$, restores the clean value; without it, the symmetric
nuisance reaches the solenoidal readout.

A transparent second-moment variant of the estimator isolates the design
perturbation (clean-data curl $0.849$, against $0.844$ and $0.636$ for the full
pipeline at diagonal and \textsf{GLS} weighting). Under isotropic contamination
the curl readout degrades from $0.8494$ clean to $1.145$ naive, and
deconvolution does \emph{not} repair it ($1.1501$); under a curl-aligned
contamination the same numbers are $0.8494$, $0.9345$ and $1.0274$. What
deconvolution does repair is the symmetric channel, where the distance to the
truth falls from $0.2823$ to $0.1134$. This is the honest version of the
earlier claim: the fix is exact at second moments on the block where the bias
lands, and the solenoidal channel is not protected.

\begin{figure}[htb]
\centering
\includegraphics[width=0.55\linewidth]{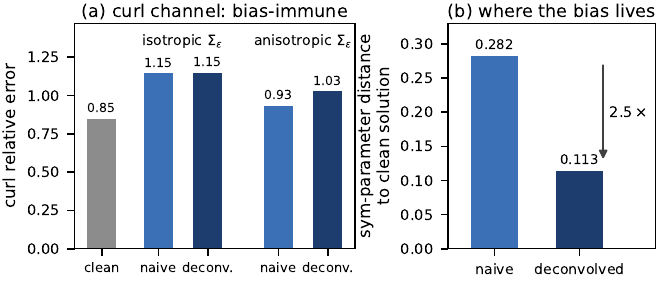}
\caption{\textbf{Measurement noise.} The symmetric channel absorbs the constant
design shift and its residual bias is removed by design deconvolution; the
solenoidal channel is \emph{not} exactly immune, contrary to an earlier version
of this claim (App.~\ref{app:log}). Isotropic contamination is the adverse case:
deconvolution of the design covariances repairs the symmetric block without
restoring the curl readout.}
\label{fig:noise}
\end{figure}

\section{Target-aware stabilization}\label{app:selection}

\begin{proposition}[Selection misalignment]
\label{prop:misalign}
In the whitened linear experiment with ridge penalty, and with risks averaged
over the signs of the coordinates of the true coefficient vector in the
eigenbasis of the profiled Hessian, the regularization strengths minimizing
moment-prediction risk and the risk of a linear functional of the field differ
in general; they coincide when the functional's squared weights are
proportional to the curvatures or the coordinate signal is constant across the
spectrum. For the solenoidal readout the gap is governed by the near-gauge
spectrum.
\end{proposition}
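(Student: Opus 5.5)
I would diagonalize everything in the eigenbasis $\{v_i\}$ of the profiled Hessian, so that the whitened linear experiment becomes a Gaussian sequence model $z_i=\lambda_i a_i+\lambda_i^{1/2}\xi_i$. Equivalently, $y_i=\lambda_i^{1/2}a_i+\xi_i$ after whitening, with $\xi_i$ independent standard normals. In this basis the ridge solution with penalty $\zeta$ is diagonal:
\[
\widehat a_i=\frac{\lambda_i a_i+\lambda_i^{1/2}\xi_i}{\lambda_i+\zeta}.
\]
The bias along $v_i$ is $-\zeta a_i/(\lambda_i+\zeta)$ and the variance is $\lambda_i/(\lambda_i+\zeta)^2$. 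I would then write the two risks explicitly. Moment-prediction risk weights coordinate $i$ by $\lambda_i$, since the prediction error is $\lambda_i^{1/2}(\widehat a_i-a_i)$. The risk of a linear functional $\ell^\top a$, with $c_i:=v_i^\top\ell$, is $\E(\sum_ic_i(\widehat a_i-a_i))^2$.

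The cross terms are where sign-averaging enters. Averaging over the signs of $a_i$ kills the bias cross-products $c_ic_ja_ia_j$, $i\ne j$, and the noise is independent across coordinates. Both risks therefore become weighted sums
\[
R_w(\zeta)=\sum_i w_i\,\frac{\zeta^2a_i^2+\lambda_i}{(\lambda_i+\zeta)^2},
\]
with $w_i=\lambda_i$ for prediction and $w_i=c_i^2$ for the functional. Differentiating gives
\[
R_w'(\zeta)=2\sum_i w_i\lambda_i\frac{\zeta a_i^2-1}{(\lambda_i+\zeta)^3}.
\]
The two minimizers are roots of this expression for the two weight vectors.

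For coincidence, if $c_i^2\propto\lambda_i$ the two objectives are proportional and share minimizers. If $a_i^2\equiv a^2$ is constant, every summand changes sign at $\zeta=1/a^2$, so $\zeta^\star=1/a^2$ for any positive weights. For the claim that the minimizers differ in general, I would give a two-coordinate example with $a_1^2\ne a_2^2$ and weights not proportional to $\lambda$. In that case $R_w'(\zeta)=0$ is a nondegenerate equation whose root moves continuously and strictly with the weight ratio $w_1/w_2$: the root lies between $1/a_1^2$ and $1/a_2^2$ and is monotone in the ratio. It is therefore generically distinct for the two weightings.

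For the solenoidal readout, I would note that near-gauge directions have small $\lambda_i$ but $O(1)$ functional weight $c_i^2$. The ratio $c_i^2/\lambda_i$ is then largest exactly there, and the disparity between the root equations is controlled by the factors $\lambda_i/(\lambda_i+\zeta)^3$ on that part of the spectrum. The main obstacle is making ``differ in general'' precise: the clean statement is that the set of $(\lambda,c,a)$ where the minimizers coincide is a proper algebraic subset, shown by the explicit two-coordinate witness. I would also check that an interior minimizer exists, which holds because $R_w'(0)<0$ and $R_w'(\zeta)>0$ for large $\zeta$ whenever some $a_i\ne0$.
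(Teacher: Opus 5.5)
Your proposal is correct and is essentially the paper's proof: the same per-coordinate bias $-\zeta a_i/(\lambda_i+\zeta)$ and variance $\lambda_i/(\lambda_i+\zeta)^2$ in the eigenbasis, sign-averaging to remove the bias cross terms, the common weighted risk with $w_i=\lambda_i$ versus $w_i=c_i^2$, the same stationarity condition, the same two coincidence cases, and the same near-gauge reading for the solenoidal functional. Your two-coordinate witness sharpens the paper's bare assertion of ``differ in general,'' but the claim that the root is unique and monotone in $w_1/w_2$ is false in general: for some $(\lambda,a)$ there are three critical points. What you need is only that a critical point $\zeta$ strictly between $1/a_1^2$ and $1/a_2^2$ forces $w_1/w_2=-g_2(\zeta)/g_1(\zeta)$ with $g_i(\zeta):=\lambda_i(\zeta a_i^2-1)/(\lambda_i+\zeta)^3$, so distinct ratios have disjoint critical sets; also, an interior minimizer needs $w_ia_i\neq0$ for some $i$, not merely $a_i\neq0$.
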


\begin{proof}[Proof of Prop.~\ref{prop:misalign}]
Write the whitened experiment as $y=Xa^\star+\varepsilon$ with
$\varepsilon\sim\mathcal N(0,I)$, let
$\mathcal H:=X^\top X=\sum_i\lambda_iv_iv_i^\top$, and write the ridge path as
$\widehat a_\zeta=(\mathcal H+\zeta I)^{-1}X^\top y
=(\mathcal H+\zeta I)^{-1}\mathcal Ha^\star+(\mathcal H+\zeta I)^{-1}\xi$ with
$\xi:=X^\top\varepsilon\sim\mathcal N(0,\mathcal H)$. In the coordinates
$a^\star_i:=v_i^\top a^\star$ and $\ell_i:=v_i^\top\ell$, coordinate $i$ has bias
$-\zeta a^\star_i/(\lambda_i+\zeta)$ and variance $\lambda_i/(\lambda_i+\zeta)^2$,
independently across $i$. The moment-prediction risk is
\[
    \E\|X(\widehat a_\zeta-a^\star)\|^2
    =\sum_i\frac{\zeta^2\lambda_i(a^\star_i)^2+\lambda_i^2}{(\lambda_i+\zeta)^2},
\]
which has no cross terms. The risk of $\ell^\top a$ is
$\bigl(\sum_i\ell_i\zeta a^\star_i/(\lambda_i+\zeta)\bigr)^2
+\sum_i\ell_i^2\lambda_i/(\lambda_i+\zeta)^2$; averaging over independent signs
of the $a^\star_i$, which leaves the moment risk unchanged, removes the cross
terms and gives
$\sum_i\ell_i^2\,(\zeta^2(a^\star_i)^2+\lambda_i)/(\lambda_i+\zeta)^2$. Since
$\frac{d}{d\zeta}\frac{\zeta^2c^2+\lambda}{(\lambda+\zeta)^2}
=\frac{2\lambda(\zeta c^2-1)}{(\lambda+\zeta)^3}$, the two stationarity
conditions are
\[
    \sum_iw_i\,\frac{\lambda_i\bigl(\zeta(a^\star_i)^2-1\bigr)}
      {(\lambda_i+\zeta)^3}=0,
    \qquad w_i=\lambda_i\ \text{(moment)},\quad w_i=\ell_i^2\ \text{(functional)}.
\]
If $(a^\star_i)^2\equiv c^2$ both are solved by $\zeta=1/c^2$; otherwise the two
conditions weight the same terms differently, and their roots differ unless
$\ell_i^2\propto\lambda_i$. For the solenoidal readout $\ell$ is supported on the
near-gauge coordinates where $\lambda_i$ is smallest, which the moment criterion
downweights by $\lambda_i$, so the two criteria disagree most exactly where the
estimand lives.
\end{proof}

Because whitening makes the response law exactly $\mathcal N(0,I)$, target risk
can be estimated without ground truth: perturb the whitened response under its
known law, re-run the estimator along its regularization path, and average the
resulting functional deviations. The rule needs only that the estimand be a
linear functional of the field. Its failure mode is a plain estimate that is
already near-unbiased, in which case the perturbation estimate is dominated by
variance and the selected $\zeta$ collapses to zero; on the main benchmark
split-sample selection picks $\zeta=0$ on every seed, and a fixed ridge at
$\zeta=0.03$ outperforms the selected value. We report the rule as
correctly motivated and unreliable in this regime.

\section{Single snapshots and Lagrangian information}
\label{app:caseB}

\subsection{One snapshot}

With $K=1$ the gauge is the full $\ker\A_{\rho_{t_1}}$ and
Prop.~\ref{prop:count} gives $r(x)\equiv0$: nothing about the circulation is
identified. A single snapshot constrains only the gradient part, and any
reported circulation is a property of the prior, not of the data.

\subsection{Lineage-tracked observation}

Autonomy is one way to couple slices; particle identity is another, and it is a
different observation model with different statistics. If the same particles
are observed at every time, the slices are deterministic images of one another
and the independence hypothesis of Prop.~\ref{prop:gls} fails badly.

Our nonlinear benchmark is of exactly this type, and we report it here rather
than in \S\ref{sec:exp} for that reason. The system is a deterministic ODE
($\sigma=0$) in $d=6$ with a planted degree-three solenoidal field,
$F(x)=-0.7x+(3cx_1x_2^2,-cx_2^3,0,\dots)$ with $c=0.2$, integrated by RK4 from a
Gaussian initial ensemble to five closely spaced times. Because $\sigma=0$ the
continuity identity is exact and the diffusion correction of \S\ref{sec:weak}
does not apply.

Measuring the true moment-residual covariance over $250$ independent
replications at $n=1500$ against the surrogates the estimator uses:

\begin{center}
\begin{tabular}{@{}lcc@{}}
\toprule
& $\tr V$ & ratio to truth\\
\midrule
true Monte-Carlo covariance & $2.11\times10^{-3}$ & ---\\
block-tridiagonal surrogate & $1.08\times10^{-1}$ & $51.3\times$\\
diagonal surrogate & $1.08\times10^{-1}$ & $51.3\times$\\
\bottomrule
\end{tabular}
\end{center}

Moreover $54.5\%$ of the true covariance mass lies in non-adjacent blocks,
which both surrogates set to zero by construction. That the two surrogates
agree to four digits is itself diagnostic: with coupled particles the
off-diagonal $-C_{k+1}$ correction is negligible against a covariance dominated
by cross-slice terms neither model contains.

The consequences for recovery are large. Under the coupled design the estimator
attains curl-channel error $0.4027$ with diagonal weights and $0.2095$ with
\textsf{GLS}, with a null-circulation control at $0.038$ and standard-error
ratios of $9.67\times$ and $4.52\times$. Re-drawing each slice independently ---
the observation model of \S\ref{sec:setup} --- gives $1.6455$ and $1.0142$
respectively, against a no-information level of $1.0$, with a null control at
$0.1835$ and standard-error ratios of $1.00\times$ and $1.09\times$.

Three readings follow. The recovery was a consequence of particle identity, not
of the snapshot moments. The apparent $9.67\times$ conservatism of the diagonal
predictor was not a pathology to be fixed: with independent slices that
predictor is calibrated, and its own power check, which prescribed $6.7$M
samples per slice, was correct. And the lineage-tracked design is a legitimate
and arguably more realistic observation model --- it is what pairing or lineage
barcoding supplies --- but it belongs to this appendix, because Lagrangian
information is precisely what \S\ref{sec:identifiability} sets out to do
without. An estimator for it would need a covariance model containing the
cross-slice blocks measured above; we do not develop one.

\section{Numerical verification of the analytic identities}\label{app:verification}

Each identity below is checked by an executable entry point listed in
App.~\ref{app:repro}; these are machine-precision checks of the stated algebra,
not validation of the rate theorem.

\emph{Reference-slice algebra.} $\dim S_q=mD_{q-1}-D_q$ from
Lem.~\ref{lem:surj} is checked against explicit kernels of
$\A_{\rho_\star}|_{W_q}$ for $m=2,\dots,6$ and $q\le8$; surjectivity is checked
by rank. The linear characterization of Lem.~\ref{lem:linear} is checked by
sampling random $\Sigma$ and verifying $\|\A_\rho(\Sigma Nx)\|=0$ to machine
precision for antisymmetric $N$, and $\ne0$ otherwise.

\emph{Gauge contraction.} The $q=2$ commutant is computed by explicit nullspace
sweep over $K$ for $m=3,\dots,6$: it has full dimension $\binom m2$ at $K=1$ and
collapses to $\{0\}$ already at $K=2$, confirming that the degree-two block is
cleared well before the uniform threshold. The shape-richness condition
of Thm.~\ref{thm:gauge} is checked separately by sampling $x$ and computing
$\rank[x\,|\,\Xi_1x\,|\,\cdots\,|\,\Xi_rx]$: it is deficient at $r=m-2$ and full at
$r=m-1$, i.e.\ exactly at $K=m$. The two gates are distinct and the paper
should not be read as claiming otherwise.

\emph{Propagated spectrum.} The first-moment law
$\bar\mu_q\asymp K/q$ is checked by assembling $\mathcal B_K$ on the reference
path and computing $\tr\mathcal J_{\mathrm{sol},q}/r_q$ across $q$; the fitted
inverse-trace powers are $3.20$, $4.08$ and $4.76$ at $m=3,4,5$ against the
predicted $m$, and the fitted lower-tail exponent lies in $[1.6,2.2]$.

\emph{Estimator assembly.} The Laplacian tables of App.~\ref{app:weak-impl} are
checked against central finite differences for both frames, with maximum
deviation $8\times10^{-8}$. The block-tridiagonal covariance of
Prop.~\ref{prop:gls} is checked against a $1500$-replication Monte-Carlo
covariance on an independent-slice design, agreeing to $4.6\%$ in relative
Frobenius norm with exactly zero non-adjacent blocks; the same check on the
lineage-tracked design fails by the factor reported in App.~\ref{app:caseB},
which is how that defect was found.

\section{How the curl is parameterized}\label{app:param}

The solenoidal component can be carried by several parameterizations of the same
field class; the antisymmetric potential of Prop.~\ref{prop:Sq-structure} is the
$\rho$-weighted analogue of the divergence-free parameterization of \citet{ncl2022}. They differ enormously in conditioning, and --- because first-order
optimization is sensitive to conditioning while an exact solve is not --- in how
close a descent method gets to the exact minimizer. Five variants on the linear
benchmark, five seeds each (Tab.~\ref{tab:param}), against Gauss--Newton on the same
objective:

\begin{table}[htb]
\centering
\caption{\textbf{Curl parameterizations.} Solenoidal relative error after
$8000$ descent steps; ``reached'' counts seeds whose loss came within
$2.7\%$ of the Gauss--Newton value. Conditioning is of the whitened design in
that parameterization.}
\label{tab:param}
\begin{tabular}{@{}lrrrcc@{}}
\toprule
parameterization & params & rank & cond. & error & reached\\
\midrule
antisymmetric potential & $5226$ & $144$ & $3.1\times10^{4}$ & $0.542\pm0.105$ & $4/5$\\
generator frame & $144$ & $144$ & $2.7\times10^{4}$ & $0.521\pm0.092$ & $5/5$\\
\quad + gauge reduction & $116$ & $116$ & $1.0\times10^{3}$ & $0.521\pm0.092$ & $5/5$\\
\quad + audit preconditioning & $116$ & $116$ & $1.0$ & $0.963\pm0.385$ & $2/5$\\
\quad + tangent-only support & $84$ & $84$ & $9.2\times10^{2}$ & $\mathbf{0.520\pm0.081}$ & $5/5$\\
\midrule
Gauss--Newton (exact solve) & --- & --- & --- & $0.479\pm0.194$ & ---\\
\bottomrule
\end{tabular}
\end{table}

Three readings. Over-parameterization is not free: the antisymmetric potential
carries $5226$ parameters for a design of rank $144$, and is the only variant
that fails to reach the exact minimizer on every seed. Gauge reduction buys a
factor of $26$ in conditioning at no cost in error, which is the quantitative
case for excluding gauge columns rather than penalizing them. And perfect
conditioning is not the objective: audit preconditioning drives the condition
number to $1.000$ and makes the result \emph{worse}, because it whitens the
parameterization rather than the residual, destroying the correspondence between
descent geometry and the estimand. The exact solve is insensitive to all of
this, which is the argument for using one.

Under a $10\%$ score error the ordering changes and the margins compress: the
antisymmetric potential gives $2.470$, the generator frame $2.468$,
Gauss--Newton $2.397$, and tangent-only support $1.019$. Restricting the support
to the audit-identified tangent block is the only variant that retains a
meaningful advantage once the nuisance is corrupted, which is consistent with
Prop.~\ref{prop:symmetry}: the normal-block coordinates it removes were never
identified, so they contribute variance and nuisance sensitivity without
signal.

\section{Conditioning of the linear design}\label{app:conditioning}

The Schur complement of Prop.~\ref{prop:hessian} is computable before any field
is fitted, and its spectrum is the design's own account of which solenoidal
directions are visible. On the $d=12$, $m=4$ benchmark the six tangent
antisymmetric directions carry eigenvalues
$\{5.1,7.8,14.0,23.7,40.8,71.4\}\times10^{-3}$, a spread of $14\times$ between
the best- and worst-determined direction within a single design. Recovery error
is not uniform across the curl; it is concentrated in the small end of this
spectrum, which is what makes a scalar readout of ``curl error'' a summary
rather than a diagnosis.

Adding snapshots helps, but with sharply diminishing returns, and the gain is in
the weakest direction rather than the strongest:

\begin{center}
\begin{tabular}{@{}lccc@{}}
\toprule
& $K=3$ & $K=5$ & $K=9$\\
\midrule
smallest Schur eigenvalue & $1.79\times10^{-3}$ & $3.77\times10^{-3}$ & $3.85\times10^{-3}$\\
largest Schur eigenvalue & $2.38\times10^{-2}$ & $5.47\times10^{-2}$ & $5.65\times10^{-2}$\\
pinned-direction eigenvalue & $7.56\times10^{-2}$ & $7.05\times10^{-2}$ & $6.82\times10^{-2}$\\
\bottomrule
\end{tabular}
\end{center}

Going from three to five snapshots roughly doubles the weakest eigenvalue;
going from five to nine adds $2\%$. This is Thm.~\ref{thm:gauge} seen
numerically: once $K\ge m$ the polynomial gauge is gone and further snapshots
buy only the shape diversity they happen to add, which for a fixed reference
trajectory saturates quickly. The pinned direction, fixed by a constraint rather
than by the data, is flat as expected and serves as a control.

A composite check, run under the velocity convention of App.~\ref{app:log},
closes the loop between the audit and the fit. Predicting the
solenoidal error from the Schur spectrum and the quadrature residual alone gives
$0.6326$ against a realized $0.6323$ at $n=40$k, a consistency of $0.001$; the
cosine between the estimated and planted curl rises from $0.825$ at $n=40$k to
$0.950$ at $n=200$k; and the gauge monitor returns exactly $0$, confirming the
excluded directions carry no estimate. A gradient-head control on the same data,
handed the true generator, returns $0.2196$ --- the residual is genuinely in the
solenoidal channel and not an artifact of the gradient fit.

\section{Additional robustness and negative results}\label{app:robustness}

\emph{Frames.} Monomial and Hermite frames recover comparably at $n=40$k
(full-field $0.062$ against $0.067$, curl channel $0.210$ against $0.226$); the
Hermite frame mainly improves conditioning, with reciprocal condition number
$0.03$ against $0.01$ at $n=100$k.

\emph{Degree ladder.} At $m=6$ with a planted degree-three field, fitting
degrees one through five gives $0.896$, $0.935$, $0.535$, $2.513$ and an
infeasible solve. Under-specification is graceful; over-specification is not,
and the failure is conditioning rather than variance.

\emph{Representation scaling.} $P$ grows as $504$, $5{,}460$, $81{,}900$,
$1{,}171{,}300$ at $d=6,12,25,50$ for the ambient degree-three construction.
The binding cost is the $O(P^3)$ whitening Cholesky, not enumeration. A neural
field at $d=25$ reaches $0.989$, essentially the no-information level.

\emph{Hard latent regime.} With $d_{\mathrm{latent}}=8$, $n=3000$ and four
snapshots, latent-space cosine similarity is $0.83$ and $0.86$ on two seeds,
against $0.51$ and $0.48$ for the earlier strong-form pipeline on the same
data.

\emph{Optimization.} Holding whitened moments and representation fixed,
first-order descent on the same objective continues to reduce moment loss after
the solenoidal error has turned over; over a long trace the final-minus-best
gap is $+0.040$ with standard error $0.015$ ($t=+2.68$, six seeds). We report
this as near-null overfitting only. An earlier draft attributed it to the
fitted field leaking into the exact design gauge; that attribution was wrong.
For Euclidean descent on $\|Aa-z\|^2$ every update lies in
$\operatorname{range}(A^\top)$, so $P_{\ker A}a_{r+1}=P_{\ker A}a_r$ and an
exact design-null component cannot move. The monitor that appeared to grow was
measuring a population gauge against a fitted design whose gauge columns had
been excluded in the other arm, so the two procedures did not share a feasible
model. The corrected comparison --- projected descent against the direct solve
on identical columns, initialization, whitening and penalty --- shows the exact
solve avoiding the failure by never building those columns, and shows no motion
into the design kernel.

\section{Hidden-state stress tests}\label{app:stress}

Asm.~\ref{asm:autonomy} is what converts an untestable model into a testable
one, so it deserves adversarial testing. Both diagnostics below are exact-moment
population calculations at the benchmark parameters $\nu=\sigma^2=1$: there
is no sampling and no quadrature error, so any residual is misspecification and
nothing else. They are population contrasts, not calibrated hypothesis tests.

\subsection{Misspecification that is detectable}

Take the autonomous two-dimensional system $H=-I+1.5J$ with
$J=\bigl(\begin{smallmatrix}0&-1\\1&0\end{smallmatrix}\bigr)$, isotropic
diffusion $\sigma^2I$, and $\Sigma_0=\operatorname{diag}(2,0.7)$, and retain
only the first coordinate. Fitting the exact variance derivative on $101$ times
in $[0,2]$ to the scalar autonomous linear model $\dot v=2av+2\sigma^2$ gives
$\widehat a=-1.1083$ with relative diffusion-corrected residual $0.1204$, while
fitting the full two-dimensional covariance derivatives gives residual
$3.8\times10^{-16}$.

The projection is therefore \emph{visibly} inadequate: the shape of the
observed variance path is incompatible with any autonomous scalar linear drift,
and the diagnostic says so without reference to the truth. This is the
favourable case, and it is the one a practitioner hopes for. Note its scope: it
rejects the scalar \emph{linear} model, not autonomy against every nonlinear
scalar drift.

\subsection{Misspecification that is not}

Now start the same system at its stationary covariance. Every snapshot is then
exactly compatible with the zero-rotation drift $-x$: the endpoint-moment
residual against that drift is $0$ to machine precision, at every time. An
analyst fitting autonomous models to these marginals would accept a detailed-balance
explanation of a system that is in fact circulating at $\omega=1.5$.

The marginals are not merely uninformative, they are actively misleading, and
nothing in the snapshot experiment reveals it. The projected process is not
Markov: its lag autocovariance oscillates, changing sign three times over
$[0,6]$, whereas a scalar Ornstein--Uhlenbeck autocovariance is a decaying
exponential with no sign change at all. Paired full-state measurements would
separate the two models immediately --- their lag-$0.3$ cross-covariances differ
by $0.4675$ in Frobenius norm --- but such measurements are exactly the
Lagrangian information \S\ref{sec:identifiability} sets out to do without.

This is the sharp form of the caution in \S\ref{sec:autonomy}, and it connects
to two results already in the paper. It is the extreme case of
Obs.~\ref{obs:tension}: stationary shapes give $r(x)\equiv0$ in
Prop.~\ref{prop:count}, so Thm.~\ref{thm:gauge} yields no contraction
whatsoever and the entire gauge survives, which is also why
App.~\ref{app:temporal-ou} is a warning rather than a target. Enforcing
autonomy on an inadequate representation does not here manufacture spurious
circulation; it does something quieter and worse, returning the confident
answer zero.

\begin{figure}[htb]
\centering
\includegraphics[width=0.92\linewidth]{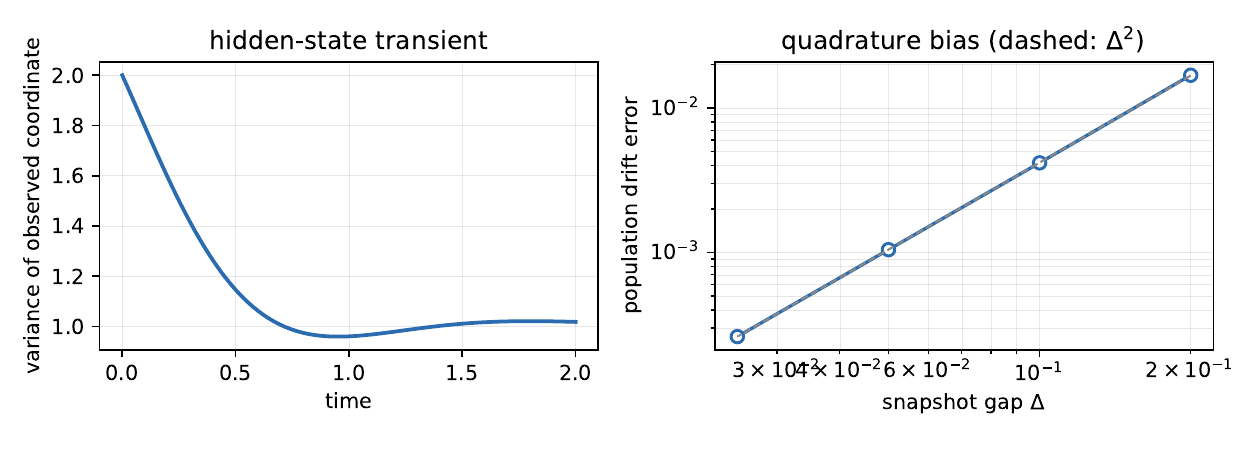}
\caption{\textbf{Controlled diagnostics.} Left: transient variance of the
projected coordinate in the hidden-state example, whose shape no autonomous
scalar linear drift reproduces. Right: population error of the
diffusion-corrected weak fit as the time grid is refined, against a
$\Delta^2$ reference; sampling alone cannot remove the quadrature bias at fixed
gap (Thm.~\ref{thm:weak-bound}).}
\label{fig:validation}
\end{figure}

Fig.~\ref{fig:validation} shows the two mechanisms. A sampling-calibrated
misspecification test and large-scale nonlinear stress studies remain future
work; neither is represented by these population checks.

We therefore do not claim a calibrated test for representation adequacy. What
the paper offers is narrower: shape movement is necessary for any recovery at
all, is computable in advance from second moments by
Prop.~\ref{prop:hessian}, and its absence is detectable even when the
inadequacy of the representation is not.

\section{Scope: autonomy, diffusion, and design size}\label{app:scope}

\subsection{Autonomy is a property of the representation}

Asm.~\ref{asm:autonomy} is not the biological claim that a regulatory program
is fixed. Suppose the observed coordinates obey
$dX_t=G(X_t,Z_t)\,dt+\sqrt{2\sigma^2}\,dW_t$, with $Z_t$ latent regulatory state
--- chromatin, protein levels, signalling context --- evolving in any way. If
$Z_t$ is part of the modelled state, the enlarged system is autonomous however
much regulation changes along $Z$. If it is omitted, It\^o's formula gives
$\frac{d}{dt}\E[\psi(X_t)]=\E[F_t(X_t)\cdot\nabla\psi(X_t)+\sigma^2\Delta\psi(X_t)]$
with $F_t(x):=\E[G(x,Z_t)\mid X_t=x]$, so the marginals of $X$ solve the
Fokker--Planck equation with drift $F_t$. This drift is autonomous when the
conditional law of $Z_t$ given $X_t=x$ does not change with $t$ --- more
precisely, when the conditional mean of $G(x,Z_t)$ does not --- and genuinely
time-dependent otherwise, the projected process then being non-Markov.
Asm.~\ref{asm:autonomy} is therefore a state-sufficiency assumption on the
chosen representation. Its failure is sometimes detectable
(App.~\ref{app:stress}, first case) and sometimes not (second case), so it can
be falsified but not certified from snapshots. Restoring it by enlarging the
state raises the intrinsic dimension $m$, the quantity that sets the threshold
of Thm.~\ref{thm:gauge} and the rate of Thm.~\ref{thm:oracle-rate}:
representation sufficiency and statistical efficiency pull in opposite
directions. Any model that predicts responses from the current state,
perturbation prediction included, presupposes the same sufficiency.

\subsection{Diffusion: known, state-dependent, unknown}

With $dZ_t=F(Z_t)\,dt+\sqrt2\,\Lambda(Z_t)\,dW_t$ and known
$D:=\Lambda\Lambda^\top$, the Fokker--Planck equation is
$\partial_t\rho=-\nabla\!\cdot(\rho F)+\sum_{ij}\partial_i\partial_j(D_{ij}\rho)$,
and dividing by $\rho_t$ gives $\A_{\rho_t}F=b^D_t$ with
$b^D_t:=-\partial_t\log\rho_t+\rho_t^{-1}\sum_{ij}\partial_i\partial_j
(D_{ij}\rho_t)$. The operator acting on the drift is unchanged; only the known
source changes. The results of \S\ref{sec:identifiability} that use the
diffusion only through the source therefore hold verbatim:
Prop.~\ref{prop:equiv}, Prop.~\ref{prop:count}, Cor.~\ref{cor:blind} applied to
drifts, and Prop.~\ref{prop:compensation}, whose Poisson problem becomes
$\nabla\!\cdot(\rho_t\nabla\Phi_t)=\partial_t\rho_t
-\sum_{ij}\partial_i\partial_j(D_{ij}\rho_t)$, again with zero integral. The
relation $F_t=v_t+\sigma^2s_t$ of \S\ref{sec:blind} becomes
$F_t=v_t+\rho_t^{-1}\nabla\!\cdot(\rho_tD)$, which is not a gradient for
anisotropic or state-dependent $D$, so drift-level saturation should then be
read from Prop.~\ref{prop:compensation} with $h_t=0$. Thm.~\ref{thm:gauge}
concerns the source constraints of Gaussian slices and involves no diffusion.
The weak moments extend as well: It\^o's formula gives
$\frac{d}{dt}\E_{\rho_t}[\psi]=\E_{\rho_t}[F\cdot\nabla\psi+\tr(D\nabla^2\psi)]$,
so \eqref{eq:weak} and Prop.~\ref{prop:gls} hold with $\sigma^2\Delta\psi_j$
replaced by the empirical moment of $\tr(D\nabla^2\psi_j)$. What does not
extend is \S\ref{sec:how-fast}: the Hermite structure of the tangent generator
(Lem.~\ref{lem:closure}) uses a Gaussian reference with constant isotropic
diffusion, and no rate is claimed beyond it.

Unknown diffusion is a different problem. The drift is then identified only
jointly with $D$, which already for linear SDEs requires conditions on the
initial law (\citealp{guan2024}; App.~\ref{app:gauss}), and every
identifiability statement here is conditional on $D$. Misspecifying $D$ biases
the weak form: with $\sigma^2$ replaced by $0$ it fits an autonomous
approximation to the time-dependent probability-flow velocity, which on the main
benchmark raises the population floor from $0.0332$ to $0.3545$
(App.~\ref{app:log}).

\subsection{Design size when \texorpdfstring{$K<m$}{K<m}}

The threshold $K\ge m$ of Thm.~\ref{thm:gauge} is uniform over all polynomial
degrees. With fewer distinct shapes, as in many single-cell time courses, the
source constraints still identify every polynomial drift of degree at most
$K-1$ and leave an exact gauge at degree $K$ (App.~\ref{app:below}; proved for
$K=m-1$, certified for $m\le6$). Three consequences follow. First, $m$ is the
intrinsic dimension of the chosen representation, not the number of measured
features; ambient dimension does not enter (App.~\ref{app:intrinsic}). Second,
for fixed $K$ one may restrict the drift class to degree at most $K-1$; the
linear class needs only $K=2$ (App.~\ref{app:gauss}). Third, the factor $K$ in
Asm.~\ref{asm:scale} measures temporal excitation rather than the snapshot
count, and designs whose shapes barely move lose it even when $K\ge m$
(Obs.~\ref{obs:tension}).

\section{Experimental protocol and negative-result log}\label{app:log}

\subsection{Protocol}

Gates were fixed before each run and are recorded with the receipts. All
comparisons within a table share seeds and draws; seed sets are stated wherever
arms come from different runs. Readouts are relative errors in the solenoidal
channel unless marked otherwise, normalized by the Frobenius norm of the
planted circulation; for null-circulation controls that denominator vanishes,
so those values are reported against the norm of the symmetric part instead and
are not comparable to the main readout. Standard errors are plug-in and
noise-only in the whitened metric. The full suite runs on a single CPU core.

Multi-seed entries are mean\,$\pm$\,standard deviation over seeds. Paired
differences are mean\,$\pm$\,standard error of the per-seed differences, and $t$
is their ratio, the paired $t$-statistic with one fewer degree of freedom than
seeds. Seed counts are small and the spread is heavy-tailed. A $32$-seed
replication of the weak-form \textsf{GLS} arm in the targeting study gives
$0.824\pm0.313$, a standard deviation about $1.2\times$ the $12$-seed value in
Tab.~\ref{tab:ou} and $1.8\times$ an earlier $8$-seed estimate, so we treat any paired difference below roughly two standard errors as no effect,
and we do not report equivalence without an equivalence test.

\subsection{Reading Tab.~\ref{tab:ou}}

The four snapshot-only arms share one $12$-seed set with identical draws, so
their differences are paired; the two strong-form rows come from a separate
$8$-seed set and are not paired against them. The single-seed columns are one
draw and are noisier than they look: seed $0$ is a high draw for the parametric
arm ($0.511$ against a $12$-seed mean of $0.308$), so the paired column is the
reliable comparison. Seed spread is heavy-tailed relative to these counts
(App.~\ref{app:log}, protocol), so small paired differences should not be read
as effects; for reference, the strong-form oracle gives $0.547\pm0.182$ over
thirty-two seeds against $0.535\pm0.177$ over eight. The strong-form oracle row uses a Gauss--Newton
solve; a closed-form linear least squares on exact ingredients reaches $0.351$
and Adam on the same objective $0.510\pm0.123$ (App.~\ref{app:strong}).

\subsection{Retracted and superseded claims}

\emph{The estimand was the wrong field.} The shipped weak form used the
deterministic continuity identity, omitting the $\sigma^2\Delta\psi$ term of
\eqref{eq:weak}, and therefore fitted an autonomous approximation to the
probability-flow velocity $v_t=F-\sigma^2s_t$, which is time-dependent even for
autonomous $F$. On the main benchmark this has a population floor of $0.3545$
against $0.0332$ with the term retained. The velocity-convention numbers are
retained here in full. The within-run comparisons reported in
Apps.~\ref{app:intrinsic}, \ref{app:noise}, \ref{app:conditioning} and
\ref{app:robustness} (frames, latent regime) come from receipts computed under
the same velocity convention and have not been rerun under the correction.

\begin{center}
\begin{tabular}{@{}llcc@{}}
\toprule
$n$ & weights & velocity convention & drift convention\\
\midrule
$40$k & diagonal & $0.8442$ & $0.8928$\\
$40$k & \textsf{GLS} & $0.6361$ & $0.6505$\\
$40$k & \textsf{GLS}$^2$ & $0.6514$ & $0.6534$\\
$200$k & diagonal & $0.3153$ & $0.3030$\\
$200$k & \textsf{GLS} & $0.3217$ & $0.2818$\\
$200$k & \textsf{GLS}$^2$ & $0.3067$ & $0.2556$\\
\midrule
population floor & --- & $0.3545$ & $0.0332$\\
\bottomrule
\end{tabular}
\end{center}

At $n=40$k sampling error dominates and the numbers barely move. At $n=200$k
the uncorrected estimator has converged to its own floor, so that column
measured misspecification rather than sample size, and the fitted $n^{-1/2}$
slope of $-0.61$ was measuring saturation. Under the correction the floor is
ten times lower, so the two sample sizes are no longer separated by a floor;
with one draw at each they are consistent with $n^{-1/2}$ scaling but do not
measure it (\S\ref{sec:exp-estimator}). A
side effect: under the old convention \textsf{GLS} was \emph{worse} than
diagonal weighting at $n=200$k, an artifact of a bias-dominated error in which
variance-optimal whitening buys nothing.

\emph{Whitening was claimed as an efficiency gain, twice, in both directions.}
An early single-seed reading ($0.844\to0.636$) was reported as a gain; an
eight-seed replication reduced it to $t=1.67$ and the claim was retracted. A
subsequent draft reinstated a stronger claim from statistics that compared
unweighted against \textsf{GLS} rather than diagonal against \textsf{GLS}. The
corrected twelve-seed comparison shows no effect of any weighting
(\S\ref{sec:exp-estimator}), and \S\ref{sec:exp-tension} gives the structural
reason.

\emph{A calibration pathology that did not exist.} The $9.67\times$
conservatism of the diagonal predictor, and its repair by \textsf{GLS}, were
measured on a lineage-tracked design in which the slices are not independent.
With independent slices the diagonal predictor is calibrated at $1.00\times$
and there is nothing to repair; its power check, which prescribed $6.7$M
samples per slice and was overridden, was correct. See App.~\ref{app:caseB}.

\emph{Descent into the exact gauge.} Withdrawn; see App.~\ref{app:robustness}.

\emph{A co-estimation phase transition.} Withdrawn; see
App.~\ref{app:temporal-ou}.

\emph{Superseded spectral model.} An earlier version analysed a stationary
ladder inside the Euclidean divergence-free class $\ker(\operatorname{div})\cap
W_q$. The present theory uses $S_q=\ker\A_{\rho_\star}\cap W_q$ and a transient
cross-slice operator, on which the reference Stein operator vanishes
identically. Machine-precision checks of the former are not validation of the
latter, and the associated lower bound, figures and verification claims have
been removed rather than re-labelled.

\subsection{Negative results retained}

Split-sample selection picks $\zeta=0$ on every seed while a fixed ridge at
$\zeta=0.03$ does better ($0.468\pm0.112$, $t=-5.09$). Random features and
the correctly specified degree-three frame show no detectable difference at six
seeds. A neural field at $d=25$ returns the no-information level. Targeting the
strong form does not close the gap to the score-free weak form at any score
error we could reach.

\section{Reproducibility}\label{app:repro}

The supplement contains the estimator library, the benchmark suite, every
experiment script, and machine-readable JSON receipts. The weak-form path
depends only on NumPy and SciPy; the strong-form comparison arms require
PyTorch and are imported lazily, so every number in Tab.~\ref{tab:ou} can be
regenerated without it.

\begin{center}
\begin{tabular}{@{}p{0.36\linewidth}p{0.31\linewidth}p{0.27\linewidth}@{}}
\toprule
reported quantity & script (in \nolinkurl{scripts/}) & receipt (in \nolinkurl{results/})\\
\midrule
Tab.~\ref{tab:ou} scaling columns; population floors & \nolinkurl{diffusion_rerun.py} & \nolinkurl{diffusion_rerun.json}\\
Tab.~\ref{tab:ou} paired column & \nolinkurl{paired_rerun.py}, \nolinkurl{parametric_matched.py} & \nolinkurl{table1_paired.json}\\
Obs.~\ref{obs:tension} & \nolinkurl{ou_shortgap.py} & \nolinkurl{ou_shortgap.json}\\
Fig.~\ref{fig:concept}(b); structural predictions of \S\ref{sec:exp} & \nolinkurl{ot_blindness.py} & printed to stdout\\
App.~\ref{app:caseB} & \nolinkurl{nonlinear_independent.py} & \nolinkurl{nonlinear_independent.json}\\
strong-form arms (App.~\ref{app:strong}) & \nolinkurl{repair_strongform.py}, \nolinkurl{paired_bench.py} & \nolinkurl{../repair/*.json}\\
targeting study (Fig.~\ref{fig:targeting}) & \nolinkurl{targeting/} & \nolinkurl{../scripts/targeting/}\\
App.~\ref{app:stress}; Fig.~\ref{fig:validation} & \nolinkurl{hidden_state_stress.py} & \nolinkurl{hidden_state_stress.json}\\
App.~\ref{app:verification} & \nolinkurl{verify_identities.py} & \nolinkurl{verification.json}\\
degree cap (App.~\ref{app:below}) & \nolinkurl{check_gauge_degree.py} & \nolinkurl{gauge_degree.json}\\
\bottomrule
\end{tabular}
\end{center}

Superseded intermediate values are retained in the receipts rather than
deleted, so the velocity-convention numbers of App.~\ref{app:log} and the
lineage-tracked numbers of App.~\ref{app:caseB} can be reproduced alongside the
corrected ones. The repository README maps each table and each figure regenerated from the current
receipts to its generating command.
\end{document}